\newcommand{\vecy}{\vec y}
\newcommand{\smucrl}{\textsc{\small{SM-UCRL}}\xspace}
\newcommand{\ucrl}{\textsc{\small{UCRL}}\xspace}
\newcommand{\Dr}{D_{\text{ratio}}}
\newcommand{\w}{W}
\newcommand{\VV}{B}
\newcommand{\vv}{b}
\newcommand{\HH}{S}
\newcommand{\hhh}{s}
\newcommand{\deltaO}{d_O}
\renewcommand{\Re}{\mathbb{R}}
\newcommand{\calP}{\mathcal P}
\newcommand{\A}{\mathcal A}
\newcommand{\B}{\mathcal B}
\newcommand{\X}{\mathcal X}
\newcommand{\T}{\mathcal T}
\newcommand{\TN}{N}
\newcommand{\Y}{\mathcal Y}
\newcommand{\M}{\mathcal M}
\newcommand{\E}{\mathbb E}
\newcommand{\Prob}{\mathbb P}
\newcommand{\I}{\mathbbm 1}
\newcommand{\R}{\mathcal{R}}
\newcommand{\wt}{\widetilde}
\newcommand{\wh}{\widehat}
\newcommand{\wb}{\overline}
\newtheorem{assumption}{Assumption}
\newtheorem*{theorem*}{Theorem}
\title[Reinforcement Learning of POMDPs using Spectral Methods]{Reinforcement Learning of POMDPs using Spectral Methods}
\thanks{A. Lazaric is supported in part by a grant from CPER Nord-Pas de Calais/FEDER DATA Advanced data science and technologies 2015-2020, CRIStAL (Centre de Recherche en Informatique et Automatique de Lille), and the French National Research Agency (ANR) under project ExTra-Learn n.ANR-14-CE24-0010-01.}\Email{alessandro.lazaric@inria.fr}\\
\thanks{A. Anandkumar is supported in part by Microsoft Faculty Fellowship, NSF Career award CCF-1254106,  ONR Award N00014-14-1-0665, ARO YIP Award W911NF-13-1-0084 and AFOSR YIP  FA9550-15-1-0221} \Email{a.anandkumar@uci.edu}\\
\def\bfy{{\mathbf y}}
\def\calY{{\cal Y}}
\def\calX{{\cal X}}
\titlespacing{\paragraph}{0pt}{0.5em}{0.5em}[]
\begin{document}


%
\maketitle


\begin{abstract}
We propose a new reinforcement learning algorithm for partially observable Markov decision processes (POMDP) based on spectral decomposition methods. While spectral methods have been previously employed for consistent learning of (passive) latent variable models such as hidden Markov models, POMDPs are more challenging  since the learner interacts with the environment and possibly changes the future observations in the process. We devise a learning algorithm running through episodes, in each episode we employ spectral techniques to learn the POMDP parameters from a trajectory generated by a fixed policy. At the end of the episode, an optimization oracle returns the optimal memoryless planning policy which maximizes the expected reward based on the estimated POMDP model. We prove an order-optimal regret bound with respect to the optimal memoryless policy and efficient scaling with respect to the dimensionality of observation and action spaces. 
\end{abstract}


\begin{keywords}
Spectral Methods, Method of Moments, Partially Observable Markov Decision Process, Latent Variable Model, Upper Confidence Reinforcement Learning.
\end{keywords}




\section{Introduction}\label{s:intro}

Reinforcement Learning (RL) is an effective approach to solve the problem of sequential decision--making under uncertainty. RL agents learn how to maximize  long-term reward using the experience obtained by direct interaction with a stochastic environment~\citep{bertsekas1996neuro-dynamic,sutton1998introduction}. Since the environment is initially unknown, the agent has to balance between \textit{exploring} the environment to estimate its structure, and \textit{exploiting} the estimates to compute a policy that maximizes the long-term reward. As a result, designing a RL algorithm requires three different elements: \textbf{1)} an estimator for the environment's structure, \textbf{2)} a planning algorithm to compute the optimal policy of the estimated environment~\citep{lavalle2006planning}, and \textbf{3)} a strategy to make a trade off between exploration and exploitation to minimize the \textit{regret}, i.e., the difference between the performance of the exact optimal policy and the rewards accumulated by the agent over time.

Most of RL literature assumes that the environment can be modeled as a Markov decision process (MDP), with a Markovian state evolution that  is fully observed. A number of exploration--exploitation strategies have been shown to have strong performance guarantees for MDPs, either in terms of regret or sample complexity (see Sect.~\ref{ss:related} for a review). However, the assumption of full observability of the state evolution is often violated in practice, and the agent may only have noisy observations of the true state of the environment (e.g., noisy sensors in robotics). In this case, it is more appropriate to use the partially-observable MDP or POMDP~\citep{sondik1971the-optimal} model.

Many challenges arise in designing RL algorithms for POMDPs. Unlike in MDPs, the estimation problem (element 1) involves identifying the parameters of a latent variable model (LVM). In an MDP  the agent directly observes (stochastic) state transitions, and the estimation of the generative model is straightforward via empirical  estimators. On the other hand, in  a POMDP the transition and reward models must be inferred from noisy observations and the Markovian state evolution is hidden. The planning problem (element 2), i.e., computing the optimal policy for a POMDP with known parameters, is PSPACE-complete~\citep{papadimitriou1987the-complexity}, and it requires solving an augmented MDP built on a continuous belief space (i.e., a distribution over the hidden state of the POMDP). Finally, integrating estimation and planning in an exploration--exploitation strategy (element 3) with guarantees is non-trivial and no no-regret strategies are currently known (see Sect.~\ref{ss:related}).


\subsection{Summary of Results}\label{ss:summary}

The main contributions of this paper are as follows: (i) We propose a new RL algorithm for POMDPs that incorporates spectral parameter estimation within a exploration-exploitation framework,   (ii) we analyze regret bounds assuming access to an optimization oracle that provides the best memoryless planning policy at the end of each learning episode, (iii) we prove order optimal regret and efficient scaling with dimensions, thereby providing the first guaranteed RL algorithm for a wide class of POMDPs.

The estimation of the POMDP is carried out via spectral methods which involve decomposition of certain moment tensors computed from data. This learning algorithm is interleaved  with  the optimization of the planning policy  using an exploration--exploitation strategy inspired by the \ucrl method for MDPs~\citep{ortner2007logarithmic,jaksch2010near-optimal}. The resulting algorithm, called \smucrl (\textit{Spectral Method for Upper-Confidence Reinforcement Learning}), runs through episodes of variable length, where the agent follows a fixed policy until enough data are collected and then it updates the current policy according to the estimates of the POMDP parameters and their accuracy. Throughout the paper we focus on the estimation and exploration--exploitation aspects of the algorithm, while we assume access to a \textit{planning oracle} for the class of memoryless policies (i.e., policies directly mapping observations to a distribution over actions).\footnote{This assumption is common in many works in bandit and RL literature~(see e.g.,~\citet{abbasi2011regret} for linear bandit and \citet{chen2013combinatorial} in combinatorial bandit), where the focus is on the exploration--exploitation strategy rather than the optimization problem.}

\textbf{Theoretical Results.}
We prove the following learning result. For the full details see Thm.~\ref{thm:estimates} in  Sect.~\ref{s:learning.pomdp}.

\begin{theorem*}\textbf{(Informal Result on Learning POMDP Parameters)}\label{thm:estimates-informal}
Let $M$ be a POMDP with $X$ states, $Y$ observations, $A$ actions, $R$ rewards, and $Y > X$, and characterized by densities $f_T(x'|x,a)$, $f_O(y|x)$, and $f_R(r|x,a)$ defining state transition, observation, and the reward models. Given a sequence of observations, actions, and rewards generated by executing a memoryless policy where each action $a$ is chosen $N(a)$ times, there exists a spectral method which returns estimates $\wh{f}_T$, $\wh{f}_O$, and $\wh{f}_R$ that, under suitable assumptions on the POMDP, the policy, and the number of samples, satisfy
\begin{align*}
\|\wh{f}_O(\cdot|x) \!-\! f_O(\cdot|x)\|_1 &\leq \wt{O}\bigg(\sqrt{\frac{YR}{N(a)}} \bigg),\\
\|\wh{f}_R(\cdot|x,a) - f_R(\cdot|x,a)\|_1 &\leq \wt{O}\bigg(\sqrt{\frac{YR}{N(a)}}\bigg),\\
\| \wh{f}_T(\cdot|x,a) \!-\! f_T(\cdot|x,a) \|_2 &\leq \wt{O}\bigg( \sqrt{\frac{YRX^2}{N(a)}} \bigg),
\end{align*}
with high probability, for any state $x$ and any action $a$.
\end{theorem*}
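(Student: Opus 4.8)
The plan is to reduce the estimation problem to the spectral decomposition of a multi-view latent variable model, and then to chain three ingredients: an algebraic identification argument, concentration of empirical moments along the trajectory, and a perturbation analysis of the tensor decomposition.

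First I would exploit the structure induced by the memoryless policy. Fix an action $a$ and consider the sub-sequence of time steps at which $a$ is played. Centering a window of consecutive (observation, reward) records on a hidden state and using the Markov property, these records become conditionally independent given that state. This is precisely the multi-view model: each view is a vector encoding an (observation, reward) pair in $\Re^{YR}$ (observation and reward encoded as indicator features), and the conditional mean of a view given the hidden state $x$ is a column built from $f_O(\cdot|x)$ and $f_R(\cdot|x,a)$. I would form the empirical second-order moment matrix and third-order moment tensor of these views from the $N(a)$ samples, whiten with the second moment, and run the robust tensor power method to recover, up to a permutation, the factor matrices, i.e. estimates of the per-state observation and reward distributions. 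A light additional step aligns the permutations recovered for different actions so that state labels are consistent.

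Second, I would establish concentration of the empirical moments. Because the samples lie along a single trajectory rather than being i.i.d., the moment increments are only conditionally centered, so I would control the estimation error with an Azuma/Bernstein-type martingale inequality ($\Azuma$), giving deviations of order $\wt{O}(1/\sqrt{N(a)})$ with the ambient dimension $YR$ entering through the dimension of the views. Third, I would feed these moment errors into the standard perturbation bounds for whitening and tensor power iteration, which are condition-number dependent through the smallest singular value of the factor matrices; this is where the assumptions on the POMDP and the policy are used. This converts moment error into $\ell_2$ error on the recovered columns, and a normalization/Cauchy--Schwarz step upgrades these to the stated $\ell_1$ bounds $\wt{O}(\sqrt{YR/N(a)})$ for $\wh{f}_O$ and $\wh{f}_R$.

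The main obstacle, and the source of the extra $X$ factor for the transition model, is that $f_T$ is not directly a factor of the multi-view tensor. Having recovered the observation matrix $O$, I would identify $f_T$ by relating the joint law of two consecutive views to products of the form $O\,\mathrm{diag}(\cdot)\,T\,O^\top$ and solving the resulting linear system, which requires (pseudo-)inverting the estimated $O$. This inversion amplifies the error by a factor governed by $\sigma_{\min}(O)^{-1}$ and the dimension, producing the $X$-fold blow-up and the weaker $\ell_2$ guarantee $\wt{O}(\sqrt{YRX^2/N(a)})$; propagating the perturbation through this inversion while keeping the explicit dependence on $X$, $Y$, and $R$ is the most delicate part of the argument.
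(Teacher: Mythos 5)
Your proposal is correct and follows the paper's own pipeline almost step for step: per-action views made conditionally independent by conditioning on $(x_t,a_t=l)$, empirical second and third moments, whitening plus the robust tensor power method, martingale-type concentration for the non-i.i.d.\ trajectory, cross-action permutation alignment, and a pseudo-inversion of the estimated observation matrix as the source of the extra $X$ factor for $f_T$. Two sub-steps diverge, both benign but worth comparing. First, for the transition model the paper does not return to cross-moments of consecutive observations of the form $O\,\mathrm{diag}(\cdot)\,T\,O^{\top}$: since the third view is $\vec{y}_{t+1}$ alone, its factor matrix already satisfies $[V_3^{(l)}]_{:,i} = O\, f_T(\cdot|i,l)$, so the paper solves the single system $[T]_{i,:,l} = O^\dagger [V_3^{(l)}]_{:,i}$ (Lemma~\ref{lem:pomdp.parameters}, Eq.~\ref{eq:transition.recovery.inv}); your route would need two pseudo-inversions plus an inverse of $\mathrm{diag}(\omega_\pi^{(l)})$, costing extra factors of $\sigma_{\min}(O)^{-1}$ and $\omega_{\min}^{-1}$. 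Moreover, the explicit $X$ in the bound arises not from the inversion itself (whose conditioning is absorbed into the hidden constant $\lambda^{(l)}$) but from aggregating $X$ per-column errors into Frobenius norms, $\|\wh{O}-O\|_F \leq \sqrt{X}\,\B_O$ and $\|\wh{V}_3^{(l)}-V_3^{(l)}\|_F \leq \sqrt{X}\,\epsilon_3(l)$, together with $\sigma_{\max}(V_3^{(l)})\leq\sqrt{X}$. Second, your description of the factor columns as built from $f_O(\cdot|x)$ and $f_R(\cdot|x,a)$ glosses the policy tilt: conditioning on $a_t=l$ reweights the observation marginal, $[V_2^{(l)}]_{(n',m'),i} \propto f_\pi(l|\vec{e}_{n'})\, f_O(\vec{e}_{n'}|i)\, f_R(\vec{e}_{m'}|i,l)$, so $f_O$ is recovered only after dividing by the known $f_\pi(l|\cdot)$ and renormalizing by $\rho(i,l)$ (this is where $\pi_{\min}^{-2}$ enters $\lambda^{(l)}$); the tilt integrates out when summing over observations, so your $f_R$ bound is untouched. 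Finally, a caveat on your concentration step: the increments of the empirical moments are not conditionally centered at the stationary moments, so a plain Azuma argument does not apply directly; the paper instead runs a Doob-martingale Lipschitz argument in the style of Kontorovich combined with matrix Azuma, which is what produces the $G(\pi)/(1-\theta(\pi))$ mixing factors and an additional $O(1/N(l))$ correction for the non-stationary initial distribution hidden in the $\wt{O}(\cdot)$ of the informal statement.
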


This result shows the consistency of the estimated POMDP parameters and it also provides explicit confidence intervals.

By employing the above learning result in a \ucrl framework, we prove the following bound on the regret $\text{Reg}_N$ w.r.t.\ the optimal memoryless policy. For full details see Thm.~\ref{thm:regret} in  Sect.~\ref{s:learning}.

\begin{theorem*}\textbf{(Informal Result on Regret Bounds)}\label{thm:regret-informal}
Let $M$ be a POMDP with $X$ states, $Y$ observations, $A$ actions, and $R$ rewards, with a diameter $D$ defined as
\begin{align*}
D:=\max_{x,x'\in\X,a,a'\in\A}\min_{\pi}\mathbb{E}\big[\tau(x',a'|x,a; \pi)\big],
\end{align*}
i.e., the largest mean passage time between any two state-action pairs in the POMDP using a memoryless policy $\pi$ mapping observations to actions. If \smucrl is run over $N$ steps using the confidence intervals of Thm.~\ref{thm:estimates}, under suitable assumptions on the POMDP, the space of policies, and the number of samples, we have
\begin{align*}
\text{Reg}_N\leq \wt{O}\Big(DX^{3/2}\sqrt{AYRN}\Big),
\end{align*}
with high probability.

\end{theorem*}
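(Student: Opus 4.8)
The plan is to follow the optimism-in-the-face-of-uncertainty template of \ucrl2~\citep{jaksch2010near-optimal}, replacing the empirical \mdp estimators with the spectral confidence intervals of Thm.~\ref{thm:estimates} and lifting the analysis to the latent state space of the \pomdp under memoryless policies. First I would fix a high-probability event on which, simultaneously for every episode $k$ and every action $a$, the three bounds of Thm.~\ref{thm:estimates} hold; a union bound over the (at most logarithmically many) episodes controls the failure probability. On this event the true \pomdp $M$ lies in the confidence region $\mathcal{M}_k$ built around the episode-$k$ estimates $\wh f_T,\wh f_O,\wh f_R$, so the optimistic gain $\wt\eta_k$ returned by the planning oracle over $\mathcal{M}_k$ dominates the optimal gain $\eta^*$. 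This yields the decomposition $\text{Reg}_N \le \sum_k \sum_{t}(\wt\eta_k - r_t)$, which I would split into a martingale part (executed rewards minus their conditional means, bounded by an Azuma inequality $\Azuma$) and a model-mismatch part comparing $\wt\eta_k$ with the true gain of the policy actually executed.

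The core of the argument is a perturbation (simulation) lemma for the average reward of a memoryless policy. Here I would write the optimality equation for the optimistic model, with bias function $\wt h_k$, and telescope it along the observed trajectory; the resulting per-step error is a bilinear form pairing the parameter perturbations $(\wt f_T - f_T,\ \wt f_O - f_O,\ \wt f_R - f_R)$ against $\wt h_k$. The span of the bias function is controlled by the \pomdp diameter $D$ exactly as in the \mdp case, since $D$ was defined through the mean passage times of memoryless policies. The transition term contributes $D\,\|\wt f_T(\cdot|x,a)-f_T(\cdot|x,a)\|_1$; converting the $\ell_2$ bound of Thm.~\ref{thm:estimates} to $\ell_1$ over the $X$ latent states costs a factor $\sqrt X$, giving $D\sqrt{X}\cdot\sqrt{YRX^2/N(a)}=D\sqrt{YRX^3/N(a)}$, while the observation and reward perturbations enter only through their $\ell_1$ bounds $\sqrt{YR/N(a)}$ and are therefore lower order.

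It then remains to sum the per-step errors over time. Grouping steps by episode and action and using that each episode terminates when some action count doubles (so the number of episodes is $O(A\log N)$), the standard \ucrl telescoping argument replaces the running counts by the final counts up to constants, leaving $\sum_a \sqrt{N(a)}$. A Cauchy--Schwarz step, $\sum_a\sqrt{N(a)}\le\sqrt{A\sum_a N(a)}=\sqrt{AN}$, then produces the dominant term $D\sqrt{YRX^3}\cdot\sqrt{AN}=DX^{3/2}\sqrt{AYRN}$. The Azuma term contributes $\wt O(\sqrt N)$ and the episode-counting terms contribute only logarithmic and lower-order polynomial factors, all absorbed into the $\wt O$. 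The main obstacle I anticipate is the perturbation lemma: unlike in an \mdp, the value/bias function of a \pomdp is naturally defined on the belief simplex, so I must reduce the analysis to the induced latent-state Markov chain of the memoryless policy, verify that its bias has span $O(D)$, and control the sensitivity of its stationary gain to simultaneous perturbations of transitions, observations, and rewards --- keeping the transition ($\ell_2$-to-$\ell_1$) term dominant and the others subleading.
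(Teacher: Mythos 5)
Your proposal is correct and follows essentially the same route as the paper's proof: optimism over the confidence region from Thm.~\ref{thm:estimates}, a Poisson-equation/bias argument on the latent-state chain with the span of the (action-)value function bounded by $r_{\max}D$ via the memoryless-policy diameter, the $\sqrt{X}$ conversion of the $\ell_2$ transition bound (yielding the dominant $X^{3/2}$ term, with observation and reward errors subleading), an Azuma term for the residual martingale, and the doubling/episode-counting argument giving $\sqrt{AN}$ up to logarithms. The only point you gloss over is the paper's Step~2, the burn-in analysis: since the spectral estimator requires a minimum number of samples $\wb{N}$ per action (and cannot pool samples across policies), the paper uses mean passage times and Markov's inequality to bound the number of initial episodes before the confidence intervals of Thm.~\ref{thm:estimates} are valid at all, contributing only an additive lower-order term that your sketch would need to account for explicitly.
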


The above  result shows that despite the complexity of estimating the POMDP parameters from noisy observations of hidden states, the regret of \smucrl is similar to the case of MDPs, where the regret of \ucrl scales as $\wt{O}(D_{\text{MDP}}X\sqrt{AN})$. The regret is order-optimal, since $\wt{O}(\sqrt{N})$ matches the lower bound for MDPs.

Another interesting aspect is that the diameter of the POMDP is a natural extension of the MDP case. While $D_{\text{MDP}}$ measures the mean passage time using state--based policies (i.e., a policies mapping \textit{states} to actions), in POMDPs policies cannot be defined over states but rather on observations and this naturally translates into the definition of the diameter $D$. More details on other problem-dependent terms in the bound are discussed in Sect.~\ref{s:learning}.

The derived regret bound is with respect to the best memoryless (stochastic) policy for the given POMDP. Indeed, for a general POMDP, the optimal policy need not be memoryless. However, finding the optimal policy is    uncomputable for infinite horizon regret minimization~\citep{madani1998computability}. Instead memoryless policies have shown good performance in practice (see the Section on related work). Moreover, for the class of so-called {\em contextual MDP}, a special class of POMDPs, the optimal policy is also  memoryless~\citep{krishnamurthy2016contextual-mdps}.

\paragraph{Analysis of the learning algorithm.} The learning results in Thm.~\ref{thm:estimates} are based on
spectral tensor decomposition methods, which have been previously used for consistent estimation of a wide class of LVMs~\citep{anandkumar2014tensor}. This is in contrast with traditional learning methods, such as expectation-maximization (EM)~\citep{dempster1977maximum}, that have no consistency guarantees and may converge to local optimum which is arbitrarily bad.

While spectral methods have been previously employed in sequence modeling such as in HMMs ~\citep{anandkumar2014tensor}, by representing it as multiview model, their application to POMDPs is not trivial. In fact, unlike the HMM, the consecutive observations of a POMDP are no longer conditionally independent, when conditioned on the hidden state of middle {\em view}. This is because the decision (or the action) depends on the observations themselves. By limiting to memoryless policies, we can control the range of this dependence, and by conditioning on the actions, we show that we can obtain conditionally independent {\em views}. As a result, starting with samples collected along a trajectory generated by a fixed policy, we can construct a multi-view model and use the tensor decomposition method on each action separately, estimate the parameters of the POMDP, and define confidence intervals.

While the proof follows similar steps as in previous works on spectral methods \citep[e.g., HMMs][]{anandkumar2014tensor}, here we extend concentration inequalities for dependent random variables to matrix valued functions by combining the results of~\citet{kontorovich2008concentration} with the matrix Azuma's inequality of~\citet{tropp2012user}. This allows us to remove the usual assumption that the samples are generated from the stationary distribution of the current policy. This is particularly important in our case since the policy changes at each episode and we can avoid discarding the initial samples and waiting until the corresponding Markov chain converged (i.e., the \textit{burn-in} phase).

The condition that the POMDP has more observations than states ($Y > X$) follows from standard non-degeneracy conditions to apply the spectral method. This corresponds to considering POMDPs where the underlying MDP is defined over a few number of states (i.e., a low-dimensional space) that can produce a large number of noisy observations. This is common in applications such as spoken-dialogue systems~\citep{atrash2006efficient,png2012building} and medical applications~\citep{hauskrecht2000planning}. We also show how this assumption can be relaxed and the result can be applied to a wider family of POMDPs.


\paragraph{Analysis of the exploration--exploitation strategy.}
\smucrl applies the popular \textit{optimism-in-face-of-uncertainty} principle\footnote{This principle has been successfully used in a wide number of exploration--exploitation problems ranging from multi-armed bandit~\citep{auer2002finite-time}, linear contextual bandit~\citep{abbasi-yadkori2011improved}, linear quadratic control~\citep{abbasi2011regret}, and reinforcement learning~\citep{ortner2007logarithmic,jaksch2010near-optimal}. } to the confidence intervals of the estimated POMDP and compute the optimal policy of the most optimistic POMDP in the admissible set. This \textit{optimistic} choice provides a smooth combination of the exploration encouraged by the confidence intervals (larger confidence intervals favor uniform exploration) and the exploitation of the estimates of the POMDP parameters.

While the algorithmic integration is rather simple, its analysis is not trivial. The spectral method cannot use samples generated from different policies and the length of each episode should be carefully tuned to guarantee that estimators improve at each episode. Furthermore, the analysis requires redefining the notion of diameter of the POMDP. In addition, we carefully bound the various perturbation terms in order to obtain efficient scaling in terms of dimensionality factors.
 
Finally, in the Appendix~\ref{s:experiments}, we report preliminary synthetic experiments that demonstrate superiority of our method over existing RL methods such as Q-learning and \ucrl for MDPs, and also over purely exploratory methods such as random sampling, which randomly chooses actions independent of the observations. \smucrl converges much faster and to a better solution. The solutions relying on the MDP assumption, directly work in the (high) dimensional observation space and perform poorly. In fact, they can even be worse than the random sampling policy baseline. In contrast, our method aims to find the lower dimensional latent space to derive the policy and this allows \ucrl to find a much better memoryless policy with vanishing regret.

It is worth noting that, in general, with slight changes on the learning set up, one can come up with new algorithms to learn different POMDP models with, slightly, same upper confidence bounds. Moreover, after applying memoryless policy and collecting sufficient number of samples, when the model parameters are learned very well, one can do the planing on the belief space, and get memory dependent policy, therefore improve the performance even further.  

\subsection{Related Work}\label{ss:related}
In last few decades, MDP has been widely studied ~\citep{kearns2002near,brafman2003r,bartlett2009regal:,jaksch2010near-optimal} in different setting. Even for the large state space MDP, where the classical approaches are not scalable, \cite{kocsis2006bandit} introduces MDP Monte-Carlo planning tree which is one of the few viable approaches to find the near-optimal policy. In addition, for special class of MDPs, Markov Jump Affine Model, when the action space is continuous, \citep{baltaoglu2016online} proposes an order optimal learning policy. 

While RL in MDPs has been widely studied, the design of effective exploration--exploration strategies in POMDPs is still relatively unexplored. \citet{ross2007bayes} and \citet{poupart2008model-based} propose to integrate the problem of estimating the belief state into a model-based Bayesian RL approach, where a distribution over possible MDPs is updated over time. The proposed algorithms are such that the Bayesian inference can be done accurately and at each step, a POMDP is sampled from the posterior and the corresponding optimal policy is executed. While the resulting methods implicitly balance exploration and exploitation, no theoretical guarantee is provided about their regret and their algorithmic complexity requires the introduction of approximation schemes for both the inference and the planning steps. An alternative to model-based approaches is to adapt model-free algorithms, such as Q-learning, to the case of POMDPs. \citet{perkins2002reinforcement} proposes a Monte-Carlo approach to action-value estimation and it shows convergence to locally optimal memoryless policies. While this algorithm has the advantage of being computationally efficient, local optimal policies may be arbitrarily suboptimal and thus suffer a linear regret.

An alternative approach to solve POMDPs is to use policy search methods, which avoid estimating value functions and directly optimize the performance by searching in a given policy space, which usually contains memoryless policies (see e.g.,~\citep{ng2000pegasus},\citep{baxter2001infinite-horizon},\citep{poupart2003bounded,bagnell2004policy}). Beside its practical success in offline problems, policy search has been successfully integrated with efficient exploration--exploitation techniques and shown to achieve small regret~\citep{gheshlaghi-azar2013regret,gheshlaghi-azar2014resource-efficient}. Nonetheless, the performance of such methods is severely constrained by the choice of the policy space, which may not contain policies with good performance. Another approach to solve POMDPs is proposed by \citep{guo2016pac}. In this work, the agent randomly chooses actions independent of the observations and rewards. The agent executes random policy until it collects sufficient number of samples and then estimates the model parameters given collected information. The authors propose Probably Approximately Correct (PAC) framework for RL in POMDP setting and shows polynomial sample complexity for learning of the model parameters. During learning phase, they defines the induced Hidden Markov Model and applies random policy to capture different aspects of the model, then in the planing phase, given the estimated model parameters, they compute the optimum policy so far. In other words, the proposed algorithm explores the environment sufficiently enough and then exploits this exploration to come up with a optimal policy given estimated model. In contrast, our method considers RL of POMDPs in an episodic learning framework.

Matrix decomposition methods have been previously used in the more general setting of predictive state representation (PSRs)~\citep{boots2011closing} to reconstruct the structure of the dynamical system. Despite the generality of PSRs, the proposed model relies on strong assumptions on the dynamics of the system and it does not have any theoretical guarantee about its performance. \citet{gheshlaghi-azar2013sequential} used spectral tensor decomposition methods in the multi-armed bandit framework to identify the hidden generative model of a sequence of bandit problems and showed that this may drastically reduce the regret. Recently, \citep{hamilton2014efficient} introduced compressed PSR (CPSR) method to reduce the computation cost in PSR by exploiting the advantages in dimensionality reduction, incremental matrix decomposition, and compressed sensing. In this work, we take these ideas further by considering more powerful tensor decomposition techniques.  

\citet{krishnamurthy2016contextual-mdps} recently analyzed the problem of learning in contextual-MDPs and  proved sample complexity bounds polynomial in the capacity of the policy space, the number of states, and the horizon. While their objective is to minimize the regret over a finite horizon, we instead consider the infinite horizon problem. It is an open question to analyze and modify our spectral \ucrl algorithm for the finite horizon problem. As stated earlier, contextual MDPs are a special class of POMDPs for which memoryless policies are optimal. While they assume that the samples are drawn from a  contextual MDP, we can handle a much more general class of POMDPs, and we minimize regret with respect to the best memoryless policy for the given POMDP. 

Finally, a related problem is considered by~\citet{ortner2014selecting}, where a series of possible representations based on observation histories is available to the agent but only one of them is actually Markov. A \ucrl-like strategy is adopted and shown to achieve near-optimal regret.

In this paper, we focus on the learning problem, while we consider access to an optimization oracle to compute the optimal memoryless policy.
The problem of planning in general POMDPs is intractable (PSPACE-complete for finite horizon~\citep{papadimitriou1987the-complexity} and uncomputable for infinite horizon~\citep{madani1998computability}).

Many exact, approximate, and heuristic methods have been proposed to compute the optimal policy (see~\citet{spaan2012partially} for a recent survey). An alternative approach is to consider memoryless policies which directly map observations (or a finite history) to actions~\citep{littman1994memoryless,singh1994learning,li2011finding}. While deterministic policies may perform poorly, stochastic memoryless policies are shown to be near-optimal in many domains~\citep{barto1983neuronlike,loch1998using,williams1998experimental} and even optimal in the specific case of contextual MDPs~\citep{krishnamurthy2016contextual-mdps}. Although computing the optimal stochastic memoryless policy is still NP-hard~\citep{littman1994memoryless}, several model-based and model-free methods are shown to converge to nearly-optimal policies with polynomial complexity under some conditions on the POMDP~\citep{jaakkola1995reinforcement,li2011finding}. In this work, we employ memoryless policies and prove regret bounds for reinforcement learning of POMDPs. The above works suggest that focusing to memoryless policies may not be a restrictive limitation in practice.


\subsection{Paper Organization}\label{ss:organization}
The paper is organized as follows. Sect.~\ref{s:preliminaries} introduces the notation (summarized also in a table in Sect.~\ref{s:notation}) and the technical assumptions concerning the POMDP and the space of memoryless policies that we consider. Sect.~\ref{s:learning.pomdp} introduces the spectral method for the estimation of POMDP parameters together with Thm.~\ref{thm:estimates}. In Sect.~\ref{s:learning}, we outline \smucrl where we integrate the spectral method into an exploration--exploitation strategy and we prove the regret bound of Thm.~\ref{thm:regret}. Sect.~\ref{s:conclusions} draws conclusions and discuss possible directions for future investigation. The proofs are reported in the appendix together with  preliminary empirical results showing the effectiveness of the proposed method.




\section{Preliminaries}\label{s:preliminaries}

\begin{figure}[t!]
\small
\begin{center}
\begin{psfrags}
\psfrag{x1}[][1]{$x_t$}
\psfrag{x2}[][1]{$x_{t+1}$}
\psfrag{x3}[][1]{$x_{t+2}$}
\psfrag{y1}[][1]{$\vec{y}_t$}
\psfrag{y2}[][1]{$\vec{y}_{t+1}$}
\psfrag{r1}[][1]{$\vec{r}_t$}
\psfrag{r2}[][1]{$\vec{r}_{t+1}$}
\psfrag{a1}[][1]{$a_{t}$}
\psfrag{a2}[][1]{$a_{t+1}$}
\includegraphics[width=0.6\textwidth,natwidth=810,natheight=642,trim={-1cm 0cm 0 11cm},clip]{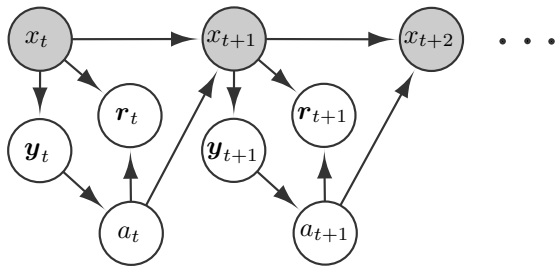}
\end{psfrags}
\end{center}
\vspace{-0.2in}
\caption{Graphical model of a POMDP under memoryless policies.}
\label{fig:pds2}
\end{figure}

A POMDP $M$ is a tuple $\langle \X, \A, \Y, \R, f_T, f_R, f_O\rangle$, where $\X$ is a finite state space with cardinality $|\X|=X$, $\A$ is a finite action space with cardinality $|\A|=A$, $\Y$ is a finite observation space with cardinality $|\Y|=Y$, and $\R$ is a finite reward space with cardinality $|\R|=R$ and largest reward $r_{\max}$. For notation convenience, we use a vector notation for the elements in $\Y$ and $\R$, so that $\vec{y}\in\Re^Y$ and $\vec{r}\in\Re^R$ are indicator vectors with entries equal to $0$ except a $1$ in the position corresponding to a specific element in the set (e.g., $\vec{y} = \vec{e}_n$ refers to the $n$-th element in $\Y$). We use $i,j\in[X]$ to index states, $k,l\in[A]$ for actions, $m\in[R]$ for rewards, and $n\in[Y]$ for observations. Finally, $f_T$ denotes the transition density, so that $f_T(x'|x,a)$ is the probability of transition to $x'$ given the state-action pair $(x,a)$, $f_R$ is the reward density, so that $f_R(\vec{r}|x,a)$ is the probability of receiving the reward in $\R$ corresponding to the value of the indicator vector $\vec{r}$ given the state-action pair $(x,a)$, and $f_O$ is the observation density, so that $f_O(\vec{y}|x)$ is the probability of receiving the observation in $\Y$ corresponding to the indicator vector $\vec{y}$ given the state $x$. Whenever convenient, we use tensor forms for the density functions such that
\begin{align*}
T_{i,j,l} &= \Prob[x_{t+1}=j | x_t = i, a_t = l] = f_T(j|i,l), & s.t.~~T\in\Re^{X\times X\times A}\\
O_{n,i} &= \Prob[\vec{y}=\vec{e}_n | x = i] = f_O(\vec{e}_n|i),& s.t.~~ O\in\Re^{Y\times X}\\
\Gamma_{i,l,m} &= \Prob[\vec{r} = \vec{e}_m | x = i, a = l] = f_R(\vec{e}_m|i,l),& s.t.~~\Gamma \in\Re^{X\times A\times R}.
\end{align*}
We also denote by $T_{:, j, l}$ the fiber (vector) in $\Re^{X}$ obtained by fixing the arrival state $j$ and action $l$ and by $T_{:,:,l}\in\Re^{X\times X}$ the transition matrix between states when using action $l$. The graphical model associated to the POMDP is illustrated in Fig.~\ref{fig:pds2}. 

We focus on stochastic memoryless policies which map observations to actions and for any policy $\pi$ we denote by $f_\pi(a|\vec{y})$ its density function. We denote by $\calP$ the set of all stochastic memoryless policies that have a non-zero probability to explore all actions:
\begin{align*}
\calP = \{\pi: \min_{\vec{y}} \min_{a} f_\pi(a|\vec{y}) > \pi_{\min}\}.
\end{align*}
Acting according to a policy $\pi$ in a POMDP $M$ defines a Markov chain characterized by a transition density
\begin{align*}
f_{T,\pi}(x'|x) = \sum_a \sum_{\vec{y}} f_\pi(a|\vec{y}) f_O(\vec{y}|x) f_T(x'|x,a),
\end{align*}
and a stationary distribution $\omega_\pi$ over states such that $\omega_\pi(x) = \sum_{x'} f_{T,\pi}(x'|x)\omega_\pi(x')$. The expected average reward performance of a policy $\pi$ is
\begin{align*}
\eta(\pi; M) = \sum_x \omega_\pi(x) \wb{r}_\pi(x),
\end{align*}

where $\wb{r}_\pi(x)$ is the expected reward of executing policy $\pi$ in state $x$ defined as
\begin{align*}
\wb{r}_\pi(x) = \sum_a \sum_{\vec{y}} f_O(\vec{y}|x) f_\pi(a|\vec{y}) \wb{r}(x,a),
\end{align*}
and $\wb{r}(x,a) = \sum_{r} r f_R(r|x,a)$ is the expected reward for the state-action pair $(x,a)$.
The best stochastic memoryless policy in $\calP$ is $\pi^+ = \displaystyle\arg\max_{\pi\in\calP} \eta(\pi; M)$ and we denote by $\eta^+ = \eta(\pi^+; M)$ its average reward.\footnote{We use $\pi^+$ rather than $\pi^*$ to recall the fact that we restrict the attention to $\calP$ and the actual optimal policy for a POMDP in general should be constructed on the belief-MDP.} Throughout the paper we assume that we have access to an optimization oracle returning the optimal policy $\pi^+$ in $\calP$ for any POMDP $M$. We need the following assumptions on the POMDP $M$.

\begin{assumption}[Ergodicity]\label{asm:ergodicity}
For any policy $\pi\in\calP$, the corresponding Markov chain $f_{T,\pi}$ is ergodic, so $\omega_\pi(x)>0$ for all states $x\in\X$.
\end{assumption}

We further characterize the Markov chains that can be generated by the policies in $\calP$. For any ergodic Markov chain with stationary distribution $\omega_\pi$, let $f_{1\rightarrow t}(x_t|x_1)$ by the distribution over states reached by a policy $\pi$ after $t$ steps starting from an initial state $x_1$. The inverse mixing time $\rho_{\text{mix},\pi}(t)$ of the chain is defined as
\begin{align*}
\rho_{\text{mix},\pi}(t)=\sup_{x_1}\left\|f_{1\rightarrow t}(\cdot|x_1)-\omega_\pi\right\|_{\text{TV}},
\end{align*}
where $\| \cdot\|_{\text{TV}}$ is the total-variation metric. \citet{kontorovich2014uniform} show that for any ergodic Markov chain the mixing time can be bounded as
\begin{align*}
\rho_{\text{mix},\pi}(t)\leq G(\pi)\theta^{t-1}(\pi),
\end{align*}
where $1\leq G(\pi)<\infty$ is the \textit{geometric ergodicity} and $0\leq \theta(\pi)<1$ is the \textit{contraction coefficient} of the Markov chain generated by policy $\pi$.

\begin{assumption}[Full Column-Rank]\label{asm:observation}
The observation matrix $O\in\Re^{Y\times X}$ is full column rank.
\end{assumption}
and define

This assumption guarantees that the distribution $f_O(\cdot|x)$ in a state $x$ (i.e., a column of the matrix $O$) is not the result of a linear combination of the distributions over other states. We show later that this is a sufficient condition to recover $f_O$ since it makes all states \textit{distinguishable} from the observations and it also implies that $Y \geq X$. Notice that POMDPs have been often used in the opposite scenario ($X \gg Y$) in applications such as robotics, where imprecise sensors prevents from distinguishing different states. On the other hand, there are many domains in which the number of observations may be much larger than the set of states that define the dynamics of the system. A typical example is the case of spoken dialogue systems~\citep{atrash2006efficient,png2012building}, where the observations (e.g., sequences of words uttered by the user) is much larger than the state of the conversation (e.g., the actual meaning that the user intended to communicate). A similar scenario is found in medical applications~\citep{hauskrecht2000planning}, where the state of a patient (e.g., sick or healthy) can produce a huge body of different (random) observations. In these problems it is crucial to be able to reconstruct the underlying small state space and the actual dynamics of the system from the observations.

\begin{assumption}[Invertible]\label{asm:transition}
For any action $a\in[A]$, the transition matrix $T_{:,:,a}\in\Re^{X\times X}$ is invertible.
\end{assumption}

Similar to the previous assumption, this means that for any action $a$ the distribution $f_T(\cdot|x,a)$ cannot be obtained as linear combination of distributions over other states, and it is a sufficient condition to be able to recover the transition tensor. Both Asm.~\ref{asm:observation} and~\ref{asm:transition} are strictly related to the assumptions introduced by~\citet{anandkumar2014tensor} for tensor methods in HMMs. In Sect.~\ref{s:learning} we discuss how they can be partially relaxed.




\section{Learning the Parameters of the POMDP}\label{s:learning.pomdp}


In this section we introduce a novel spectral method to estimate the POMDP parameters $f_T$, $f_O$, and $f_R$. A stochastic policy $\pi$ is used to generate a trajectory $(\vec{y}_1, a_1, \vec{r}_1, \ldots, \vec{y}_N, a_N, \vec{r}_N)$ of $\TN$ steps. We need the following assumption that, together with Asm.~\ref{asm:ergodicity}, guarantees that all states and actions are constantly visited.

\begin{assumption}[Policy Set]\label{asm:stochastic.policy}
The policy $\pi$ belongs to $\calP$.
\end{assumption}

Similar to the case of HMMs, the key element to apply the spectral methods is to construct a multi-view model for the hidden states. Despite its similarity, the spectral method developed for HMM by~\cite{anandkumar2014tensor} cannot be directly employed here. In fact, in HMMs the state transition and the observations only depend on the current state. On the other hand, in POMDPs the probability of a transition to state $x'$ not only depends on $x$, but also on action $a$. Since the action is chosen according to a memoryless policy $\pi$ based on the current observation, this creates an indirect dependency of $x'$ on observation $\vec{y}$, which makes the model more intricate.


\subsection{The multi-view model}\label{ss:multi-view}

We estimate POMDP parameters for each action $l\in [A]$ separately. Let $t\in [2,N-1]$ be a step at which $a_t=l$, we construct three views $(a_{t-1}, \vec{y}_{t-1}, \vec{r}_{t-1})$, $(\vec{y}_t, \vec{r}_t)$, and $(\vec{y}_{t+1})$ which all contain observable elements. As it can be seen in Fig.~\ref{fig:pds2}, all three views provide some information about the hidden state $x_t$ (e.g., the observation $\vec{y}_{t-1}$ triggers the action $a_{t-1}$, which influence the transition to $x_t$). A careful analysis of the graph of dependencies shows that conditionally on $x_t, a_t$ all the views are independent. For instance, let us consider $\vec{y}_{t}$ and $\vec{y}_{t+1}$. These two random variables are clearly dependent since $\vec{y}_t$ influences action $a_t$, which triggers a transition to $x_{t+1}$ that emits an observation $\vec{y}_{t+1}$. Nonetheless, it is sufficient to condition on the action $a_t = l$ to break the dependency and make $\vec{y}_t$ and $\vec{y}_{t+1}$ independent. Similar arguments hold for all the other elements in the views, which can be used to recover the latent variable $x_t$. More formally, we encode the triple $(a_{t-1},\vec{y}_{t-1},\vec{r}_{t-1})$ into a vector $\vec{v}_{1,t}^{(l)}\in \Re^{A\cdot Y\cdot R}$, so that view $\vec{v}_{1,t}^{(l)} = \vec{e}_s$ whenever $a_{t-1}=k$, $\vec{y}_{t-1}=\vec{e}_n$, and $\vec{r}_{t-1}=\vec{e}_m$ for a suitable mapping between the index $s\in\{1,\ldots,A\cdot Y\cdot R\}$ and the indices $(k,n,m)$ of the action, observation, and reward. Similarly, we proceed for $\vec{v}_{2,t}^{(l)}\in\Re^{Y \cdot R}$ and  $\vec{v}_{3,t}^{(l)}\in\Re^{Y}$. We introduce the three view matrices $V^{(l)}_\nu$ with $\nu\in\{1,2,3\}$ associated with action $l$ defined as $V^{(l)}_1\in\Re^{A\cdot Y\cdot R\times X}$, $V^{(l)}_2\in\Re^{Y\cdot R\times X}$, and $V^{(l)}_3\in\Re^{Y\times X}$ such that
\begin{align*}
[V_1^{(l)}]_{s,i}&=\Prob\big(\vec{v}_1^{(l)}=\vec{e}_s | x_2=i\big) = [V_1^{(l)}]_{(n,m,k),i} = \Prob\big(\vecy_1=\vec{e}_n,\vec{r}_1=\vec{e}_m,a_1=k|x_2=i\big),\\[0.05in]
[V_2^{(l)}]_{s,i}&=\Prob\big(\vec{v}_2^{(l)}=\vec{e}_s |x_2=i,a_2=l\big) = [V_2^{(l)}]_{(n',m'),i} = \Prob\big(\vecy_2=\vec{e}_{n'},\vec{r}_2=\vec{e}_{m'}|x_2=i,a_2=l\big),\\[0.05in]
[V_3^{(l)}]_{s,i}&=\Prob\big(\vec{v}_3^{(l)}=\vec{e}_s |x_2=i,a_2=l\big) = [V_3^{(l)}]_{n'',i} = \Prob\big(\vecy_3=\vec{e}_{n''}|x_2=i,a_2=l\big).
\end{align*}
%
In the following we denote by $\mu^{(l)}_{\nu,i} = [V_\nu^{(l)}]_{:,i}$ the $i$th column of the matrix $V_\nu^{(l)}$ for any $\nu\in\{1,2,3\}$. Notice that Asm.~\ref{asm:observation} and Asm.~\ref{asm:transition} imply that all the view matrices are full column rank. As a result, we can construct a multi-view model that relates the spectral decomposition of the second and third moments of the (modified) views with the columns of the third view matrix.


\begin{proposition}[Thm. 3.6 in~\citep{anandkumar2014tensor}]\label{lem:spectral.decomposition}
Let $K_{\nu,\nu'}^{(l)} = \E\big[\vec{v}^{(l)}_\nu \otimes \vec{v}^{(l)}_{\nu'}\big]$ be the correlation matrix between views $\nu$ and $\nu'$ and $K^\dagger$ is its pseudo-inverse. We define a modified version of the first and second views as
\begin{align}\label{eq:mod.views}
\wt{\vec{v}}_1^{(l)}:=K_{3,2}^{(l)}(K_{1,2}^{(l)})^{\dagger}\vec{v}_1^{(l)}, \quad \wt{\vec{v}}_2^{(l)}:=K_{3,1}^{(l)}(K_{2,1}^{(l)})^{\dagger}\vec{v}_2^{(l)}.
\end{align}
Then the second and third moment of the modified views have a spectral decomposition as
\begin{align}
M_2^{(l)} &\!\!= \E\big[\wt{\vec{v}}_1^{(l)} \!\otimes \wt{\vec{v}}_2^{(l)}\big] \!=\! \sum_{i=1}^X \omega_\pi^{(l)}(i) \mu^{(l)}_{3,i} \otimes \mu^{(l)}_{3,i},\label{eq:m2}\\
M_3^{(l)} &\!\!= \E\big[\wt{\vec{v}}_1^{(l)} \!\otimes \wt{\vec{v}}_2^{(l)} \!\otimes \vec{v}_3^{(l)}\big] \!=\!\sum_{i=1}^X \omega_\pi^{(l)}(i) \mu^{(l)}_{3,i} \otimes \mu^{(l)}_{3,i} \otimes \mu^{(l)}_{3,i},\label{eq:m3}
\end{align}
where $\otimes$ is the tensor product and $\omega_\pi^{(l)}(i) = \Prob[x=i|a=l]$ is the state stationary distribution of $\pi$ conditioned on action $l$ being selected by policy $\pi$.
\end{proposition}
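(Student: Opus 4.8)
The plan is to recognize the three views as a multi-view latent variable model with hidden variable $x_2$ (restricted to the time steps at which $a_2=l$) and then reduce to the symmetric tensor decomposition of Thm.~3.6 in \citep{anandkumar2014tensor}. The two facts that must be verified are (i) that, conditioned on $x_2=i$ and $a_2=l$, the three views $\vec{v}_1^{(l)},\vec{v}_2^{(l)},\vec{v}_3^{(l)}$ are mutually independent, and (ii) that the conditional means of the views are exactly the columns $\mu^{(l)}_{\nu,i}$ of the view matrices, which by Asm.~\ref{asm:observation} and Asm.~\ref{asm:transition} are full column rank. Independence follows from a $d$-separation reading of the graphical model in Fig.~\ref{fig:pds2}: once $x_t$ is fixed, the past triple $(a_{t-1},\vec{y}_{t-1},\vec{r}_{t-1})$ is separated from $(\vec{y}_t,\vec{r}_t)$, and additionally fixing $a_t=l$ blocks the only path through which $\vec{y}_t$ influences $x_{t+1}$, decoupling $\vec{y}_{t+1}$ from the first two views. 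Using independence, every cross-moment factorizes as $K_{\nu,\nu'}^{(l)} = V_\nu^{(l)} D^{(l)} (V_{\nu'}^{(l)})^\top$, where $D^{(l)} = \textnormal{diag}(\omega_\pi^{(l)})$ is invertible by Asm.~\ref{asm:ergodicity}.

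The core computation is to show that the modification in \eqref{eq:mod.views} realigns the conditional means of the first two views onto the columns of the third view matrix, i.e.\ $\E[\wt{\vec{v}}_1^{(l)} \mid x_2=i] = \mu^{(l)}_{3,i}$ and likewise for $\wt{\vec{v}}_2^{(l)}$. Substituting the factorizations, I would use that $V_1^{(l)}$ is full column rank, $D^{(l)}$ is invertible, and $(V_2^{(l)})^\top$ is full row rank, so that the pseudo-inverse of the product splits as $(K_{1,2}^{(l)})^\dagger = ((V_2^{(l)})^\top)^\dagger (D^{(l)})^{-1} (V_1^{(l)})^\dagger$. The middle factors cancel against $K_{3,2}^{(l)} = V_3^{(l)} D^{(l)} (V_2^{(l)})^\top$, collapsing the product to $K_{3,2}^{(l)}(K_{1,2}^{(l)})^\dagger = V_3^{(l)} (V_1^{(l)})^\dagger$. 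Since $\E[\vec{v}_1^{(l)}\mid x_2=i] = \mu^{(l)}_{1,i}$ is the $i$th column of $V_1^{(l)}$ and $(V_1^{(l)})^\dagger V_1^{(l)} = I$, we get $(V_1^{(l)})^\dagger \mu^{(l)}_{1,i} = \vec{e}_i$ and hence $K_{3,2}^{(l)}(K_{1,2}^{(l)})^\dagger \mu^{(l)}_{1,i} = V_3^{(l)} \vec{e}_i = \mu^{(l)}_{3,i}$, as claimed.

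With all three (modified) views sharing the common conditional mean $\mu^{(l)}_{3,i}$ and remaining conditionally independent (each $\wt{\vec{v}}_\nu^{(l)}$ is a deterministic linear image of a single original view), the second and third moments factorize by conditioning on $x_2$:
\begin{align*}
M_2^{(l)} &= \sum_{i=1}^X \omega_\pi^{(l)}(i)\, \E[\wt{\vec{v}}_1^{(l)}\mid i] \otimes \E[\wt{\vec{v}}_2^{(l)}\mid i] = \sum_{i=1}^X \omega_\pi^{(l)}(i)\, \mu^{(l)}_{3,i} \otimes \mu^{(l)}_{3,i},\\
M_3^{(l)} &= \sum_{i=1}^X \omega_\pi^{(l)}(i)\, \mu^{(l)}_{3,i} \otimes \mu^{(l)}_{3,i} \otimes \mu^{(l)}_{3,i},
\end{align*}
which are exactly \eqref{eq:m2}--\eqref{eq:m3}. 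I expect the main obstacle to be not the linear algebra but the careful justification of the conditional independence and, in particular, of the identity $\E[\vec{v}_1^{(l)}\mid x_2=i,a_2=l] = \E[\vec{v}_1^{(l)}\mid x_2=i]$: one must argue from the memorylessness of $\pi$ and the POMDP dynamics that knowledge of the future action $a_2=l$ carries no extra information about the past view once $x_2$ is fixed, so that the same columns $\mu^{(l)}_{1,i}$ enter every cross-moment and the cancellation above goes through.
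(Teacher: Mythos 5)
Your proposal is correct and follows essentially the same route as the paper: the paper establishes the conditional independence of the three views given $(x_2,a_2=l)$ informally in Sect.~\ref{ss:multi-view} and then imports the symmetrization and moment decomposition from Thm.~3.6 of \citet{anandkumar2014tensor}, whose standard proof is exactly your factorization $K_{\nu,\nu'}^{(l)} = V_\nu^{(l)}\,\mathrm{diag}(\omega_\pi^{(l)})\,(V_{\nu'}^{(l)})^\top$ followed by the pseudo-inverse collapse $K_{3,2}^{(l)}(K_{1,2}^{(l)})^{\dagger} = V_3^{(l)}(V_1^{(l)})^{\dagger}$. Your care in justifying $\E[\vec{v}_1^{(l)}\mid x_2=i,a_2=l]=\E[\vec{v}_1^{(l)}\mid x_2=i]$ via the memorylessness of $\pi$ is also the right POMDP-specific point, matching the paper's observation that conditioning on the middle action is what restores the multi-view structure.
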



Notice that under Asm.~\ref{asm:ergodicity} and~\ref{asm:stochastic.policy}, $\omega_\pi^{(l)}(i)$ is always bounded away from zero.
Given $M_2^{(l)}$ and $M_3^{(l)}$ we can recover the columns of the third view $\mu^{(l)}_{3,i}$ directly applying the standard spectral decomposition method of~\citet{anandkumar2012method}.
We need to recover the other views from $V_3^{(l)}$. From the definition of modified views in Eq.~\ref{eq:mod.views} we have
\begin{align}\label{eq:expect.mod.views}
\begin{aligned}
\mu^{(l)}_{3,i} = \E\big[\wt{\vec{v}}_1 | x_2 = i,a_2=l \big] = K_{3,2}^{(l)}(K_{1,2}^{(l)})^{\dagger}\E\big[\vec{v}_1| x_2 = i,a_2=l\big] = K_{3,2}^{(l)}(K_{1,2}^{(l)})^{\dagger}\mu^{(l)}_{1,i}, \\ \mu^{(l)}_{3,i} = \E\big[\wt{\vec{v}}_2 | x_2 = i,a_2=l \big] = K_{3,1}^{(l)}(K_{2,1}^{(l)})^{\dagger}\E\big[\vec{v}_2| x_2 = i,a_2=l\big] = K_{3,1}^{(l)}(K_{2,1}^{(l)})^{\dagger}\mu^{(l)}_{2,i}.
\end{aligned}
\end{align}
Thus, it is sufficient to invert (pseudo invert) the two equations above to obtain the columns of both the first and second view matrices. This process could be done in any order, e.g., we could first estimate the second view by applying a suitable symmetrization step (Eq.~\ref{eq:mod.views}) and recovering the first and the third views by reversing similar equations to Eq.~\ref{eq:expect.mod.views}. On the other hand, we cannot repeat the symmetrization step multiple times and estimate the views independently (i.e., without inverting Eq.~\ref{eq:expect.mod.views}). In fact, the estimates returned by the spectral method are consistent ``up to a suitable permutation'' on the indexes of the states. While this does not pose any problem in computing one single view, if we estimated two views independently, the permutation may be different, thus making them non-consistent and impossible to use in recovering the POMDP parameters. On the other hand, estimating first one view and recovering the others by inverting Eq.~\ref{eq:expect.mod.views} guarantees the consistency of the labeling of the hidden states.


\subsection{Recovery of POMDP parameters}\label{ss:recovery}

Once the views $\{V_\nu^{(l)}\}_{\nu=2}^3$ are computed from $M_2^{(l)}$ and $M_3^{(l)}$, we can derive $f_T$, $f_O$, and $f_R$. In particular, all parameters of the POMDP can be obtained by manipulating the second and third view as illustrated in the following lemma.

\begin{lemma}\label{lem:pomdp.parameters}
Given the views $V_2^{(l)}$ and $V_3^{(l)}$, for any state $i\in[X]$ and action $l\in[A]$, the POMDP parameters are obtained as follows. For any reward $m\in[R]$ the reward density is
\begin{align}\label{eq:rew.recovery}
f_R(\vec{e}_{m'}|i,l) = \sum_{n'=1}^Y [V_2^{(l)}]_{(n',m'),i};
\end{align}
for any observation $n'\in[Y]$ the observation density is
\begin{align}
f^{(l)}_O(\vec{e}_{n'}|i) = \sum_{m'=1}^R \frac{[V_2^{(l)}]_{(n',m'),i}}{f_\pi(l|\vec{e}_{n'})\rho(i,l)},\label{eq:obs.recovery}
\end{align}
with
\begin{align*}
\rho(i,l) = \sum_{m'=1}^R \sum_{n'=1}^Y \frac{[V_2^{(l)}]_{(n',m'),i}}{f_\pi(l|\vec{e}_{n'})} = \frac{1}{\Prob(a_2=l|x_2=i)}.
\end{align*}
Finally, each second mode of the transition tensor $T\in\Re^{X\times X\times A}$ is obtained as
\begin{align}\label{eq:transition.recovery.inv}
 [T]_{i,:,l} = O^\dagger [V_3^{(l)}]_{:,i},
\end{align}
where $O^\dagger$ is the pseudo-inverse of matrix observation $O$ and $f_T(\cdot|i,l) = [T]_{i,:,l}$.
\end{lemma}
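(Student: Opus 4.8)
The plan is to express every entry of the view matrices $V_2^{(l)}$ and $V_3^{(l)}$ explicitly in terms of the POMDP densities $f_O$, $f_R$, $f_T$ and the policy $f_\pi$, and then to invert these relations by summing out the appropriate variables. Everything rests on reading the conditional independences off the graphical model in Fig.~\ref{fig:pds2}.

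First I would compute $[V_2^{(l)}]_{(n',m'),i}$. Writing the conditioning event $\{x_2=i,a_2=l\}$ via Bayes' rule and factorizing the joint law $\Prob(\vecy_2=\vec{e}_{n'},\vec{r}_2=\vec{e}_{m'},a_2=l\mid x_2=i)$ along the dependency chain (the observation $\vecy_2$ is emitted from $x_2$, the action $a_2$ is drawn by the policy from $\vecy_2$, and the reward $\vec{r}_2$ depends on $x_2,a_2$) gives
\begin{align*}
[V_2^{(l)}]_{(n',m'),i}=\frac{f_O(\vec{e}_{n'}|i)\,f_\pi(l|\vec{e}_{n'})\,f_R(\vec{e}_{m'}|i,l)}{\Prob(a_2=l|x_2=i)},
\end{align*}
where $\Prob(a_2=l|x_2=i)=\sum_{n'}f_O(\vec{e}_{n'}|i)f_\pi(l|\vec{e}_{n'})$ is the normalizing constant produced by conditioning on the action.

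Given this closed form, the three recovery formulas follow by elementary summations. For the reward density I would sum over $n'$: the factor $\sum_{n'}f_O(\vec{e}_{n'}|i)f_\pi(l|\vec{e}_{n'})$ equals exactly $\Prob(a_2=l|x_2=i)$ and cancels the denominator, leaving $f_R(\vec{e}_{m'}|i,l)$, which is Eq.~\ref{eq:rew.recovery}. For the observation density I would first divide each entry by $f_\pi(l|\vec{e}_{n'})$ (well defined since $\pi\in\calP$ ensures $f_\pi(l|\vec{e}_{n'})>\pi_{\min}>0$) and then sum over $m'$, using $\sum_{m'}f_R(\vec{e}_{m'}|i,l)=1$, to obtain $f_O(\vec{e}_{n'}|i)/\Prob(a_2=l|x_2=i)$. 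Summing this once more over $n'$ identifies $\rho(i,l)$ with $1/\Prob(a_2=l|x_2=i)$ (using $\sum_{n'}f_O(\vec{e}_{n'}|i)=1$), and dividing by $\rho(i,l)$ isolates $f_O(\vec{e}_{n'}|i)$, giving Eq.~\ref{eq:obs.recovery}; in particular the recovered density is independent of the action $l$ used to build the views.

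For the transition tensor I would instead expand $[V_3^{(l)}]_{n'',i}=\Prob(\vecy_3=\vec{e}_{n''}|x_2=i,a_2=l)$ by summing over the intermediate state $x_3=j$, which is distributed as $f_T(\cdot|i,l)$ and then emits $\vecy_3$ through $f_O$; this yields the matrix identity $[V_3^{(l)}]_{:,i}=O\,[T]_{i,:,l}$. Invertibility is the only structural input here: by Asm.~\ref{asm:observation} the matrix $O$ has full column rank, hence $O^\dagger O=I_X$, and left-multiplying by $O^\dagger$ recovers $[T]_{i,:,l}=O^\dagger[V_3^{(l)}]_{:,i}$, which is Eq.~\ref{eq:transition.recovery.inv}. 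The one genuinely delicate step is the factorization of the joint conditional law in the computation of $V_2^{(l)}$: one must keep track of the fact that $\vecy_2$ and $a_2$ are \emph{not} conditionally independent given $x_2$ (the policy selects $a_2$ from $\vecy_2$), so the factor $f_\pi(l|\vec{e}_{n'})$ stays coupled to the observation index and must be divided out before summing in the observation recovery — the remaining manipulations are routine bookkeeping.
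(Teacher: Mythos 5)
Your proposal is correct and takes essentially the same route as the paper's proof: you factorize the entries of $V_2^{(l)}$ through the graphical model into $f_O(\vec{e}_{n'}|i)\,f_\pi(l|\vec{e}_{n'})\,f_R(\vec{e}_{m'}|i,l)/\Prob(a_2=l|x_2=i)$, recover $f_R$ and $f_O$ by dividing out the policy factor and summing over the appropriate indices exactly as the paper does, and obtain the transition tensor from the identity $[V_3^{(l)}]_{:,i}=O\,[T]_{i,:,l}$ by pseudo-inversion under Asm.~\ref{asm:observation}. The only cosmetic difference is that you derive the full closed form of $V_2^{(l)}$ once and read all three identities off it, whereas the paper handles the reward step by invoking the conditional independence of $\vec{y}_2$ and $\vec{r}_2$ given $(x_2,a_2)$ before writing out the explicit factorization for the observation step.
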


In the previous statement we use $f^{(l)}_O$ to denote that the observation model is recovered from the view related to action $l$. While in the exact case, all $f^{(l)}_O$ are identical, moving to the empirical version leads to $A$ different estimates, one for each action view used to compute it. Among them, we will select the estimate with the better accuracy.

\begin{algorithm}[t]
\setstretch{1.1}
\begin{small}
\begin{algorithmic}
\STATE \textbf{Input:}
\STATE $\quad$ Policy density $f_\pi$, number of states $X$
\STATE $\quad$ Trajectory $\langle (\vec{y}_1,a_1,\vec{r}_1), (\vec{y}_2,a_2,\vec{r}_2),\ldots, (\vec{y}_N,a_N, \vec{r}_N)\rangle$

\STATE \textbf{Variables:}
\STATE $\quad$ Estimated second and third views $\wh{V}_2^{(l)}$, and $\wh{V}_3^{(l)}$ for any action $l\in[A]$
\STATE $\quad$ Estimated observation, reward, and transition models $\wh{f}_O$, $\wh{f}_R$, $\wh{f}_T$
\STATE
\FOR{$l=1,\ldots,A$}
\STATE Set $\T(l) = \{t \in [N-1]: a_t = l\}$ and $N(l) = |\T(l)|$
\STATE Construct views $\enspace\vec{v}_{1,t}^{(l)} = (a_{t-1},\vec{y}_{t-1},\vec{r}_{t-1}),$ $\enspace$ $\vec{v}_{2,t}^{(l)} = (\vec{y}_{t},\vec{r}_{t}),$ $\enspace$ $\vec{v}_{3,t}^{(l)} = \vec{y}_{t+1}\quad$ for any $t\in\T(l)$
\STATE Compute covariance matrices $\wh{K}_{3,1}^{(l)}$, $\wh{K}_{2,1}^{(l)}$, $\wh{K}_{3,2}^{(l)}$ as
$$
\wh{K}_{\nu,\nu'}^{(l)} = \frac{1}{N(l)} \sum_{t\in \T(l)} \vec{v}_{\nu,t}^{(l)} \otimes \vec{v}_{\nu',t}^{(l)};  \enspace \nu,\nu'\in\{1,2,3\}
$$
\STATE Compute modified views $\enspace \wt{\vec{v}}_{1,t}^{(l)}:=\wh{K}_{3,2}^{(l)}(\wh{K}_{1,2}^{(l)})^{\dagger}\vec{v}_1, \quad \wt{\vec{v}}_{2,t}^{(l)}:=\wh{K}_{3,1}^{(l)}(\wh{K}_{2,1}^{(l)})^{\dagger}\vec{v}_{2,t}^{(l)}\quad$ for any $t\in\T(l)$

\STATE Compute second and third moments
$$
\wh{M}_2^{(l)} = \frac{1}{N(l)} \sum_{t\in\T_l} \wt{\vec{v}}_{1,t}^{(l)} \otimes \wt{\vec{v}}_{2,t}^{(l)}, \quad
\wh{M}_3^{(l)} = \frac{1}{N(l)} \sum_{t\in\T_l} \wt{\vec{v}}_{1,t}^{(l)} \otimes \wt{\vec{v}}_{2,t}^{(l)} \otimes \vec{v}_{3,t}^{(l)}
$$
\STATE Compute $\wh{V}_3^{(l)}$ = \textsc{\small TensorDecomposition}($\wh{M}_2^{(l)}$, $\wh{M}_3^{(l)}$)
\STATE Compute $\wh{\mu}_{2,i}^{(l)} = \wh{K}_{1,2}^{(l)} (\wh{K}_{3,2}^{(l)})^\dagger \wh{\mu}_{3,i}^{(l)}\quad$ for any $i\in[X]$
\STATE Compute $\wh{f}(\vec{e}_m | i,l) = \sum_{n'=1}^Y [\wh{V}_2^{(l)}]_{(n',m), i}\quad$ for any $i \in [X]$, $m\in [R]$
\STATE Compute $\rho(i,l) = \sum_{m'=1}^R \sum_{n'=1}^Y \frac{[V_2^{(l)}]_{(n',m'),i}}{f_\pi(l | \vec{e}_{n'})}\quad$ for any $i \in [X]$, $n\in[Y]$
\STATE Compute $\wh{f}_O^{(l)}(\vec{e}_{n} | i) = \sum_{m'=1}^R \frac{[V_2^{(l)}]_{(n,m'),i}}{f_\pi(l | \vec{e}_{n}) \rho(i,l)}\quad$ for any $i \in [X]$, $n\in[Y]$
\ENDFOR
\STATE Compute bounds $\B_O^{(l)}$
\STATE Set $l^* = \arg\min_l \B_O^{(l)}$, $\wh{f}_O = \wh{f}_O^{l^*}$ and construct matrix $[\wh{O}]_{n,j} = \wh{f}_O(\vec{e}_n | j)$ 
\STATE Reorder columns of matrices $\wh{V}_2^{(l)}$ and $\wh{V}_3^{(l)}$ such that matrix $O^{(l)}$ and $O^{(l^*)}$ match, $\forall l\in [A]$\footnotemark
\FOR{$i\in[X]$, $l\in[A]$}
\STATE Compute $[T]_{i,:,l} = \wh{O}^\dagger [\wh{V}_3^{(l)}]_{:,i}$
\ENDFOR
\STATE \textbf{Return:} $\wh{f}_R$, $\wh{f}_T$, $\wh{f}_O$, $\B_R$, $\B_T$, $\B_O$
\end{algorithmic}
\end{small}
\caption{Estimation of the POMDP parameters. The routine \textsc{\small TensorDecomposition} refers to the spectral tensor decomposition method of~\citet{anandkumar2012method}.}
\label{alg:spectral.pomdp}
\end{algorithm}
\footnotetext{Each column of $O^{(l)}$ corresponds to $\ell 1$-closest column of $O^{(l^*)}$  }

\paragraph{Empirical estimates of POMDP parameters.}
In practice, $M_2^{(l)}$ and $M_3^{(l)}$ are not available and need to be estimated from samples. Given a trajectory of $N$ steps obtained executing policy $\pi$, let $\T(l) = \{t\in[2,N-1]: a_t = l\}$ be the set of steps when action $l$ is played, then we collect all the triples $(a_{t-1}, \vec{y}_{t-1}, \vec{r}_{t-1})$, $(\vec{y}_t, \vec{r}_t)$ and $(\vec{y}_{t+1})$ for any $t\in\T(l)$ and construct the corresponding views $\vec{v}_{1,t}^{(l)}$, $\vec{v}_{2,t}^{(l)}$, $\vec{v}_{3,t}^{(l)}$. Then we symmetrize the views using empirical estimates of the covariance matrices and build the empirical version of Eqs.~\ref{eq:m2} and~\ref{eq:m3} using $N(l) = |\T(l)|$ samples, thus obtaining
\begin{align}
\wh{M}_2^{(l)} = \frac{1}{N(l)} \sum_{t\in\T_l} \wt{\vec{v}}_{1,t}^{(l)} \otimes \wt{\vec{v}}_{2,t}^{(l)}, \quad\quad \wh{M}_3^{(l)} = \frac{1}{N(l)} \sum_{t\in\T_l} \wt{\vec{v}}_{1,t}^{(l)} \otimes \wt{\vec{v}}_{2,t}^{(l)} \otimes \vec{v}_{3,t}^{(l)}.\label{eq:est.m}
\end{align}
%
Given the resulting $\wh{M}_2^{(l)}$ and $\wh{M}_3^{(l)}$, we apply the spectral tensor decomposition method to recover an empirical estimate of the third view $\wh{V}_3^{(l)}$ and invert Eq.~\ref{eq:expect.mod.views} (using estimated covariance matrices) to obtain $\wh{V}_2^{(l)}$. Finally, the estimates $\wh{f}_O$, $\wh{f}_T$, and $\wh{f}_R$ are obtained by plugging the estimated views $\wh{V}_\nu$ in the process described in Lemma~\ref{lem:pomdp.parameters}.\\
Spectral methods indeed recover the factor matrices up to a permutation of the hidden states. In this case, since we separately carry out spectral decompositions for different actions, we recover permuted factor matrices. Since the observation  matrix $O$ is common to all the actions, we use it to align these decompositions.
Let's define $\deltaO$
\begin{align*}
\deltaO=:~\min_{x,x'}\|f_O(\cdot|x)-f_O(\cdot|x')\|_1
\end{align*}

Actually, $\deltaO$ is the minimum separability level of matrix $O$. When the estimation error over columns of matrix O are less than $4\deltaO$, then one can come over the permutation issue by matching columns of $O^{l}$ matrices. In T condition is reflected as a condition that the number of samples for each action has to be larger some number.

 The overall method is summarized in Alg.~\ref{alg:spectral.pomdp}. The empirical estimates of the POMDP parameters enjoy the following guarantee.

\begin{theorem}[Learning Parameters]\label{thm:estimates}
Let $\wh{f}_O$, $\wh{f}_T$, and $\wh{f}_R$ be the estimated POMDP models using a trajectory of $N$ steps. We denote by $\sigma^{(l)}_{\nu,\nu'} = \sigma_X(K_{\nu,\nu'}^{(l)})$ the smallest non-zero singular value of the covariance matrix $K_{\nu,\nu'}$, with $\nu,\nu'\in\{1,2,3\}$, and by $\sigma_{\min}(V_\nu^{(l)})$ the smallest singular value of the view matrix $V_{\nu}^{(l)}$ (strictly positive under Asm.~\ref{asm:observation} and Asm.~\ref{asm:transition}), and we define $\omega_{\min}^{(l)} = \min_{x\in\X} \omega_\pi^{(l)}(x)$ (strictly positive under Asm.~\ref{asm:ergodicity}). If for any action $l\in[A]$, the number of samples $N(l)$ satisfies the condition
\begin{align}\label{eq:nl.condition}
N(l) \geq \max\bigg\{\frac{4}{(\sigma_{3,1}^{(l)})^2}, \frac{16C_O^2YR}{{\lambda^{(l)}}^2\deltaO^2}, \left(\frac{G(\pi)\frac{2\sqrt{2}+1}{1-\theta(\pi)}}{{\omega^{(l)}_{\min}\min\limits_{\nu\in\{1,2,3\}}\lbrace\sigma^2_{\min}(V^{(l)}_\nu)\rbrace}}\right)^2\Theta^{(l)}\bigg\} \log\Big(\frac{2(Y^2+AYR)}{\delta}\Big),
\end{align}
with $\Theta^{(l)}$, defined in Eq~\ref{eq:nl.condition1}\footnote{We do not report the explicit definition of $\Theta^{(l)}$ here because it contains exactly the same quantities, such as $\omega^{(l)}_{\min}$, that are already present in other parts of the condition of Eq.~\ref{eq:nl.condition}.}, and $G(\pi),\theta(\pi)$ are the geometric ergodicity and the contraction coefficients of the corresponding Markov chain induced by $\pi$, then for any $\delta\in(0,1)$ and for any state $i\in[X]$ and action $l\in[A]$ we have
%
%
\begin{align}\label{eq:obs.bound}
\|\wh{f}_O^{(l)}(\cdot|i) \!-\! f_O(\cdot|i)\|_1 \leq \B_O^{(l)} \!:=\frac{ C_O}{\lambda^{(l)}} \sqrt{\frac{YR\log(1/\delta)}{N(l)}},
\end{align}
\begin{align}\label{eq:rew.bound}
\|\wh{f}_R(\cdot|i,l) - f_R(\cdot|i,l)\|_1 \leq \B_R^{(l)} := \frac{C_R}{\lambda^{(l)}} \sqrt{\frac{YR\log(1/\delta)}{N(l)}},
\end{align}
\begin{align}\label{eq:transition.bound}
\| \wh{f}_T(\cdot|i,l) \!-\! f_T(\cdot|i,l) \|_2 \leq \B_T^{(l)} := \frac{C_T }{\lambda^{(l)}} \sqrt{\frac{YRX^2\log(1/\delta)}{N(l)}},
\end{align}
%
%
\noindent with probability $1-6(Y^2+AYR)A\delta$ (w.r.t. the randomness in the transitions, observations, and policy), where $C_O$, $C_R$, and $C_T$ are numerical constants and
\begin{align}\label{eq:lambda}
\lambda^{(l)} =\sigma_{\min}(O)(\pi_{\min}^{(l)})^2\sigma_{1,3}^{(l)} (\omega^{(l)}_{\min}\min\limits_{\nu\in\{1,2,3\}}\lbrace\sigma^2_{\min}(V^{(l)}_\nu)\rbrace)^{3/2}.
\end{align}
Finally, we denote by $\wh{f}_O$ the most accurate estimate of the observation model, i.e., the estimate $\wh{f}_O^{(l^*)}$ such that $l^* = \arg\min_{l\in[A]} \B_O^{(l)}$ and we denote by $\B_O$ its corresponding bound.
\end{theorem}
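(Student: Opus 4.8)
The plan is to propagate a single high-probability bound on the empirical moments through each stage of the recovery pipeline of Section~\ref{ss:recovery}, carefully tracking how every spectral constant enters the final errors. First I would establish concentration of the empirical covariances $\wh{K}_{\nu,\nu'}^{(l)}$ and of the empirical moments $\wh{M}_2^{(l)},\wh{M}_3^{(l)}$ around their population counterparts. The difficulty here is that the views $\{\vec{v}_{\nu,t}^{(l)}\}_{t\in\T(l)}$ are built from overlapping windows of a \emph{single} trajectory of the Markov chain $f_{T,\pi}$, so they are neither independent nor identically distributed and consecutive windows share observations and rewards. I would treat each matrix-valued empirical average as a bounded-difference function of the trajectory and control its dependence through the mixing coefficients of the chain, which under Asm.~\ref{asm:ergodicity} and the bound $\rho_{\text{mix},\pi}(t)\leq G(\pi)\theta^{t-1}(\pi)$ are summable with geometric series $G(\pi)/(1-\theta(\pi))$. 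Combining the concentration inequality of~\citet{kontorovich2008concentration} for functions of dependent sequences with the matrix Azuma inequality of~\citet{tropp2012user} applied to the associated Doob martingale, I would obtain an operator-norm deviation of order $\sqrt{\log((Y^2+AYR)/\delta)/N(l)}$ carrying the factor $G(\pi)(2\sqrt2+1)/(1-\theta(\pi))$ that appears in the third branch of condition~\eqref{eq:nl.condition}; avoiding a stationarity/burn-in assumption is exactly what this step buys.

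Next I would feed these moment perturbations into the robust tensor decomposition. Using Proposition~\ref{lem:spectral.decomposition} together with the perturbation guarantees of~\citet{anandkumar2012method}, I would bound $\|\wh{\mu}_{3,i}^{(l)}-\mu_{3,i}^{(l)}\|_2$ by the moment error rescaled by $\omega_{\min}^{(l)}$ and $\sigma_{\min}(V_\nu^{(l)})$, which explains why $\omega_{\min}^{(l)}\min_\nu\sigma_{\min}^2(V_\nu^{(l)})$ appears both in~\eqref{eq:nl.condition} and in $\lambda^{(l)}$. Recovering the second view by applying the empirical map $\wh{K}_{1,2}^{(l)}(\wh{K}_{3,2}^{(l)})^\dagger$ to $\wh{\mu}_{3,i}^{(l)}$ then introduces a pseudo-inverse perturbation controlled by $\sigma_{1,3}^{(l)}=\sigma_X(K_{1,3}^{(l)})$, and the first branch $N(l)\geq 4/(\sigma_{3,1}^{(l)})^2$ guarantees the estimated covariance stays bounded away from the singular regime so that the standard pseudo-inverse perturbation lemma applies. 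Because each action is decomposed independently, the columns of $\wh{V}_2^{(l)},\wh{V}_3^{(l)}$ are recovered only up to an unknown, action-dependent permutation; I would resolve this using the separability $\deltaO=\min_{x,x'}\|f_O(\cdot|x)-f_O(\cdot|x')\|_1$, since once the per-column $\ell_1$ error of $\wh{f}_O^{(l)}$ is below $\deltaO/4$, matching each column of $O^{(l)}$ to its $\ell_1$-nearest column of $O^{(l^*)}$ is unique. This is precisely the role of the middle branch $N(l)\geq 16 C_O^2 YR/({\lambda^{(l)}}^2\deltaO^2)$.

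Finally I would push the aligned column errors through the closed-form recovery of Lemma~\ref{lem:pomdp.parameters}. The reward density~\eqref{eq:rew.recovery} and observation density~\eqref{eq:obs.recovery} are (reweighted) partial sums of the entries of $\wh{V}_2^{(l)}$, so their $\ell_1$ errors follow from the $\ell_2$ column error by a Cauchy--Schwarz factor $\sqrt{YR}$, with the $1/f_\pi(l|\cdot)$ reweighting contributing the $(\pi_{\min}^{(l)})^2$ term of $\lambda^{(l)}$; this yields~\eqref{eq:obs.bound} and~\eqref{eq:rew.bound}. For the transition density~\eqref{eq:transition.recovery.inv}, the estimate $\wh{O}^\dagger$ both amplifies the error by $1/\sigma_{\min}(O)$ and carries the perturbation $\|\wh{O}-O\|$ aggregated over its $X$ columns, which together with measuring in $\ell_2$ over $\Re^X$ produces the additional $X^2$ factor in~\eqref{eq:transition.bound}. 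Collecting all constants into $\lambda^{(l)}$ as in~\eqref{eq:lambda} and taking a union bound over the $A$ actions and the $O(Y^2+AYR)$ martingale/entry events gives the stated failure probability $6(Y^2+AYR)A\delta$. I expect the first step---the matrix concentration for the dependent, overlapping views---to be the main obstacle, since it is where the Markovian dependence must be converted into a clean operator-norm bound, while the remaining steps are careful but standard perturbation bookkeeping.
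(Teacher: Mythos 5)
Your proposal follows essentially the same route as the paper's own proof: Kontorovich-type concentration for dependent sequences combined with matrix Azuma to control $\wh{K}_{\nu,\nu'}^{(l)}$, $\wh{M}_2^{(l)}$, $\wh{M}_3^{(l)}$ without a stationarity assumption, the robust tensor decomposition perturbation bounds of Anandkumar et al.\ rescaled by $\omega_{\min}^{(l)}$ and $\sigma_{\min}(V_\nu^{(l)})$, pseudo-inverse perturbation (justified by the first branch of the sample condition) to recover $\wh{V}_2^{(l)}$ consistently, $\deltaO/4$-based column matching for the permutation ambiguity (the middle branch), Cauchy--Schwarz $\sqrt{YR}$ conversion with $(\pi_{\min}^{(l)})^{-2}$ reweighting for $\wh{f}_R$ and $\wh{f}_O$, and the $\wh{O}^\dagger$ perturbation with Frobenius aggregation over $X$ columns producing the extra $X^2$ factor for $\wh{f}_T$, finished by a union bound over actions and moment events. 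You correctly identify the role of every branch of condition~\eqref{eq:nl.condition} and every constituent of $\lambda^{(l)}$, so the plan is sound as written.
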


\noindent\paragraph{Remark 1 (consistency and dimensionality).}
All previous errors decrease with a rate $\wt{O}(1/\sqrt{N(l)})$, showing the consistency of the spectral method, so that if all the actions are repeatedly tried over time, the estimates converge to the true parameters of the POMDP. This is in contrast with EM-based methods which typically get stuck in local maxima and return biased estimators, thus preventing from deriving confidence intervals.

The bounds in Eqs.~\ref{eq:obs.bound}, \ref{eq:rew.bound}, \ref{eq:transition.bound} on $\wh{f}_O$, $\wh{f}_R$ and  $\wh{f}_T$ depend on $X$, $Y$, and $R$ (and the number of actions only appear in the probability statement). The bound in Eq.~\ref{eq:transition.bound} on  $\wh{f}_T$ is worse than the bounds for $\wh{f}_R$ and $\wh{f}_O$ in Eqs.~\ref{eq:obs.bound}, \ref{eq:rew.bound} by a factor of $X^2$.
 This seems unavoidable since $\wh{f}_R$ and $\wh{f}_O$ are the results of the manipulation of the matrix $V_2^{(l)}$ with $Y\cdot R$ columns, while estimating  $\wh{f}_T$ requires working on both $V_2^{(l)}$ and $V_3^{(l)}$. In addition, to come up with upper bound for $\wh{f}_T$, more complicated bound derivation is needed and it has one step of Frobenious norms to $\ell2$ norm transformation. The derivation procedure for  $\wh{f}_T$ is more complicated compared to $\wh{f}_O$ and $\wh{f}_R$ and adds the term $X$ to the final bound. (Appendix.~\ref{app:proof1})

\noindent\paragraph{Remark 2 (POMDP parameters and policy $\pi$).}
In the previous bounds, several terms depend on the structure of the POMDP and the policy $\pi$ used to collect the samples:
\begin{itemize}
\item $\lambda^{(l)}$ captures the main problem-dependent terms. While $K_{1,2}$ and $K_{1,3}$ are full column-rank matrices (by Asm.~\ref{asm:observation} and~\ref{asm:transition}), their smallest non-zero singular values influence the accuracy of the (pseudo-)inversion in the construction of the modified views in Eq.~\ref{eq:mod.views} and in the computation of the second view from the third using Eq.~\ref{eq:expect.mod.views}. Similarly the presence of $\sigma_{\min}(O)$ is justified by the pseudo-inversion of $O$ used to recover the transition tensor in Eq.~\ref{eq:transition.recovery.inv}. Finally, the dependency on the smallest singular values $\sigma^2_{\min}(V^{(l)}_\nu)$ is due to the tensor decomposition method (see App.~\ref{s:whitening} for more details).
\item A specific feature of the bounds above is that they do not depend on the state $i$ and the number of times it has been explored. Indeed, the inverse dependency on $\omega_{\min}^{(l)}$ in the condition on $N(l)$ in Eq.~\ref{eq:nl.condition} implies that if a state $j$ is poorly visited, then the empirical estimate of any other state $i$ may be negatively affected. This is in striking contrast with the fully observable case where the accuracy in estimating, e.g., the reward model in state $i$ and action $l$, simply depends on the number of times that state-action pair has been explored, even if some other states are never explored at all. This difference is intrinsic in the partial observable nature of the POMDP, where we reconstruct information about the states (i.e., reward, transition, and observation models) only from indirect observations. As a result, in order to have accurate estimates of the POMDP structure, we need to rely on the policy $\pi$ and the ergodicity of the corresponding Markov chain to guarantee that the whole state space is covered.
\item Under Asm.~\ref{asm:ergodicity} the Markov chain $f_{T,\pi}$ is ergodic for any $\pi\in\calP$. Since no assumption is made on the fact that the samples generated from $\pi$ being sampled from the stationary distribution, the condition on $N(l)$ depends on how fast the chain converge to $\omega_\pi$ and this is characterized by the parameters $G(\pi)$ and $\theta(\pi)$.
\item If the policy is deterministic, then some actions would not be explored at all, thus leading to very inaccurate estimations (see e.g., the dependency on $f_\pi(l|\vec{y})$ in Eq.~\ref{eq:obs.recovery}). The inverse dependency on $\pi_{\min}$ (defined in $\calP$) accounts for the amount of exploration assigned to every actions, which determines the accuracy of the estimates. Furthermore, notice that also the singular values $\sigma_{1,3}^{(l)}$ and $\sigma_{1,2}^{(l)}$ depend on the distribution of the views, which in turn is partially determined by the policy $\pi$.
\end{itemize}

Notice that the first two terms are basically the same as in the bounds for spectral methods applied to HMM~\citep{song2013nonparametric}, while the dependency on $\pi_{\min}$ is specific to the POMDP case. On the other hand, in the analysis of HMMs usually there is no dependency on the parameters $G$ and $\theta$ because the samples are assumed to be drawn from the stationary distribution of the chain. Removing this assumption required developing novel results for the tensor decomposition process itself using extensions of matrix concentration inequalities for the case of Markov chain (not yet in the stationary distribution). The overall analysis is reported in App.~\ref{app:ConBound} and~\ref{s:whitening}. It worth to note that, \cite{kontorovich2013learning}, without stationary assumption, proposes new method to learn the transition matrix of HMM model given factor matrix $O$, and it provides theoretical bound over estimation errors.




\section{Spectral \ucrl}\label{s:learning}

The most interesting aspect of the estimation process illustrated in the previous section is that it can be applied when samples are collected using any policy $\pi$ in the set $\calP$. As a result, it can be integrated into any exploration-exploitation strategy where the policy changes over time in the attempt of minimizing the regret.

\begin{algorithm}[t]
\begin{small}
\setstretch{1.1}
\begin{algorithmic}
\STATE \textbf{Input:} Confidence $\delta'$
\STATE \textbf{Variables:}
\STATE $\quad$ Number of samples $N^{(k)}(l)$
\STATE $\quad$ Estimated observation, reward, and transition models $\wh{f}^{(k)}_O$, $\wh{f}^{(k)}_R$, $\wh{f}^{(k)}_T$
\STATE \textbf{Initialize:} $t=1$, initial state $x_1$, $\delta = \delta'/N^6$, $k=1$
\WHILE{$t < N$}
\STATE Compute the estimated POMDP $\wh{M}^{(k)}$ with the Alg.~\ref{alg:spectral.pomdp} using $N^{(k)}(l)$ samples per action
\STATE Compute the set of admissible POMDPs $\M^{(k)}$ using bounds in Thm.~\ref{thm:estimates}
\STATE Compute the optimistic policy
$\wt{\pi}^{(k)} = \arg\max\limits_{\pi\in\calP}\max\limits_{M\in\M^{(k)}} \eta(\pi; M)$
\STATE Set $v^{(k)}(l) = 0$ for all actions $l\in[A]$
\WHILE{$\forall l\in[A], v^{(k)}(l) < 2N^{(k)}(l)$}
\STATE Execute $a_t \sim f_{\wt{\pi}^{(k)}}(\cdot|\vec{y}_t)$
\STATE Obtain reward $\vec{r}_t$, observe next observation $\vec{y}_{t+1}$, and set $t=t+1$
\ENDWHILE
\STATE Store $N^{(k+1)}(l) = \max_{k'\leq k} v^{(k')}(l)$ samples for each action $l\in[A]$
\STATE Set $k=k+1$
\ENDWHILE
\end{algorithmic}
\caption{The \smucrl algorithm.}
\label{alg:sm.ucrl}
\end{small}
\end{algorithm}

\paragraph{The algorithm.} The \smucrl algorithm illustrated in Alg.~\ref{alg:sm.ucrl} is the result of the integration of the spectral method into a structure similar to \ucrl~\citep{jaksch2010near-optimal} designed to optimize the exploration-exploitation trade-off. The learning process is split into episodes of increasing length. At the beginning of each episode $k>1$ (the first episode is used to initialize the variables), an estimated POMDP $\wh{M}^{(k)} = (X, A, Y, R, \wh{f}_T^{(k)}, \wh{f}_R^{(k)}, \wh{f}_O^{(k)})$ is computed using the spectral method of Alg.~\ref{alg:spectral.pomdp}.
Unlike in \ucrl, \smucrl cannot use all the samples from past episodes. In fact, the distribution of the views $\vec{v}_1, \vec{v}_2, \vec{v}_3$ depends on the policy used to generate the samples. As a result, whenever the policy changes, the spectral method should be re-run using only the samples collected by that specific policy. Nonetheless we can exploit the fact that the spectral method is applied to each action separately. In \smucrl at episode $k$ for each action $l$ we use the samples coming from the past episode which returned the largest number of samples for that action. Let $v^{(k)}(l)$ be the number of samples obtained during episode $k$ for action $l$, we denote by $N^{(k)}(l)  = \max_{k'< k} v^{(k')}(l)$ the largest number of samples available from past episodes for each action separately and we feed them to the spectral method to compute the estimated POMDP $\wh{M}^{(k)}$ at the beginning of each episode $k$.

Given the estimated POMDP $\wh{M}^{(k)}$ and the result of Thm.~\ref{thm:estimates}, we construct the set $\M^{(k)}$ of \textit{admissible} POMDPs $\wt{M} = \langle \X, \A, \Y, \R, \wt{f}_T, \wt{f}_R, \wt{f}_O\rangle$ whose transition, reward, and observation models belong to the confidence intervals (e.g., $\|\wh{f}^{(k)}_O(\cdot|i) \!-\! \wt{f}_O(\cdot|i)\|_1 \leq \B_O$ for any state $i$).
By construction, this guarantees that the true POMDP $M$ is included in $\M^{(k)}$ with high probability.
Following the \textit{optimism in face of uncertainty} principle used in \ucrl, we compute the optimal memoryless policy corresponding to the most optimistic POMDP within $\M^{(k)}$. More formally, we compute\footnote{The computation of the optimal policy (within $\calP$) in the optimistic model may not be trivial. Nonetheless, we first notice that given an horizon $N$, the policy needs to be recomputed at most $O(\log N)$ times (i.e., number of episodes). Furthermore, if an optimization oracle to $\eta(\pi; M)$ for a given POMDP $M$ is available, then it is sufficient to randomly sample multiple POMDPs from $\M^{(k)}$ (which is a computationally cheap operation), find their corresponding best policy, and return the best among them. If \textit{enough} POMDPs are sampled, the additional regret caused by this approximately optimistic procedure can be bounded as $\wt{O}(\sqrt{N})$.}
\begin{align}\label{eq:optimistic.policy}
\wt{\pi}^{(k)} = \arg\max_{\pi\in\calP}\max_{M\in\M^{(k)}} \eta(\pi; M).
\end{align}
Intuitively speaking, the optimistic policy implicitly balances exploration and exploitation. Large confidence intervals suggest that $\wh{M}^{(k)}$ is poorly estimated and further exploration is needed. Instead of performing a purely explorative policy, \smucrl still exploits the current estimates to construct the set of admissible POMDPs and selects the policy that maximizes the performance $\eta(\pi; M)$ over all POMDPs in $\M^{(k)}$. The choice of using the optimistic POMDP guarantees the $\wt{\pi}^{(k)}$ explores more often actions corresponding to large confidence intervals, thus contributing the improve the estimates over time. After computing the optimistic policy, $\wt{\pi}^{(k)}$ is executed until the number of samples for one action is doubled, i.e., $v^{(k)}(l) \geq 2N^{(k)}(l)$. This stopping criterion avoids switching policies too often and it guarantees that when an episode is terminated, enough samples are collected to compute a new (better) policy. This process is then repeated over episodes and we expect the optimistic policy to get progressively closer to the best policy $\pi^+\in\calP$ as the estimates of the POMDP get more and more accurate.


\paragraph{Regret analysis.}
We now study the regret \smucrl w.r.t.\ the best policy in $\calP$. While in general $\pi^+$ may not be optimal, $\pi_{\min}$ is usually set to a small value and oftentimes the optimal memoryless policy itself is stochastic and it may actually be contained in $\calP$. Given an horizon of $\TN$ steps, the regret is defined as
\begin{align}\label{eq:regret}
\text{Reg}_\TN = \TN\eta^+ - \sum_{t=1}^\TN r_t,
\end{align}
where $r_t$ is the random reward obtained at time $t$ according to the reward model $f_R$ over the states traversed by the policies performed over episodes on the actual POMDP.
To restate, similar to the MDP case, the complexity of learning in a POMDP $M$ is partially determined by its diameter, defined as
\begin{align}\label{eq:diameter}
D:=\max_{x,x'\in\X,a,a'\in\A}\min_{\pi\in\calP}\mathbb{E}\big[\tau(x',a'|x,a; \pi)\big],
\end{align}
which corresponds to the expected passing time from a state $x$ to a state $x'$ starting with action $a$ and terminating with action $a'$ and following the most effective memoryless policy $\pi\in\calP$. The main difference w.r.t. to the diameter of the underlying MDP (see e.g.,~\cite{jaksch2010near-optimal}) is that it considers the distance between state-action pairs using memoryless policies instead of state-based policies. 

Before stating our main result, we introduce the worst-case version of the parameters characterizing Thm.~\ref{thm:estimates}. Let $\wb{\sigma}_{1,2,3} := \min\limits_{l\in[A]} \min\limits_{\pi\in\calP} \omega^{(l)}_{\min}\min\limits_{\nu\in\{1,2,3\}}\sigma^2_{\min}(V^{(l)}_\nu)$ be the worst smallest non-zero singular value of the views for action $l$ when acting according to policy $\pi$ and let $\wb{\sigma}_{1,3} := \min\limits_{l\in[A]} \min\limits_{\pi\in\calP} \sigma_{\min}(K_{1,3}^{(l)}(\pi))$ be the worst smallest non-zero singular value of the covariance matrix $K_{1,3}^{(l)}(\pi)$ between the first and third view for action $l$ when acting according to policy $\pi$. Similarly, we define $\wb{\sigma}_{1,2}$. We also introduce $\wb{\omega}_{\min} := \min\limits_{l\in[A]}\min\limits_{x\in[X]}\min\limits_{\pi\in\calP} \omega_{\pi}^{(l)}(x)$ and 
\begin{align}\label{eq:nl.condition.max}
\wb{N} := \max_{l\in[A]}\max_{\pi\in\calP}\max
\bigg\{ \frac{4}{(\wb{\sigma}_{3,1}^2)}, \frac{16C_O^2YR}{{\lambda^{(l)}}^2\deltaO^2},\left(\frac{G(\pi)\frac{2\sqrt{2}+1}{1-\theta(\pi)}}{\wb{\omega}_{\min}\wb{\sigma}_{1,2,3}}\right)^2\wb{\Theta}^{(l)}\bigg\} \log\bigg(2\frac{(Y^2+AYR)}{\delta}\bigg),
\end{align}
which is a sufficient number of samples for the statement of Thm.~\ref{thm:estimates} to hold for any action and any policy. Here $\wb{\Theta}^{(l)}$ is also model related parameter which is defined in Eq.~\ref{eq:thetabar}. Then we can prove the following result.


\begin{theorem}[Regret Bound]\label{thm:regret}
Consider a POMDP $M$ with $X$ states, $A$ actions, $Y$ observations, $R$ rewards, characterized by a diameter $D$ and with an observation matrix $O\in\Re^{Y\times X}$ with smallest non-zero singular value $\sigma_X(O)$. We consider the policy space $\calP$, such that the worst smallest non-zero value is $\wb{\sigma}_{1,2,3}$ (resp. $\wb{\sigma}_{1,3}$) and the worst smallest probability to reach a state is $\wb{\omega}_{\min}$.
If \smucrl is run over $N$ steps and the confidence intervals of Thm.~\ref{thm:estimates} are used with $\delta = \delta'/N^{6}$ in constructing the plausible POMDPs $\wt{\mathcal M}$, then under Asm.~\ref{asm:ergodicity}, \ref{asm:observation}, and~\ref{asm:transition} it suffers from a total regret
%
\begin{align}\label{eq:regret.bound1}
\text{Reg}_N\leq C_1\frac{r_{\max}}{\wb{\lambda}}DX^{3/2}\sqrt{AYRN\log (N/\delta')}
\end{align}
with probability $1-\delta'$, where $C_1$ is numerical constants, and $\wb{\lambda}$ is the worst-case equivalent of Eq.~\ref{eq:lambda} defined as
\begin{align}\label{eq:worst.lambda}
\wb{\lambda} = \sigma_{\min}(O)\pi_{\min}^2\wb{\sigma}_{1,3} \wb{\sigma}_{1,2,3}^{3/2}
\end{align}

\end{theorem}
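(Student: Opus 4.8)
The plan is to follow the episodic optimism-in-the-face-of-uncertainty analysis of \ucrl, adapting every step to the POMDP estimation guarantees of Thm.~\ref{thm:estimates}. First I would partition the horizon into the $K$ episodes produced by the doubling stopping rule $v^{(k)}(l)\geq 2N^{(k)}(l)$; since each episode forces at least one action count to double, one has $K=O(A\log N)$, which keeps all union bounds and additive lower-order terms logarithmic. I would then condition on the \emph{good event} $\mathcal{E}$ that the true POMDP $M$ belongs to every confidence set $\M^{(k)}$. By Thm.~\ref{thm:estimates} applied with $\delta=\delta'/N^{6}$, each per-episode failure probability is polynomially small, and a union bound over the $K$ episodes and the $6(Y^2+AYR)A$ factor gives $\Prob(\mathcal{E})\geq 1-\delta'$; the choice $N^{6}$ is exactly what absorbs these polynomial factors. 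On $\mathcal{E}$, optimism yields $\eta(\wt\pi^{(k)};\wt{M}^{(k)})\geq\eta^+$ for every $k$, so the regret is bounded by $\sum_k\sum_{t\in\text{ep }k}\big(\eta(\wt\pi^{(k)};\wt{M}^{(k)})-r_t\big)$.

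Next I would split each per-step term into a model-mismatch part and a stochastic-deviation part,
\begin{align*}
\eta(\wt\pi^{(k)};\wt{M}^{(k)})-r_t=\underbrace{\big(\eta(\wt\pi^{(k)};\wt{M}^{(k)})-\eta(\wt\pi^{(k)};M)\big)}_{\text{model mismatch}}+\underbrace{\big(\eta(\wt\pi^{(k)};M)-r_t\big)}_{\text{deviation}}.
\end{align*}
The deviation part summed over an episode separates into a reward martingale $\sum_t(\wb{r}(x_t,a_t)-r_t)$, controlled by Azuma's inequality for an $\wt O(r_{\max}\sqrt{N})$ contribution, and a transient part measuring the gap between the time-average of $\wb{r}(x_t,a_t)$ and the stationary value $\eta(\wt\pi^{(k)};M)$, controlled through the mixing of the chain induced by $\wt\pi^{(k)}$ and bounded using the diameter $D$. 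Both are lower order relative to the model-mismatch contribution.

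The core of the argument, and the step I expect to be the main obstacle, is bounding the model-mismatch term $|\eta(\wt\pi^{(k)};\wt{M}^{(k)})-\eta(\wt\pi^{(k)};M)|$ by the confidence widths of Thm.~\ref{thm:estimates}. Writing $\eta(\pi;M)=\sum_x\omega_\pi(x)\wb{r}_\pi(x)$ for each model, I would separate the reward, observation, and transition contributions. The reward and observation errors enter $\wb{r}_\pi$ and are controlled directly by $\B_R$ and $\B_O$. The transition error is the delicate one: it perturbs the stationary distribution $\omega_{\wt\pi^{(k)}}$ of the induced chain $f_{T,\pi}$, and bounding $\|\wt\omega_\pi-\omega_\pi\|_1$ requires Markov-chain perturbation theory. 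I would carry this out through the Poisson-equation / bias-function route used for \ucrl, where the sensitivity of the average reward to a perturbation of the transition kernel is governed by the span of the bias function, itself bounded by $r_{\max}D$ using the POMDP diameter of Eq.~\ref{eq:diameter}. Converting the $\ell_2$ transition bound $\B_T\sim\sqrt{YRX^2/N(l)}$ into the $\ell_1$ quantity needed here costs a further $\sqrt{X}$ factor via $\|\cdot\|_1\leq\sqrt{X}\|\cdot\|_2$ over the $X$ arrival states, which is precisely what produces the $X^{3/2}$ in the final bound.

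Finally I would sum over episodes and actions. On $\mathcal{E}$ the per-step mismatch attributable to action $l$ in episode $k$ is at most $O\big(\tfrac{r_{\max}D}{\wb{\lambda}}X^{3/2}\sqrt{YR/N^{(k)}(l)}\big)$, contributing $v^{(k)}(l)$ times this quantity. The doubling rule supplies the standard telescoping bound $\sum_k v^{(k)}(l)/\sqrt{N^{(k)}(l)}=O(\sqrt{N(l)})$, and Cauchy--Schwarz gives $\sum_l\sqrt{N(l)}\leq\sqrt{A\sum_l N(l)}\leq\sqrt{AN}$, yielding the claimed $\wt O\big(\tfrac{r_{\max}}{\wb{\lambda}}DX^{3/2}\sqrt{AYRN}\big)$ scaling. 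Replacing the per-episode $\lambda^{(l)}$, $\sigma$, and $\omega_{\min}$ factors by their worst-case counterparts $\wb{\lambda}$, $\wb\sigma$, $\wb\omega_{\min}$ (the reason the statement is phrased with these) makes the bound uniform over all policies played across episodes, and collecting the $\log(N/\delta')$ from the confidence intervals produces Eq.~\ref{eq:regret.bound1} with probability $1-\delta'$.
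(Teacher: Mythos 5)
Your overall architecture is the paper's: optimism over the admissible set $\M^{(k)}$, a Poisson-equation decomposition with the bias span controlled by the POMDP diameter, the $\ell_2\!\to\!\ell_1$ conversion of the transition bound costing $\sqrt{X}$ (hence $X^{3/2}$), and the doubling rule plus Cauchy--Schwarz giving $\sqrt{AN}$. But there are two genuine gaps. The first is in how you invoke the diameter. Your decomposition inserts $\eta(\wt{\pi}^{(k)};M)$, so both of your pieces --- the stationary-distribution perturbation in the mismatch term and the transient in the deviation term --- are naturally governed by the bias function of the \emph{played} policy $\wt{\pi}^{(k)}$ in the \emph{true} model $M$. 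The diameter of Eq.~\ref{eq:diameter} does not bound that quantity: $D$ takes a minimum over $\pi\in\calP$, so it bounds passage times under the best policy, not under an arbitrary (possibly poor) optimistic policy; the bias span of $(\wt{\pi}^{(k)},M)$ can only be controlled through the mixing constants $G(\pi),\theta(\pi)$, uniformly over $\calP$. Nor is the span of the bias in the optimistic model bounded by $r_{\max}D$ ``by definition'' --- the diameter of $\wt{M}^{(k)}$ is a priori unrelated to $D$. The paper's Step 5 closes exactly this hole with the augmented-POMDP argument: because the true $M$ (of diameter $D$) lies in $\M^{(k)}$ and $(\wt{\pi}^{(k)},\wt{M}^{(k)})$ is jointly optimal over the augmented action space $\calP^{(\M^{(k)})}$, the optimal bias $Q_{\wt{\pi}^{(k)},\wt{M}^{(k)}}$ has span at most $r_{\max}D$ (Eq.~\ref{eq:dinequ}). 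The paper's decomposition deliberately avoids $\eta(\wt{\pi}^{(k)};M)$ altogether: it expands $\wt{\eta}^{(k)}-\wb{r}(x_t,l_t)$ through the Poisson equation of the \emph{optimistic} pair, so only that diameter-bounded bias ever appears, and the residual $\zeta^{(k)}$ is a genuine martingale with range $r_{\max}D$ handled by Azuma. To repair your version you must rewrite the perturbation identity so that the optimistic bias (not the true-model bias) multiplies the kernel error, and supply the augmented-POMDP span argument.

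The second gap is the burn-in. The confidence intervals of Thm.~\ref{thm:estimates} are valid only once each action has $N(l)$ samples exceeding the threshold in Eq.~\ref{eq:nl.condition}; before that, the sets $\M^{(k)}$ carry no guarantee whatsoever, so your good event $\mathcal{E}$ cannot be established by a union bound over all episodes --- in the early episodes the spectral perturbation analysis simply does not apply, and optimism ($\wt{\eta}^{(k)}\geq\eta^+$) may fail deterministically. The paper's Step 2 handles this: it bounds, via per-action return times $\tau_M^{(l)}$, a Markov-inequality argument and Chernoff--Hoeffding, the episode length $\wb{v}$ and the number of episodes $\wt{K}\leq A\log_2(\wb{v})+1$ needed before the condition Eq.~\ref{eq:nl.condition.max} is met for all actions, and pays the corresponding additive regret $4Ar_{\max}(\wb{v}+1)$ (Eq.~\ref{eq:constantterm}), which is constant in $N$ and absorbed into the final bound. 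Your proposal needs an analogous step; the claim that $\delta=\delta'/N^6$ alone yields $\Prob(\mathcal{E})\geq 1-\delta'$ from episode one is not justified. Apart from these two points, your telescoping $\sum_k v^{(k)}(l)/\sqrt{N^{(k)}(l)}=O(\sqrt{N(l)})$ via $N^{(k)}(l)\geq v^{(k)}(l)/2$ and the worst-case replacement of $\lambda^{(k,l)}$ by $\wb{\lambda}$ match the paper's Steps 7--8.
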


\noindent\paragraph{Remark 1 (comparison with MDPs).}
If \ucrl could be run directly on the underlying MDP (i.e., as if the states where directly observable), then it would obtain a regret~\citep{jaksch2010near-optimal}
\begin{align*}
\text{Reg}_N\leq C_{\text{MDP}}D_{\text{MDP}}X\sqrt{AN\log{N}},
\end{align*}
where
\begin{align*}
D_{\text{MDP}}:=\max_{x,x'\in\X}\min_{\pi}\mathbb{E}[\tau(x'|x; \pi)],
\end{align*}
with high probability. We first notice that the regret is of order $\wt{O}(\sqrt{N})$ in both MDP and POMDP bounds. This means that despite the complexity of POMDPs, \smucrl has the same dependency on the number of steps as in MDPs and it has a vanishing per-step regret. Furthermore, this dependency is known to be minimax optimal. The diameter $D$ in general is larger than its MDP counterpart $D_{\text{MDP}}$, since it takes into account the fact that a memoryless policy, that can only work on observations, cannot be as efficient as a state-based policy in moving from one state to another. Although no lower bound is available for learning in POMDPs, we believe that this dependency is unavoidable since it is strictly related to the partial observable nature of POMDPs.

\noindent\paragraph{Remark 2 (dependency on POMDP parameters).}
The dependency on the number of actions is the same in both MDPs and POMDPs. On the other hand, moving to POMDPs naturally brings the dimensionality of the observation and reward models ($Y$,$X$, and $R$ respectively) into the bound. The dependency on $Y$ and $R$ is directly inherited from the bounds in Thm.~\ref{thm:estimates}. The term $X^{3/2}$ is indeed the results of two terms; $X$ and $X^{1/2}$. The first term is the same as in MDPs, while the second comes from the fact that the transition tensor is derived from Eq.~ \ref{eq:transition.recovery.inv}. Finally, the term $\wb{\lambda}$ in Eq.~\ref{eq:regret.bound1} summarizes a series of terms which depend on both the policy space $\calP$ and the POMDP structure. These terms are directly inherited from the spectral decomposition method used at the core of \smucrl and, as discussed in Sect.~\ref{s:learning.pomdp}, they are due to the partial observability of the states and the fact that all (unobservable) states need to be visited often enough to be able to compute accurate estimate of the observation, reward, and transition models. 

\noindent\paragraph{Remark 3 (computability of the confidence intervals).} While it is a common assumption that the dimensionality $X$ of the hidden state space is known as well as the number of actions, observations, and rewards, it is not often the case that the terms $\lambda^{(l)}$ appearing in Thm.~\ref{thm:estimates} are actually available. While this does not pose any problem for a \textit{descriptive} bound as in Thm.~\ref{thm:estimates}, in \smucrl we actually need to compute the bounds $\B_O^{(l)}$, $\B_R^{(l)}$, and $\B_T^{(l)}$ to explicitly construct confidence intervals. This situation is relatively common in many exploration--exploitation algorithms that require computing confidence intervals containing the range of the random variables or the parameters of their distributions in case of sub-Gaussian variables. In practice these values are often replaced by parameters that are tuned by hand and set to much smaller values than their theoretical ones. As a result, we can run \smucrl with the terms $\lambda^{(l)}$ replaced by a fixed parameter. Notice that any inaccurate choice in setting $\lambda^{(l)}$ would mostly translate into bigger multiplicative constants in the final regret bound or in similar bounds but with smaller probability. \\
In general, computing confidence bound is a hard problem, even for simpler cases such as Markov chains \cite{hsu2015mixing}. Therefore finding upper confidence bounds for POMDP is challenging if we do not know its mixing properties. As it mentioned, another parameter is needed to compute upper confidence bound is $\lambda^{(l)}$ \ref{eq:lambda}.  As it is described in, in practice, one can replace the coefficient $\lambda^{(l)}$ with some constant which causes bigger multiplicative constant in final regret bound. Alternatively, one can estimate $\lambda^{(l)}$ from data. In this case, we  add a lower order term to the regret which decays as $\frac{1}{N}$. 

\noindent\paragraph{Remark 4 (relaxation on assumptions).}
Both Thm.~\ref{thm:estimates} and~\ref{thm:regret} rely on the observation matrix $O\in\Re^{Y\times X}$ being full column rank (Asm.~\ref{asm:observation}). As discussed in Sect.~\ref{s:preliminaries} may not be verified in some POMDPs where the number of states is larger than the number of observations ($X>Y$). Nonetheless, it is possible to correctly estimate the POMDP parameters when $O$ is not full column-rank by exploiting the additional information coming from the reward and action taken at step $t+1$. In particular, we can use the triple $(a_{t+1}, \vec{y}_{t+1}, r_{t+1})$ and redefine the third view $V^{(l)}_3\in\Re^{d\times X}$ as
\begin{align*}
[V_3^{(l)}]_{s,i}&=\Prob(\vec{v}_3^{(l)}=\vec{e}_s |x_2=i,a_2=l) = [V_3^{(l)}]_{(n,m,k),i}\\
&= \Prob(\vecy_3=\vec{e}_n,\vec{r}_3=\vec{e}_m,a_3=k|x_2=i,a_2=l),
\end{align*}
and replace Asm.~\ref{asm:observation} with the assumption that the view matrix $V^{(l)}_3$ is full column-rank, which basically requires having rewards that jointly with the observations are informative enough to reconstruct the hidden state. While this change does not affect the way the observation and the reward models are recovered in Lemma~\ref{lem:pomdp.parameters}, (they only depend on the second view $V^{(l)}_2$), for the reconstruction of the transition tensor, we need to write the third view $V_3^{(l)}$ as
\begin{align*}
&[V_3^{(l)}]_{s,i} = [V_3^{(l)}]_{(n,m,k),i}\\
&= \sum_{j=1}^X \Prob\big(\vecy_3=\vec{e}_n,\vec{r}_3=\vec{e}_m,a_3=k|x_2=i,a_2=l, x_3=j\big) \Prob\big(x_3=j|x_2=i,a_2=l\big)\\
&= \sum_{j=1}^X \Prob\big(\vec{r}_3=\vec{e}_m|x_3=j,a_3=k) \Prob(a_3=k|\vec{y}_3=\vec{e}_n\big) \Prob\big(\vecy_3=\vec{e}_n|x_3=j\big)\Prob\big(x_3=j|x_2=i,a_2=l\big) \\
&= f_\pi(k| \vec{e}_n) \sum_{j=1}^X f_R(\vec{e}_m | j, k) f_O(\vec{e}_n|j) f_T(j| i,l),
\end{align*}
\noindent where we factorized the three components in the definition of $V_3^{(l)}$ and used the graphical model of the POMDP to consider their dependencies. We introduce an auxiliary matrix $W\in\Re^{d\times X}$ such that
\begin{align*}
[W]_{s,j} = [W]_{(n,m,k), j} = f_\pi(k| \vec{e}_n) f_R(\vec{e}_m | j, k) f_O(\vec{e}_n|j),
\end{align*}
which contain all known values, and for any state $i$ and action $l$ we can restate the definition of the third view as
\begin{align}\label{eq:transition.recovery2}
W [T]_{i,:,l} = [V_3^{(l)}]_{:,i},
\end{align}
which allows computing the transition model as $[T]_{i,:,l} = W^\dagger [V_3^{(l)}]_{:,i}$, where $W^\dagger$ is the pseudo-inverse of $W$. While this change in the definition of the third view allows a significant relaxation of the original assumption, it comes at the cost of potentially worsening the bound on $\wh{f}_T$ in Thm.~\ref{thm:estimates}. In fact, it can be shown that
\begin{align}\label{eq:transition.bound2}
\| \wt{f}_T(\cdot|i,l) - f_T(\cdot|i,l) \|_F \!\leq\! \B_T' \!:= \!\!\!\max_{l'=1,\ldots,A}\frac{C_TAYR}{\lambda^{(l')}} \sqrt{\frac{XA\log(1/\delta)}{N(l')}}.
\end{align}
Beside the dependency on multiplication of $Y$, $R$, and $R$, which is due to the fact that now $V_3^{(l)}$ is a larger matrix, the bound for the transitions triggered by an action $l$ scales with the number of samples from the least visited action. This is due to the fact that now the matrix $W$ involves not only the action for which we are computing the transition model but all the other actions as well. As a result, if any of these actions is poorly visited, $W$ cannot be accurately estimated is some of its parts and this may negatively affect the quality of estimation of the transition model itself. This directly propagates to the regret analysis, since now we require all the actions to be repeatedly visited enough. The immediate effect is the introduction of a different notion of diameter. Let $\tau_{M,\pi}^{(l)}$ the mean passage time between two steps where action $l$ is chosen according to policy $\pi\in\calP$, we define
\begin{align}\label{eq:diameter-ratio}
\Dr=\max_{\pi\in\mathcal{P}}\frac{\max_{l\in\mathcal{A}}\tau_{M,\pi}^{(l)}}{\min_{l\in\mathcal{A}}\tau_{M,\pi}^{(l)}}
\end{align}
as the diameter ratio, which defines the ratio between maximum mean passing time between choosing an action and choosing it again, over its minimum. As it mentioned above, in order to have an accurate estimate of $f_T$ all actions need to be repeatedly explored. The $\Dr$ is small when each action is executed frequently enough and it is large when there is at least one action that is executed not as many as others. Finally, we obtain
\begin{align*}
\text{Reg}_N\leq \wt{O}\Big(\frac{r_{\max}}{\wb{\lambda}}\sqrt{YR\Dr N\log N}X^{{3}/{2}}A(D+1)\Big).
\end{align*}
While at first sight this bound is clearly worse than in the case of stronger assumptions, notice that $\wb{\lambda}$ now contains the smallest singular values of the newly defined views. In particular, as $V_3^{(l)}$ is larger, also the covariance matrices $K_{\nu,\nu'}$ are bigger and have larger singular values, which could significantly alleviate the inverse dependency on $\wb{\sigma}_{1,2}$ and $\wb{\sigma}_{2,3}$. As a result, relaxing Asm.~\ref{asm:observation} may not necessarily worsen the final bound since the bigger diameter may be compensated by better dependencies on other terms. We leave a more complete comparison of the two configurations (with or without Asm.~\ref{asm:observation}) for future work.




\section{Conclusion}\label{s:conclusions}

We introduced a novel RL algorithm for POMDPs which relies on a spectral method to consistently identify the parameters of the POMDP and an optimistic approach for the solution of the exploration--exploitation problem. For the resulting algorithm we derive confidence intervals on the parameters and a minimax optimal bound for the regret. 

This work opens several interesting directions for future development. \textbf{1)} \smucrl cannot accumulate samples over episodes since Thm.~\ref{thm:estimates} requires samples to be drawn from a fixed policy. While this does not have a very negative impact on the regret bound, it is an open question how to apply the spectral method to all samples together and still preserve its theoretical guarantees. \textbf{2)} While memoryless policies may perform well in some domains, it is important to extend the current approach to bounded-memory policies. \textbf{3)} The POMDP is a special case of the predictive state representation (PSR) model~\cite{littman2001predictive}, which allows representing more sophisticated dynamical systems. Given the spectral method developed in this paper, a natural extension is to apply it to the more general PSR model and integrate it with an exploration--exploitation algorithm to achieve bounded regret.

\newpage

\section{Table of Notation}\label{s:notation}

\begin{table}[htbp]
\begin{center}
\begin{small}
\renewcommand{\arraystretch}{1.1}
\begin{tabular}{r c p{12.3cm} }
\toprule

\multicolumn{3}{c}{\textbf{POMDP Notation} (Sect.~\ref{s:preliminaries})}\\
\midrule
$\vec{e}$ &  & indicator vector \\
$M$ &  & POMDP model\\
$\X, X, x, (i,j)$ &  & state space, cardinality, element, indices \\
$\Y, Y, \vec{y}, n$ &  & observation space, cardinality, indicator element, index \\
$\A, A, a, (l, k)$ &  & action space, cardinality, element, indices \\
$\R, R, r, \vec{r}, m, r_{\max}$ &  & reward space, cardinality, element, indicator element, index, largest value \\
$f_T(x'|x,a), T$ &  & transition density from state $x$ to state $x'$ given action $a$ and transition tensor\\  
$f_O(\vec{y}|x), O$ &  & observation density of indicator $\vec{y}$ given state $x$ and observation matrix\\  
$f_R(\vec{r}|x,a), \Gamma$ &  & reward density of indicator $\vec{r}$ given pair of state-action and reward tensor\\  
$\pi, f_\pi(a|\vec{y}), \Pi$ &  & policy, policy density of action $a$ given observation indicator $\vec{y}$ and policy matrix\\  
$\pi_{\min}$, $\calP$ &  & smallest element of policy matrix and set of stochastic memoryless policies\\  
$f_{\pi,T}(x'|x)$ &  & Markov chain transition density for policy $\pi$ on a POMDP with transition density $f_T$\\  
$\omega_\pi$, $\omega_\pi^{(l)}$ &  & stationary distribution over states given policy $\pi$ and conditional on action $l$ \\  
$\eta(\pi,M)$ &  & expected average reward of policy $\pi$ in POMDP $M$\\    
$\eta^+$ &  & best expected average reward over policies in $\calP$\\    
& & \\
\multicolumn{3}{c}{\textbf{POMDP Estimation Notation} (Sect.~\ref{s:learning.pomdp})}\\
\midrule
$\nu\in\{1,2,3\}$ & & index of the views\\
$\vec{v}_{\nu,t}^{(l)}$, $V_{\nu}^{(l)}$ &  & $\nu$th view and view matrix at time $t$ given $a_t=l$\\
$K_{\nu,\nu'}^{(l)}$, $\sigma_{\nu,\nu'}^{(l)}$ &  & covariance matrix of views $\nu, \nu'$ and its smallest non-zero singular value given action $l$\\
$M_2^{(l)}$, $M_3^{(l)}$ &  & second and third order moments of the views given middle action $l$\\  
$\wh{f}_O^{(l)}$, $\wh{f}_R^{(l)}$, $\wh{f}_T^{(l)}$ &  & estimates of observation, reward, and transition densities for action $l$\\  
$N$, $N(l)$ &  & total number of samples and number of samples from action $l$\\  
$C_O, C_R, C_T$ &  & numerical constants\\  
$\B_O$, $\B_R$, $\B_T$ &  & upper confidence bound over error of estimated $f_O$, $f_R$, $f_T$\\  
& & \\
\multicolumn{3}{c}{\textbf{\smucrl} (Sect.~\ref{s:learning})}\\
\midrule
$\text{Reg}_N$ & & cumulative regret \\
$D$ &  & POMDP diameter\\
$k$ & & index of the episode\\
$\wh{f}_T^{(k)}, \wh{f}_R^{(k)}, \wh{f}_O^{(k)}$, $\wh{M}^{(k)}$ & & estimated parameters of the POMDP at episode $k$ \\
$\M^{(k)}$ &  & set of plausible POMDPs at episode $k$\\
$v^{(k)}(l)$ &  & number of samples from action $l$ in episode $k$ \\
$N^{(k)}(l)$ &  & maximum number of samples from action $l$ over all episodes before $k$ \\
$\wt{\pi}^{(k)}$ &  & optimistic policy executed in episode $k$ \\
$\wb{N}$ & & min. number of samples to meet the condition in Thm.~\ref{thm:estimates} for any policy and any action \\
$\wb{\sigma}_{\nu,\nu'}$ & & worst smallest non-zero singular value of covariance $K_{\nu,\nu'}^{(l)}$ for any policy and action \\
$\wb{\omega}_{\min}$ & & smallest stationary probability over actions, states, and policies \\

\bottomrule
\end{tabular}
\end{small}
\end{center}
\label{tab:notation.small}
\end{table}

\newpage
\bibliography{ref}

\newpage
\appendix
{\setstretch{1.0}




\section{Organization of the Appendix}\label{app:organization}

\begin{figure}[h!]
\small
\begin{center}
\includegraphics[width=0.8\textwidth]{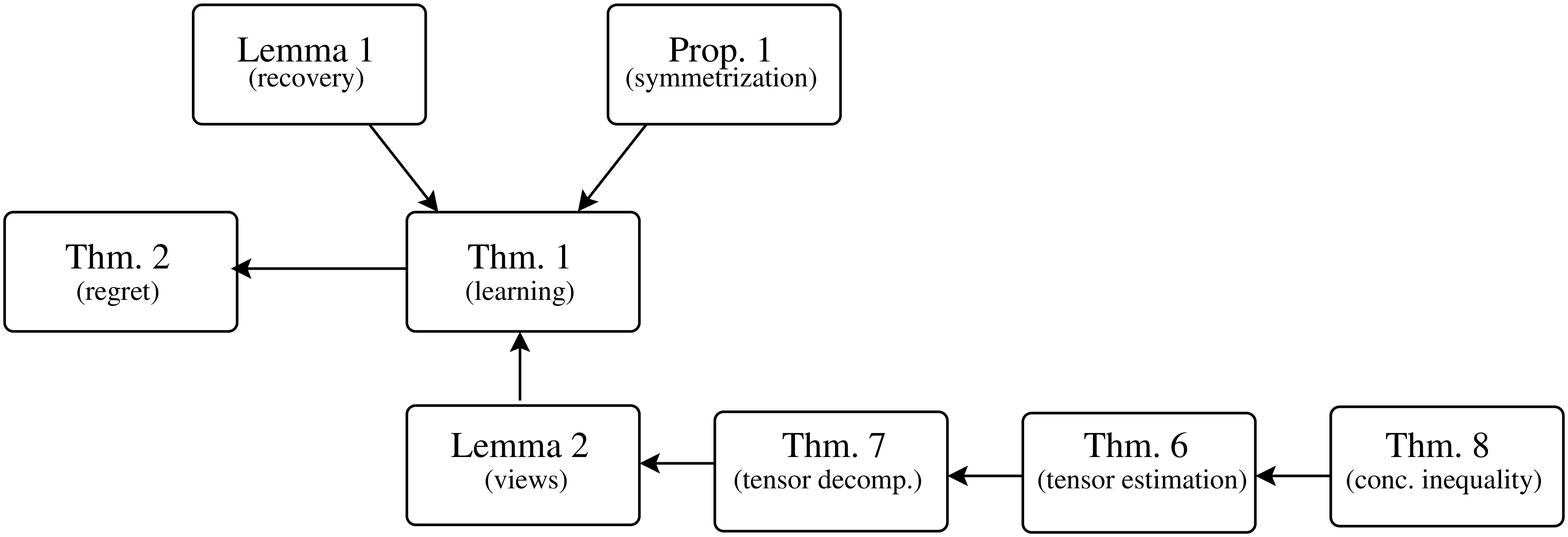}
\end{center}
\vspace{-0.2in}
\caption{Organization of the proofs.}
\label{fig:proofs}
\end{figure}

We first report the proofs of the main results of the paper in sections~\ref{s:proof.lem:pomdp.parameters}, \ref{app:proof1}, \ref{app:proof2}, \ref{s:proof.remark} and we postpone the technical tools used to derive them from Section~\ref{app:ConBound} on right after preliminary empirical results in Sect.~\ref{s:experiments}. In particular, the main lemmas and theorems of the paper are organized as in Fig.~\ref{fig:proofs}.

Furthermore, we summarize the additional notation used throughout the appendices in the following table.

\begin{center}
\begin{tabular}{r c p{12cm} }
\toprule
$\Delta_{n^{(l)}}$ &  & Concentration matrix\\  
$\eta_{i,j}^{(l)}(\cdot,\cdot,\cdot)$ &  & mixing coefficient\\  
$p(i,l)$ &  & translator of $i'th$ element in sequence of samples given middle action $l$ to the actual sequence number \\  
$\HH^{i|l}$ &  & $i'th$ quadruple consequence of states random variable given second action $l$\\  
$\hhh^{i|l}$ &  & $i'th$ quadruple consequence of states given second action $l$\\  
$\HH^{j|l}_i$ &  & sequence of all $\HH^{i'|l}$ for $i'\in\{i,\ldots,j\}$ \\  
$\hhh^{i|l}$ &  & sequence of all $\hhh^{i'|l}$ for $i'\in\{i,\ldots,j\}$\\  
$\VV^{i|l}$ &  & $i'th$ triple consequence of views random variable given second action $l$\\  
$\vv^{i|l}$ &  & $i'th$ triple consequence of observation given second action $l$\\  
$\VV^{j|l}_i$ &  & sequence of all $\VV^{i'|l}$ for $i'\in\{i,\ldots,j\}$ \\  
$\vv^{i|l}$ &  & sequence of all $\vv^{i'|l}$ for $i'\in\{i,\ldots,j\}$\\  
\bottomrule
\end{tabular}
\end{center}
For the tensor $A\in\mathbb{R}^{d_1\times d_2\ldots\times d_p}$, and matrices $\lbrace V_i\in\mathbb{R}^{d_i,n_i}:i\in\lbrace 1,\ldots,p\rbrace\rbrace$, the tensor multi-linear operator is defined as follows\\
For the ${i_1,i_2,\ldots,i_p}-th$ element
\begin{align*}
\left[A(V_1,V_2, \ldots, V_p)\right]_{i_1,i_2,\ldots,i_p}\displaystyle\sum_{j_1,j_2,\ldots,j_p\in\lbrace 1,2,\ldots,p\rbrace}A_{j_1,j_2,\ldots,j_p\in\lbrace 1,2,\ldots,p\rbrace}[V_1]_{j_1,i_1}[V_2]_{j_2,i_2}\cdots[V_p]_{j_p,i_p}
\end{align*}

\newpage



\section{Proof of Lemma~\ref{lem:pomdp.parameters}}\label{s:proof.lem:pomdp.parameters}

The proof proceeds by construction. First notice that the elements of the second view can be written as
\begin{align*}
[V_2^{(l)}]_{s,i} &= [V_2^{(l)}]_{(n',m'),i}\\
&= \Prob(\vecy_2=\vec{e}_{n'}|x_2=i,a_2=l)\Prob(\vec{r}_2=\vec{e}_{m'}|x_2=i,a_2=l)\\
&= \Prob(\vecy_2=\vec{e}_{n'}|x_2=i,a_2=l) f_R(\vec{e}_{m'}|i,l),
\end{align*}
where we used the independence between observations and rewards. As a result, summing up over all the observations $n'$, we can recover the reward model as
\begin{align}
f_R(\vec{e}_{m'}|i,l) = \sum_{n'=1}^Y [V_2^{(l)}]_{(n',m'),i},
\end{align}
for any combination of states $i\in [X]$ and actions $l\in[A]$.
In order to compute the observation model, we have to further elaborate the definition of $V_2^{(l)}$ as
\begin{align*}
&[V_2^{(l)}]_{s,i} = [V_2^{(l)}]_{(n',m'),i}\\
&= \frac{\Prob(a_2=l|x_2=i,\vecy_2=\vec{e}_{n'}) \Prob(\vecy_2=\vec{e}_{n'}|x_2=i)}{ \Prob(a_2=l|x_2=i) } \cdot \Prob(\vec{r}_2=\vec{e}_{m'}|x_2=i,a_2=l) \\
&= \frac{f_\pi(l|\vec{e}_{n'}) f_O(\vec{e}_{n'}|i) f_R(\vec{e}_{m'}|i,l)}{\Prob(a_2=l|x_2=i)}.
\end{align*}
Since the policy $f_\pi$ is known, if we divide the previous term by $f_\pi(l|\vec{e}_{n'})$ and sum over observations and rewards, we obtain the denominator of the previous expression as
\begin{align*}
\sum_{m'=1}^R \sum_{n'=1}^Y \frac{[V_2^{(l)}]_{(n',m'),i}}{f_\pi(l|\vec{e}_{n'})} = \frac{1}{\Prob(a_2=l|x_2=i)}.
\end{align*}
Let $\rho(i,l) = 1/\Prob(a_2=l|x_2=i)$ as computed above, then the observation model is
\begin{align}
f^{(l)}_O(\vec{e}_{n'}|i) = \sum_{m'=1}^R \frac{[V_2^{(l)}]_{(n',m'),i}}{f_\pi(l|\vec{e}_{n'})\rho(i,l)}.
\end{align}
Repeating the procedure above for each $n'$ gives the full observation model $f_O^{(l)}$. We are left with the transition tensor, for which we need to resort to the third view $V_3^{(l)}$, that can be written as
\begin{align}\label{eq:v3.transition}
[V_3^{(l)}]_{s,i} &= [V_3^{(l)}]_{n'',i}\nonumber\\
&= \sum_{j=1}^X \Prob(\vecy_3=\vec{e}_{n''}|x_2=i,a_2=l, x_3=j)\cdot\Prob(x_3=j|x_2=i,a_2=l) \nonumber\\
&= \sum_{j=1}^X\Prob(\vecy_3=\vec{e}_{n''}|x_3=j)\Prob(x_3=j|x_2=i,a_2=l) \nonumber\\
&= \sum_{j=1}^X f_O(\vec{e}_n|j) f_T(j| i,l),
\end{align}
\noindent where we used the graphical model of the POMDP to introduce the dependency on $x_3$. Since the policy $f_\pi$ is known and the observation model is obtained from the second view with Eq.~\ref{eq:obs.recovery}, it is possible to recover the transition model. 
We recall that the observation matrix $O\in\Re^{Y\times X}$ is such that $[O]_{n,j} = f_O(\vec{e}_n|j)$, then we can restate Eq.~\ref{eq:v3.transition} as
\begin{align}\label{eq:transition.recovery}
O [T]_{i,:,l} = [V_3^{(l)}]_{:,i}
\end{align}
where $[T]_{i,:,l}$ is the second mode of the transition tensor $T\in\Re^{X\times X\times A}$. Since all the terms in $O$ are known, we finally obtain $[T]_{i,:,l} = O^\dagger [V_3^{(l)}]_{:,i}$, where $O^\dagger$ is the pseudo-inverse of $O$. Repeating for all states and actions gives the full transition model $f_T$.



\section{Proof of Thm.~\ref{thm:estimates}}\label{app:proof1}

The proof builds upon previous results on HMM by~\cite{anandkumar2012method}, \cite{song2013nonparametric}, Thm.~\ref{thm:GeneralConcentrationBound}, Appendix~ \ref{app:ConBound}, . All the following statements hold under the assumption that the samples are drawn from the stationary distribution induced by the policy $\pi$ on the POMDP (i.e., $f_{T,\pi}$). In proving Thm.~\ref{thm:regret}, we will consider the additional error coming from the fact that samples are not necessarily drawn from $f_{T,\pi}$.

We denote by $\sigma_1(A)\geq \sigma_2(A) \geq \ldots $ the singular values of a matrix $A$ and we recall that the covariance matrices $K_{\nu,\nu'}^{(l)}$ have rank $X$ under Asm.~\ref{asm:observation} and we denote by $\sigma^{(l)}_{\nu,\nu'} = \sigma_X(K_{\nu,\nu'}^{(l)})$ its smallest non-zero singular value, where $\nu,\nu'\in\{1,2,3\}$. Adapting the result by~\cite{song2013nonparametric}, we have the following performance guarantee when the spectral method is applied to recover each column of the third view.


\begin{lemma}\label{lem:view.bound}
Let $\wh{\mu}_{3,i}^{(l)}\in\Re^{d_3}_3$ and $\wh{\omega}_\pi^{(l)}(i)$ be the estimated third view and the conditional distribution computed in state $i\in\X$ using the spectral method in Sect.~\ref{s:learning.pomdp} using $N(l)$ samples. Let $\omega_{\min}^{(l)} = \min_{x\in\X} \omega_\pi^{(l)}(x)$ and the number of samples $N(l)$ is such that

\begin{align}\label{eq:nl.condition1}
N(l)>\left(\frac{G(\pi)\frac{2\sqrt{2}+1}{1-\theta(\pi)}}{{\omega^{(l)}_{\min}\min\limits_{\nu\in\{1,2,3\}}\lbrace\sigma^2_{\min}(V^{(l)}_\nu)\rbrace}}\right)^2\log(2\frac{(d_1d_2+d_3)}{\delta})\Theta^{(l)}\\ \Theta^{(l)}=\max\left\{\frac{16X^{\frac{1}{3}}}{C_1^{\frac{2}{3}}(\omega^{(l)}_{\min})^{\frac{1}{3}}} ,4,\frac{2\sqrt{2}X}{C_1^2\omega^{(l)}_{\min}\min\limits_{\nu\in\{1,2,3\}}\lbrace\sigma^2_{\min}(V^{(l)}_\nu)\rbrace}\right\},
\end{align}
where $C_1$ is numerical constants and $d_1,d_2$ are dimensions of first and second views. Then under Thm.~\ref{thm:Whitening} for any $\delta\in(0,1)$ we have
\footnote{More precisely, the statement should be phrased as ``there exists a suitable permutation on the label of the states such that''. This is due to the fact that the spectral method cannot recover the exact \textit{identity} of the states but if we properly relabel them, then the estimates are accurate. In here we do not make explicit the permutation in order to simplify the notation and readability of the results.}
\begin{align*}
&\big\|[\wh{V}_3^{(l)}]_{:,i} - [V_3^{(l)}]_{:,i}\big\|_2 \leq \epsilon_3
\end{align*}
with probability $1-\delta$ (w.r.t. the randomness in the transitions, observations, and policy), where\footnote{Notice that $\epsilon_3(l)$ does not depend on the specific state (column) $i$.}
\begin{equation}
\label{eq:epst1}
\epsilon_3(l):=G(\pi)\frac{4\sqrt{2}+2}{(\omega^{(l)}_{\min})^{\frac{1}{2}}(1-\theta(\pi))}\sqrt{\frac{\log(2\frac{(d_1+d_2)}{\delta})}{n}}+\frac{8\widetilde{\epsilon}_M}{\omega^{(l)}_{\min}}
\end{equation}
and
\begin{align*}
\widetilde{\epsilon}_M(l) \leq \frac{2\sqrt{2} G(\pi)\frac{2\sqrt{2}+1}{1-\theta(\pi)}\sqrt{\frac{\log(\frac{2(d_1d_2+d_3)}{\delta})}{N(l)}}}{((\omega^{(l)}_{\min})^{\frac{1}{2}}\min\limits_{\nu\in\{1,2,3\}}\lbrace\sigma_{\min}(V^{(l)}_\nu)\rbrace)^3}+\frac{\left(64 G(\pi)\frac{2\sqrt{2}+1}{1-\theta(\pi)}\right)}{{\min\limits_{\nu\in\{1,2,3\}}\lbrace\sigma^2_{\min}(V^{(l)}_\nu)\rbrace}(\omega^{(l)}_{\min})^{1.5}}\sqrt{\frac{\log(2\frac{(d_1d_2+d_3)}{\delta})}{N(l)}},
\end{align*}

\end{lemma}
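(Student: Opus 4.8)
The plan is to establish the bound in two stages: first control the deviation of the empirical moments $\wh{M}_2^{(l)}$ and $\wh{M}_3^{(l)}$ from their population counterparts $M_2^{(l)}$ and $M_3^{(l)}$ of Prop.~\ref{lem:spectral.decomposition}, and then propagate this moment error through the whitening-based tensor decomposition to obtain a bound on each recovered column $[\wh{V}_3^{(l)}]_{:,i}$. The final $\epsilon_3(l)$ in Eq.~\ref{eq:epst1} then arises by composing the two stages, with $\wt{\epsilon}_M(l)$ playing the role of the (whitened) moment perturbation and the factor $1/\omega_{\min}^{(l)}$ coming from the un-whitening step.

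First I would address the moment concentration, which is the genuinely new ingredient compared to the HMM analysis of \cite{anandkumar2012method,song2013nonparametric}. The quantities $\wh{M}_2^{(l)}$ and $\wh{M}_3^{(l)}$ are empirical averages over $t\in\T(l)$ of the matrix- and tensor-valued functions $\wt{\vec{v}}_{1,t}^{(l)}\otimes\wt{\vec{v}}_{2,t}^{(l)}$ and $\wt{\vec{v}}_{1,t}^{(l)}\otimes\wt{\vec{v}}_{2,t}^{(l)}\otimes\vec{v}_{3,t}^{(l)}$, all uniformly bounded because the raw views are indicator vectors. The obstacle is that these summands are neither independent nor drawn marginally from the stationary law in general; each is a deterministic function of a window of the Markov chain induced by $\pi$. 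I would therefore invoke the matrix concentration inequality for dependent sequences of Thm.~\ref{thm:GeneralConcentrationBound} (App.~\ref{app:ConBound}), which combines the concentration result of \cite{kontorovich2008concentration} for functions of mixing processes with the matrix Azuma inequality of \cite{tropp2012user}. The mixing coefficients of the chain satisfy $\rho_{\text{mix},\pi}(t)\le G(\pi)\theta^{t-1}(\pi)$, and summing this geometric series gives $\sum_{t}G(\pi)\theta^{t-1}(\pi)\le G(\pi)/(1-\theta(\pi))$, which is exactly the source of the prefactor $G(\pi)/(1-\theta(\pi))$ appearing in both $\epsilon_3(l)$ and $\wt{\epsilon}_M(l)$. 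This step yields, with probability $1-\delta$, that $\|\wh{M}_2^{(l)}-M_2^{(l)}\|$ and $\|\wh{M}_3^{(l)}-M_3^{(l)}\|$ are both bounded, up to numerical constants, by $\frac{G(\pi)}{1-\theta(\pi)}\sqrt{\log(2(d_1d_2+d_3)/\delta)/N(l)}$.

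Next I would translate the moment error into the factor error via the whitening and robust tensor power method, citing Thm.~\ref{thm:Whitening} (App.~\ref{s:whitening}), which adapts the perturbation analysis of \cite{anandkumar2012method}. Here the population second moment $M_2^{(l)}=\sum_i \omega_\pi^{(l)}(i)\,\mu_{3,i}^{(l)}\otimes\mu_{3,i}^{(l)}$ has rank $X$ with $X$-th singular value bounded below by $\omega_{\min}^{(l)}\min_\nu \sigma_{\min}^2(V_\nu^{(l)})$; this is precisely the quantity appearing in the denominators of $\wt{\epsilon}_M(l)$ and it explains why both the whitening matrix and its perturbation are well-behaved only once the empirical $M_2^{(l)}$ is close enough to its population value. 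The sample condition in Eq.~\ref{eq:nl.condition1}, with threshold $\Theta^{(l)}$, is exactly what guarantees that the whitened-tensor perturbation $\wt{\epsilon}_M(l)$ falls below the radius required for the robust power method of \cite{anandkumar2012method} to recover all $X$ components (up to relabeling) with the stated accuracy. The perturbation bound then controls $\|\wh{\mu}_{3,i}^{(l)}-\mu_{3,i}^{(l)}\|_2$ by a term proportional to $\wt{\epsilon}_M(l)/\omega_{\min}^{(l)}$ plus the direct contribution of the $M_2^{(l)}$ estimation error, which together reassemble into Eq.~\ref{eq:epst1}. Since all columns are recovered simultaneously and the resulting bound is uniform in $i$, the claim follows for every state.

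I expect the main obstacle to be the first stage: obtaining a clean matrix/tensor concentration bound without assuming independent samples or stationarity, and in particular tracking how the mixing coefficients enter so that the final rate is $\wt{O}(1/\sqrt{N(l)})$ with the correct $G(\pi)/(1-\theta(\pi))$ dependence. Once that is in hand, the second stage is a careful but essentially mechanical application of the established whitening and tensor-power-method perturbation machinery, with the sample-size condition Eq.~\ref{eq:nl.condition1} chosen precisely to keep the perturbation within the regime where that machinery applies.
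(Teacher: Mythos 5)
Your two-stage architecture is exactly the paper's: dependent-data matrix concentration with the $G(\pi)/(1-\theta(\pi))$ prefactor obtained by combining \cite{kontorovich2008concentration} with the matrix Azuma inequality of \cite{tropp2012user} (Thms.~\ref{thm:GeneralConcentrationBound} and~\ref{thm:ConcentrationBound}), followed by the symmetrization/whitening/robust-power-method perturbation analysis of Thm.~\ref{thm:Whitening} with de-whitening contributing the $1/\omega^{(l)}_{\min}$ factors; the paper's proof of the lemma itself is then only the bookkeeping step you describe last, i.e., using the sample condition Eq.~\ref{eq:nl.condition1} to absorb one factor of $\sqrt{\log(2(Y^2+YAR)/\delta)/N(l)}$ in the cubic term of $\wt{\epsilon}_M(l)$ and keep the perturbation inside the radius $\epsilon_M \lesssim C_1/\sqrt{X}$ required by \cite{anandkumar2012method}. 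So the overall plan, the role of $\wt{\epsilon}_M(l)$, the source of each denominator $\omega^{(l)}_{\min}\min_\nu \sigma^2_{\min}(V^{(l)}_\nu)$ (the $X$-th singular value of $M_2^{(l)}$), and the purpose of $\Theta^{(l)}$ are all correctly identified.

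One step would fail as literally written, however. In stage one you assert that the summands of $\wh{M}_2^{(l)}$ and $\wh{M}_3^{(l)}$ are each ``a deterministic function of a window of the Markov chain,'' and propose to apply Thm.~\ref{thm:GeneralConcentrationBound} to them directly. This is false for the empirical moments actually used by the algorithm: the modified views $\wt{\vec{v}}_{1,t}^{(l)}=\wh{K}_{3,2}^{(l)}(\wh{K}_{1,2}^{(l)})^{\dagger}\vec{v}_{1,t}^{(l)}$ involve covariance matrices estimated from the \emph{entire} sample, so each summand in Eq.~\ref{eq:est.m} depends globally on all the data and the martingale-difference/Lipschitz structure underlying the Azuma-type bound does not apply to $\wh{M}_2^{(l)}-M_2^{(l)}$ as a single average. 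The paper's repair, which your stage two implicitly contains, is to apply the concentration result only to the \emph{raw} pair and triple moments $\frac{1}{N(l)}\sum_t \vec{v}_{\nu,t}^{(l)}\otimes \vec{v}_{\nu',t}^{(l)}$ (these are genuine window-functions of the chain), and then to treat the estimated symmetrization matrices $\wh{R}_1,\wh{R}_2$ and the whitening matrices $\wh{W}_\nu$ deterministically inside Thm.~\ref{thm:Whitening}, via the construction $\wt{W}_\nu=\wh{W}_\nu BD^{-1/2}B^{\top}$ and the bound $\|D^{1/2}-I\|_2\leq 4\|\wh{M}_2-M_2\|_2/(\omega^{(l)}_{\min}\min_\nu\sigma^2_{\min}(V_\nu^{(l)}))$; the composite constants such as $2\sqrt{2}+1$ in $\wt{\epsilon}_M(l)$ arise precisely from this triangle-inequality composition of raw-moment errors through the estimated rotations. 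With stage one restated at the level of the raw moments and the data-dependence of the symmetrization deferred to the perturbation algebra, your argument coincides with the paper's proof.
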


Notice that although not explicit in the notation, $\epsilon_3(l)$ depends on the policy $\pi$ through the term $\omega_{\min}^{(l)}$. 

\begin{proof}
We now proceed with simplifying the expression of $\epsilon_3(l)$. Rewriting the condition on $N(l)$ in Eq.~\ref{eq:nl.condition1} we obtain
\begin{align*}
&\frac{\log(2\frac{(d_1d_2+d_3)}{\delta})}{N(l)}\leq \left(\frac{{\omega^{(l)}_{\min}\min\limits_{\nu\in\{1,2,3\}}\lbrace\sigma^2_{\min}(V^{(l)}_\nu)\rbrace}}{G(\pi)\frac{2\sqrt{2}+1}{1-\theta(\pi)}}\right)^2\\
\end{align*}
Substituting this bound on a factor $\log(2\frac{Y^2+YAR}{\delta})/N(l)$ in the second term of Eq.~\ref{eq:epst1}, we obtain
\begin{align*}
\widetilde{\epsilon}_M(l) \leq \frac{2\sqrt{2} G(\pi)\frac{2\sqrt{2}+1}{1-\theta(\pi)}\sqrt{\frac{\log(\frac{2(d_1d_2+d_3)}{\delta})}{N(l)}}}{((\omega^{(l)}_{\min})^{\frac{1}{2}}\min\limits_{\nu\in\{1,2,3\}}\lbrace\sigma_{\min}(V^{(l)}_\nu)\rbrace)^3}+\frac{\left(64 G(\pi)\frac{2\sqrt{2}+1}{1-\theta(\pi)}\right)}{{(\min\limits_{\nu\in\{1,2,3\}}\lbrace\sigma^2_{\min}(V^{(l)}_\nu)\rbrace)}(\omega^{(l)}_{\min})^{1.5}}\sqrt{\frac{\log(2\frac{(d_1d_2+d_3)}{\delta})}{N(l)}},
\end{align*}
which leads to the final statement after a few trivial bounds on the remaining terms.
\end{proof}

While the previous bound does hold for both the first and second views when computed independently with a suitable symmetrization step, as discussed in Section~\ref{s:learning.pomdp}, this leads to inconsistent state indexes. As a result, we have to compute the other views by inverting Eq.~\ref{eq:expect.mod.views}. Before deriving the bound on the accuracy of the corresponding estimates, we introduce two propositions which will be useful later.

\begin{proposition}\label{prop:norm2}
Fix $\vec{\varsigma}=(\varsigma_1,\varsigma_2,\ldots,\varsigma_{(Y^2RA)})$ a point in $(Y^2)RA-1$ simplex.\footnote{Such that $\forall i=1,\ldots,d^2$, $\varsigma_i>0$ and $\sum_i \varsigma_i=1$.} Let $\vec{\xi}$ be a random one-hot vector such that $\mathbb{P}(\vec{\xi}=\vec{e}_i)=\varsigma_i$ for all $i\in \{1,\ldots,(Y)^2RA\}$ and let $\vec{\xi}_1,\vec{\xi}_2,\ldots,\vec{\xi}_N$ be $N$ i.i.d.\ copies of $\vec{\xi}$ and $\hat{\varsigma}=\frac{1}{N}\sum\limits_j^N\vec{\xi}_j$ be their empirical average, then
\begin{align*}
\|\hat{\varsigma}-\vec{\varsigma}~\|_2\leq \sqrt{\frac{\log{(1/\delta)}}{N}},
\end{align*}
with probability $1-\delta$.
\end{proposition}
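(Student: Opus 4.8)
The plan is to view $\hat\varsigma$ as the vector of empirical frequencies of $N$ i.i.d.\ draws from the categorical distribution $\vec\varsigma$ and to control the $\ell_2$ fluctuation of this frequency vector around its mean via the bounded–differences (McDiarmid) inequality. Concretely, I would set $f(\vec\xi_1,\dots,\vec\xi_N):=\big\|\tfrac1N\sum_{j=1}^N\vec\xi_j-\vec\varsigma\big\|_2$ and regard it as a function of the $N$ independent one-hot vectors $\vec\xi_j$, which is exactly the setting where McDiarmid applies.

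First I would establish the bounded–differences constant. If a single sample $\vec\xi_j$ is replaced by another one-hot vector $\vec\xi_j'$, the empirical average changes by exactly $\tfrac1N(\vec\xi_j-\vec\xi_j')$, and since the difference of two distinct one-hot vectors has exactly two nonzero entries equal to $\pm1$, its Euclidean norm is at most $\sqrt2$. By the reverse triangle inequality, $f$ changes by at most $c:=\sqrt2/N$ under any single-coordinate replacement. Applying McDiarmid with $\sum_{j=1}^N c^2=2/N$ gives, for every $\epsilon>0$, $\mathbb{P}\big(f\ge \mathbb{E}[f]+\epsilon\big)\le \exp(-N\epsilon^2)$, and choosing $\epsilon=\sqrt{\log(1/\delta)/N}$ makes the right-hand side equal to $\delta$.

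Next I would bound the mean term $\mathbb{E}[f]$. By Jensen's inequality $\mathbb{E}[f]\le (\mathbb{E}[f^2])^{1/2}$, and since each coordinate $\hat\varsigma_i=\tfrac1N\sum_j[\vec\xi_j]_i$ is an average of $N$ i.i.d.\ $\mathrm{Bernoulli}(\varsigma_i)$ variables, $\mathbb{E}[f^2]=\sum_i \mathrm{Var}(\hat\varsigma_i)=\tfrac1N\sum_i \varsigma_i(1-\varsigma_i)=\tfrac1N\big(1-\|\vec\varsigma\|_2^2\big)\le \tfrac1N$. Hence $\mathbb{E}[f]\le 1/\sqrt N$, and combining with the deviation bound yields $\|\hat\varsigma-\vec\varsigma\|_2\le (1+\sqrt{\log(1/\delta)})/\sqrt N$ with probability at least $1-\delta$, which is of the claimed order $\sqrt{\log(1/\delta)/N}$.

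The one real subtlety — and the step I would be most careful about — is this mean term: for small $\delta$ it is dominated by $\sqrt{\log(1/\delta)/N}$ and can be absorbed into the constant, so the clean statement is recovered, but strictly speaking the displayed inequality should be read up to this lower-order additive term (equivalently, it holds as written once $\delta$ is small enough that $\sqrt{\log(1/\delta)}\ge 1$, or after inflating the right-hand side by a universal constant). Everything else is routine: independence of the $\vec\xi_j$ justifies McDiarmid, and the one-hot structure makes both the bounded-difference constant and the coordinatewise variance computation immediate.
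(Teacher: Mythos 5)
Your proof is correct and is essentially the same argument as the one underlying the paper's result: the paper's own ``proof'' is a one-line citation to Lemma F.1 of \cite{anandkumar2012method}, and that lemma is proved precisely by your McDiarmid bounded-differences step (with constant $\sqrt{2}/N$) combined with the Jensen/variance bound $\mathbb{E}\|\hat\varsigma-\vec\varsigma\|_2\le 1/\sqrt{N}$. Your closing caveat is also accurate: the cited lemma actually states $\|\hat\varsigma-\vec\varsigma\|_2\le 1/\sqrt{N}+\sqrt{\log(1/\delta)/N}$, so the proposition as written here silently drops the lower-order mean term you correctly identify, and your version (absorb it into a constant, or assume $\log(1/\delta)\ge 1$) is the honest reading.
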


\begin{proof}
See Lemma F.1. in~\cite{anandkumar2012method}.
\end{proof}

\begin{proposition}\label{prop.inverse.K}
Let $\wh{K}_{3,1}^{(l)}$ be an empirical estimate of $K_{3,1}^{(l)}$ obtained using $N(l)$ samples. Then if 
\begin{align}\label{eq:nl.condition2}
N(l) \geq 4\frac{\log(1/\delta)}{(\sigma_{3,1}^{(l)})^2},
\end{align}
then
\begin{align*}
\|(K_{3,1}^{(l)})^{\dagger}-(\wh{K}_{3,1}^{(l)})^{\dagger}\|_2\leq\frac{\sqrt{\frac{\log{(1/\delta)}}{N(l)}}}{\sigma_{3,1}^{(l)}-\sqrt{\frac{\log{(1/\delta)}}{N(l)}}} \leq \frac{2}{\sigma_{3,1}^{(l)}}\sqrt{\frac{\log{(\frac{1}{\delta})}}{N(l)}},
\end{align*}
with probability $1-\delta$.
\end{proposition}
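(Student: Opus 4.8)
The plan is to decompose the statement into a random \emph{concentration} bound on $\|\wh{K}_{3,1}^{(l)} - K_{3,1}^{(l)}\|_2$ and a deterministic \emph{perturbation} bound that turns closeness of the two matrices into closeness of their pseudo-inverses. The sample-size requirement in Eq.~\ref{eq:nl.condition2} will be used only in the second step, to guarantee that the perturbation is small enough that $\wh{K}_{3,1}^{(l)}$ keeps the same rank as $K_{3,1}^{(l)}$ and its smallest non-zero singular value stays bounded away from $0$.

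For the concentration step, I would note that each summand defining $\wh{K}_{3,1}^{(l)} = \frac{1}{N(l)}\sum_t \vec{v}_{3,t}^{(l)} \otimes \vec{v}_{1,t}^{(l)}$ is an outer product of two one-hot (indicator) vectors, hence a matrix in $\Re^{Y\times AYR}$ with a single unit entry. Vectorizing these matrices identifies $K_{3,1}^{(l)}$ and $\wh{K}_{3,1}^{(l)}$ with a probability vector $\vec{\varsigma}$ on the $(AY^2R-1)$-simplex and its empirical average over $N(l)$ i.i.d.\ one-hot draws. Proposition~\ref{prop:norm2} then applies directly (its ambient dimension $Y^2RA$ matches), giving
\[
\|\wh{K}_{3,1}^{(l)} - K_{3,1}^{(l)}\|_F \le \sqrt{\tfrac{\log(1/\delta)}{N(l)}}
\]
with probability $1-\delta$. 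Since the spectral norm is dominated by the Frobenius norm, the same bound holds for $\|\wh{K}_{3,1}^{(l)} - K_{3,1}^{(l)}\|_2$; I will abbreviate $\epsilon := \sqrt{\log(1/\delta)/N(l)}$.

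For the perturbation step, Weyl's inequality gives $\sigma_{\min}(\wh{K}_{3,1}^{(l)}) \ge \sigma_{3,1}^{(l)} - \epsilon$, and the hypothesis $N(l)\ge 4\log(1/\delta)/(\sigma_{3,1}^{(l)})^2$ is exactly the statement $\epsilon \le \tfrac12 \sigma_{3,1}^{(l)}$, so $\sigma_{\min}(\wh{K}_{3,1}^{(l)})>0$ and $\wh{K}_{3,1}^{(l)}$ has rank $X$, equal to that of $K_{3,1}^{(l)}$. Under this rank-preservation condition I would invoke the standard Moore--Penrose perturbation bound (Wedin's theorem), combined with $\|(\wh{K}_{3,1}^{(l)})^\dagger\|_2 = 1/\sigma_{\min}(\wh{K}_{3,1}^{(l)}) \le 1/(\sigma_{3,1}^{(l)}-\epsilon)$, to obtain
\[
\|(K_{3,1}^{(l)})^\dagger - (\wh{K}_{3,1}^{(l)})^\dagger\|_2 \le \frac{\epsilon}{\sigma_{3,1}^{(l)}-\epsilon}.
\]
Substituting $\sigma_{3,1}^{(l)}-\epsilon \ge \tfrac12\sigma_{3,1}^{(l)}$ from the sample condition then yields the final clean bound $\tfrac{2}{\sigma_{3,1}^{(l)}}\sqrt{\log(1/\delta)/N(l)}$.

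The step I expect to be the main obstacle is the perturbation inequality. If $K_{3,1}^{(l)}$ were square and invertible, the resolvent identity $A^{-1}-\wh{A}^{-1} = A^{-1}(\wh{A} - A)\wh{A}^{-1}$ would make the bound immediate; but $K_{3,1}^{(l)}$ is rectangular, so the argument must control the Moore--Penrose pseudo-inverse, whose sensitivity also involves the rotation of the column and row spaces (the orthogonal projectors $K_{3,1}^{(l)}(K_{3,1}^{(l)})^\dagger$). These projection terms are continuous only as long as the rank does not drop, which is exactly why the threshold of Eq.~\ref{eq:nl.condition2} --- ensuring $\epsilon\le\tfrac12\sigma_{3,1}^{(l)}$ --- is indispensable rather than merely cosmetic.
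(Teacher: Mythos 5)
Your proposal is correct and takes essentially the same route as the paper: the paper likewise bounds $\|\wh{K}_{3,1}^{(l)}-K_{3,1}^{(l)}\|_2$ by applying Proposition~\ref{prop:norm2} to the empirical average of one-hot outer products (its Eq.~\ref{eq:k2}) and then converts this into the pseudo-inverse bound by citing Lemma~E.4 of \citet{anandkumar2012method}, which is the same standard Moore--Penrose perturbation result you invoke in Wedin-style form. Your explicit intermediate steps---Weyl's inequality for rank preservation and the observation that Eq.~\ref{eq:nl.condition2} is exactly the condition $\sqrt{\log(1/\delta)/N(l)}\leq \tfrac12\sigma_{3,1}^{(l)}$ used for the second inequality---merely spell out what the paper's one-line citation leaves implicit.
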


\begin{proof}
Since $K_{3,1}^{(l)} = \E\big[\vec{v}^{(l)}_3 \otimes \vec{v}^{(l)}_{1}\big]$ and the views are one-hot vectors, we have that each entry of the matrix is indeed a probability (i.e., a number between 0 and 1) and the sum of all the elements in the matrix sums up to 1. As a result, we can apply Proposition~\ref{prop:norm2} to $K_{3,1}^{(l)}$ and obtain
\begin{align}
\label{eq:k2}&\|K_{3,1}^{(l)}-\wh{K}_{3,1}^{(l)}\|_2\leq\sqrt{\frac{\log{(1/\delta)}}{N(l)}},
\end{align}
with probability $1-\delta$. Then the statement follows by applying Lemma E.4. in~\cite{anandkumar2012method}.
\end{proof}

The previous proposition holds for $K_{2,1}^{(l)}$ as well with $\sigma_{2,1}^{(l)}$ replacing $\sigma_{3,1}^{(l)}$. We are now ready to state and prove the accuracy of the estimate of the second view (a similar bound holds for the first view).

\begin{lemma}\label{lemma:view.bound2}
Let $\wh{V}_2^{(l)}$ be the second view estimated inverting Eq.~\ref{eq:expect.mod.views} using estimated covariance matrices $K$ and $V_3^{(l)}$, then if $N(l)$ satisfies the conditions in Eq.~\ref{eq:nl.condition1} and Eq.~\ref{eq:nl.condition2}  with probability $1-3\delta$
\begin{align*}
\big\|[\wh{V}_2^{(l)}]_{:,i} - [V_2^{(l)}]_{:,i}\big\|_2 = \epsilon_2(l) := \frac{21}{\sigma_{3,1}^{(l)}}\epsilon_3(l).
\end{align*}
\end{lemma}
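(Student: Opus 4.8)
The plan is to treat $\wh{\mu}_{2,i}^{(l)}$ as a perturbed matrix–vector product and bound the discrepancy factor by factor. First I would write down the exact inversion of Eq.~\ref{eq:expect.mod.views}. Since all view matrices are full column rank (Asm.~\ref{asm:observation},~\ref{asm:transition}) and $K_{2,1}^{(l)} = V_2^{(l)}\,\mathrm{diag}(\omega_\pi^{(l)})\,(V_1^{(l)})^\top$, $K_{3,1}^{(l)} = V_3^{(l)}\,\mathrm{diag}(\omega_\pi^{(l)})\,(V_1^{(l)})^\top$, the product $K_{2,1}^{(l)}(K_{3,1}^{(l)})^{\dagger}$ collapses to $V_2^{(l)}(V_3^{(l)})^{\dagger}$ once the projector $(V_1^{(l)})^\top\big((V_1^{(l)})^\top\big)^{\dagger}=I_X$ cancels, so that $\mu^{(l)}_{2,i} = K_{2,1}^{(l)}(K_{3,1}^{(l)})^{\dagger}\mu^{(l)}_{3,i}$ holds exactly and the estimator is $\wh{\mu}^{(l)}_{2,i} = \wh{K}_{2,1}^{(l)}(\wh{K}_{3,1}^{(l)})^{\dagger}\wh{\mu}^{(l)}_{3,i}$. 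This identifies $\sigma_{3,1}^{(l)}$ as the only singular value that can blow up through the pseudo-inverse, which is exactly the quantity appearing in the claimed bound.

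Second, I would expand the product error. Writing $A=K_{2,1}^{(l)}$, $B=(K_{3,1}^{(l)})^{\dagger}$, $c=\mu^{(l)}_{3,i}$ and their hatted counterparts, I use the telescoping identity $\wh A\wh B\wh c - ABc = (\wh A - A)\wh B\wh c + A(\wh B-B)\wh c + AB(\wh c - c)$ and bound each summand in $\ell_2$ via sub-multiplicativity. The three perturbation factors are controlled by results already in hand: $\|\wh{K}_{2,1}^{(l)} - K_{2,1}^{(l)}\|_2\le\sqrt{\log(1/\delta)/N(l)}$ by Proposition~\ref{prop:norm2} (the entries of $K_{2,1}^{(l)}$ are joint probabilities summing to one, so it is an empirical average of one-hot outer products); $\|(\wh{K}_{3,1}^{(l)})^{\dagger} - (K_{3,1}^{(l)})^{\dagger}\|_2\le\frac{2}{\sigma_{3,1}^{(l)}}\sqrt{\log(1/\delta)/N(l)}$ by Proposition~\ref{prop.inverse.K}, valid under Eq.~\ref{eq:nl.condition2}; and $\|\wh{\mu}^{(l)}_{3,i} - \mu^{(l)}_{3,i}\|_2\le\epsilon_3(l)$ by Lemma~\ref{lem:view.bound}, valid under Eq.~\ref{eq:nl.condition1}.

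Third, I would supply the magnitude bounds that multiply these perturbations. Because $K_{2,1}^{(l)}$ is a matrix of probabilities, $\|K_{2,1}^{(l)}\|_2\le\|K_{2,1}^{(l)}\|_F\le 1$; since $\mu^{(l)}_{3,i}$ is a conditional distribution over observations, $\|\mu^{(l)}_{3,i}\|_2\le 1$ and $\|\wh{\mu}^{(l)}_{3,i}\|_2\le 1+\epsilon_3(l)\le 2$; and $\|(K_{3,1}^{(l)})^{\dagger}\|_2 = 1/\sigma_{3,1}^{(l)}$ while $\|(\wh{K}_{3,1}^{(l)})^{\dagger}\|_2\le 2/\sigma_{3,1}^{(l)}$ follows from $\sigma_X(\wh{K}_{3,1}^{(l)})\ge\sigma_{3,1}^{(l)}-\sqrt{\log(1/\delta)/N(l)}\ge\tfrac12\sigma_{3,1}^{(l)}$ under Eq.~\ref{eq:nl.condition2}. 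Substituting, every summand takes the form (constant)$\cdot\frac{1}{\sigma_{3,1}^{(l)}}\times\{\sqrt{\log(1/\delta)/N(l)}\ \text{or}\ \epsilon_3(l)\}$, and since the definition of $\epsilon_3(l)$ in Eq.~\ref{eq:epst1} already dominates $\sqrt{\log(1/\delta)/N(l)}$ up to constants, all terms collapse to multiples of $\frac{1}{\sigma_{3,1}^{(l)}}\epsilon_3(l)$ that add up to the stated constant $21$. A union bound over the three high-probability events yields the overall probability $1-3\delta$.

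The main obstacle I anticipate is bookkeeping rather than a conceptual difficulty: one must verify $\sqrt{\log(1/\delta)/N(l)}\le\epsilon_3(l)$ so that the sampling error of $K_{2,1}^{(l)}$ and of the pseudo-inverse is genuinely absorbed into the view error, and one must keep the higher-order cross terms (such as $(\wh A-A)(\wh B-B)\wh c$) under control so that the accumulated numerical constant does not exceed $21$. This is precisely where the sample-size conditions Eq.~\ref{eq:nl.condition1}--\ref{eq:nl.condition2} enter, guaranteeing both $\epsilon_3(l)\le 1$ and $\sigma_X(\wh{K}_{3,1}^{(l)})\ge\tfrac12\sigma_{3,1}^{(l)}$.
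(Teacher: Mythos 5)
Your proposal is correct and follows essentially the same route as the paper's proof: the same telescoping of $\wh{K}_{2,1}^{(l)}(\wh{K}_{3,1}^{(l)})^{\dagger}\wh{\mu}_{3,i}^{(l)} - K_{2,1}^{(l)}(K_{3,1}^{(l)})^{\dagger}\mu_{3,i}^{(l)}$ into three perturbation terms, bounded respectively via Proposition~\ref{prop:norm2}, Proposition~\ref{prop.inverse.K}, and Lemma~\ref{lem:view.bound}, with the same magnitude bounds $\|K_{2,1}^{(l)}\|_2\leq 1$, $\|(K_{3,1}^{(l)})^{\dagger}\|_2\leq 1/\sigma_{3,1}^{(l)}$, $\|[V_3^{(l)}]_{:,i}\|_2\leq 1$, and a union bound over the three events giving $1-3\delta$. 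Your explicit verification that $\sqrt{\log(1/\delta)/N(l)}$ is dominated by $\epsilon_3(l)$ is precisely the (implicit) step by which the paper absorbs the covariance-estimation terms into the final $21\,\epsilon_3(l)/\sigma_{3,1}^{(l)}$.
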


\begin{proof}
For any state $i\in\X$ and action $l\in A$, we obtain the second view by inverting Eq.~\ref{eq:expect.mod.views}, that is by computing
\begin{align*}
 [V_2^{(l)}]_{:,i}=K_{2,1}^{(l)}(K_{3,1}^{(l)})^{\dagger} [V_3^{(l)}]_{:,i}.
\end{align*}
To derive a confidence bound on the empirical version of $\mu_{2,i}^{(l)}$, we proceed by first upper bounding the error as
\begin{align*}
\big\|[\wh{V}_2^{(l)}]_{:,i} - [V_2^{(l)}]_{:,i}\big\|_2\leq& \|K_{2,1}^{(l)}-\wh{K}_{2,1}^{(l)}\|_2 \|(K_{3,1}^{(l)})^{\dagger}\|_2 \| [V_3^{(l)}]_{:,i}\|_2\\
&+\|K_{2,1}^{(l)}\|_2\|(K_{3,1}^{(l)})^{\dagger}-(\wh{K}_{3,1}^{(l)})^{\dagger}\|_2\| [V_3^{(l)}]_{:,i}\|_2\\
&+\|K_{2,1}^{(l)}\|_2\|(K_{3,1}^{(l)})^{\dagger}\|_2\big\|[\wh{V}_3^{(l)}]_{:,i} - [V_3^{(l)}]_{:,i}\big\|_2.
\end{align*}
The error $\|K_{2,1}^{(l)}-\wh{K}_{2,1}^{(l)}\|_2$ can be bounded by a direct application of Proposition~\ref{prop:norm2} (see also Eq.~\ref{eq:k2}). Then we can directly use Proposition~\ref{prop.inverse.K} to bound the second term and Lemma~\ref{lem:view.bound} for the third term, and obtain
\begin{align*}
\big\|[\wh{V}_2^{(l)}]_{:,i} - [V_2^{(l)}]_{:,i}\big\|_2 &\leq\frac{3}{\sigma_{3,1}^{(l)}}\sqrt{\frac{\log{(\frac{1}{\delta})}}{N(l)}}+\frac{18\epsilon_3(l)}{\sigma_{3,1}^{(l)}}\leq \frac{21\epsilon_3(l)}{\sigma_{3,1}^{(l)}},
\end{align*}
where we used $\|(K_{3,1}^{(l)})^{\dagger}\|_2 \leq 1/\sigma_{3,1}^{(l)}$, $\|K_{2,1}^{(l)}\|_2 \leq 1$ and $\|[V_3^{(l)}]_{:,i}\|_2 \leq 1$. Since each of the bounds we used hold with probability $1-\delta$, the final statement is valid with probability at least $1-3\delta$.
\end{proof}

We are now ready to derive the bounds in Thm.~\ref{thm:estimates}.


\begin{proof}[Proof of Thm.~\ref{thm:estimates}]
We first recall that the estimates $\wh{f}_R$, $\wh{f}_O$, and $\wh{f}_T$ are obtained by working on the second and third views only, as illustrated in Sect.~\ref{s:learning.pomdp}.

\noindent\textbf{Step 1 (bound on $f_R$).} Using the empirical version of Eq.~\ref{eq:rew.recovery}, the reward model in state $i$ for action $l$ is computed as
\begin{align*}
\wh{f}_R(\vec{e}_{m'}|i,l) = \sum_{n'=1}^Y [\wh{V}_2^{(l)}]_{(n',m'),i}.
\end{align*}
Then the $\ell_1$-norm of the error can be bounded as
\begin{align*}
\|\wh{f}_R(.|i,l) - f_R(.|i,l)\|_1 &= \sum_{m'=1}^R |\wh{f}_R(\vec{e}_{m'}|i,l) - f_R(\vec{e}_{m'}|i,l)| \\
&\leq \sum_{m'=1}^R \bigg|\sum_{n'=1}^Y [\wh{V}_2^{(l)}]_{(n',m'),i} - \sum_{n'=1}^Y [V_2^{(l)}]_{(n',m'),i}\bigg|\\
&\leq \sum_{m'=1}^R\sum_{n'=1}^Y \bigg|[\wh{V}_2^{(l)}]_{(n',m'),i} -  [V_2^{(l)}]_{(n',m'),i}\bigg| \\
&\leq \sqrt{YR} \bigg(\sum_{m'=1}^R \sum_{n'=1}^Y \Big([\wh{V}_2^{(l)}]_{(n',m'),i} -  [V_2^{(l)}]_{(n',m'),i}\Big)^2\bigg)^{1/2} \\
&= \sqrt{YR} \big\|[\wh{V}_2^{(l)}]_{:,i} - [V_2^{(l)}]_{:,i}\big\|_2,
\end{align*}
where we use $\|v\|_1 \leq \sqrt{YR} \|v\|_2$ for any vector $v\in\Re^{Y\cdot R}$. Applying Lemma~\ref{lemma:view.bound2} we obtain
%
\begin{align*}
\|\wh{f}_R(.|i,l) - f_R(.|i,l)\|_1 \leq \B_R := \frac{C_R}{\sigma_{3,1}^{(l)}(\omega^{(l)}_{\min})^{\frac{3}{2}}\min\limits_{\nu\in\{1,2,3\}}\lbrace\sigma^3_{\min}(V^{(l)}_\nu)\rbrace} \sqrt{\frac{ YR\log(2\frac{Y^2+YAR}{\delta})}{N(l)}},
\end{align*}
where $C_R$ is a numerical constant.

\noindent \textbf{Step 2 (bound on $\rho(i,l)$).}
We proceed by bounding the error of the estimate the term $\rho(i,l) = 1/\Prob(a_2=l|x_2=i)$ which is computed as
\begin{align*}
\wh{\rho}(i,l) = \sum_{m'=1}^R \sum_{n'=1}^Y \frac{[\wh{V}_2^{(l)}]_{(n',m'),i}}{f_\pi(l|\vec{e}_{n'})},
\end{align*}
and it is used to estimate the observation model. Similarly to the bound for $f_R$ we have
\begin{align}
|\rho(i,l)-\wh{\rho}(i,l)| &\leq \sum_{m'=1}^R \sum_{n'=1}^Y \frac{| [V_2^{(l)}]_{(n',m'),i} - [\wh{V}_2^{(l)}]_{(n',m'),i}|}{f_\pi(l|\vec{e}_{n'})} \leq \frac{1}{\pi_{\min}^{(l)}} \big\|[V_2^{(l)}]_{:,i}-[\wh{V}_2^{(l)}]_{:,i}\big\|_1 \nonumber \\
&\leq \frac{\sqrt{YR}}{\pi_{\min}^{(l)}} \big\|[V_2^{(l)}]_{:,i}-[\wh{V}_2^{(l)}]_{:,i}\big\|_2 \leq 21\frac{\sqrt{YR}}{\sigma_{3,1}^{(l)}\pi_{\min}^{(l)}} \epsilon_{3}(i) =: \epsilon_{\rho}(i,l),
\label{eq:rhobound}
\end{align}
where $\pi_{\min}^{(l)} = \min_{\vec{y}\in\Y} f_\pi(l|\vec{y})$ is the smallest non-zero probability of taking an action according to policy $\pi$.

\noindent\textbf{Step 3 (bound on $f_O$).} 
The observation model in state $i$ for action $l$ can be recovered by plugging the estimates into Eq.~\ref{eq:rew.recovery} and obtain
\begin{align*}
\wh{f}_O^{(l)}(\vec{e}_{n'}|i) = \sum_{m'=1}^R \frac{[\wh{V}_2^{(l)}]_{(n',m'),i}}{f_\pi(l|\vec{e}_{n'})\wh{\rho}(i,l)},
\end{align*}
where the dependency on $l$ is due do the fact that we use the view computed for action $l$. As a result, the $\ell_1$-norm of the estimation error is bounded as follows
\begin{align*}
\sum_{n'=1}^Y &|\wh{f}_O^{(l)}(\vec{e}_{n'}|i) - f_O(\vec{e}_{n'}|i)| \leq \sum_{n'=1}^Y\sum_{m'=1}^R \bigg| \frac{1}{f_\pi(l|\vec{e}_{n'})} \bigg(\frac{[\wh{V}_2^{(l)}]_{(n',m'),i}}{\wh{\rho}(i,l)} - \frac{[V_2^{(l)}]_{(n',m'),i}}{\rho(i,l)}\bigg)\bigg|\\
& \leq \frac{1}{\pi_{\min}^{(l)}}\sum_{n'=1}^Y\sum_{m'=1}^R \bigg|  \frac{\rho(i,l)\big([\wh{V}_2^{(l)}]_{(n',m'),i} - [V_2^{(l)}]_{(n',m'),i}\big) + [V_2^{(l)}]_{(n',m'),i} \big(\rho(i,l)-\wh{\rho}(i,l)\big) }{\wh{\rho}(i,l)\rho(i,l)}\bigg|\\
& \leq \frac{1}{\pi_{\min}^{(l)}}\bigg(\sum_{n'=1}^Y\sum_{m'=1}^R \frac{\big|[\wh{V}_2^{(l)}]_{(n',m'),i} - [V_2^{(l)}]_{(n',m'),i}\big|}{\wh{\rho}(i,l)} + \frac{\big|\rho(i,l)-\wh{\rho}(i,l)\big| }{\wh{\rho}(i,l)\rho(i,l)}\Big(\sum_{n'=1}^Y\sum_{m'=1}^R [V_2^{(l)}]_{(n',m'),i}\Big)  \bigg)\\
& \stackrel{(a)}{\leq} \frac{1}{\pi_{\min}^{(l)}}\bigg(\frac{\sqrt{YR}}{\wh{\rho}(i,l)}\big\|[\wh{V}_2^{(l)}]_{:,i} - [V_2^{(l)}]_{:,i}\big\|_2 + \frac{\big|\rho(i,l)-\wh{\rho}(i,l)\big| }{\wh{\rho}(i,l)\rho(i,l)}\Big(\sum_{m'=1}^R [V_2^{(l)}]_{(n',m'),i}\Big)  \bigg)\\
& \stackrel{(b)}{\leq} \frac{1}{\pi_{\min}^{(l)}}\bigg(\frac{\sqrt{YR}}{\wh{\rho}(i,l)}\epsilon_{2}(i) + \frac{\epsilon_{\rho}(i,l) }{\wh{\rho}(i,l)\rho(i,l)} \bigg)\\
& \stackrel{(c)}{\leq} \frac{1}{\pi_{\min}^{(l)}}\bigg(21\sqrt{YR}\frac{\epsilon_{3}(i)}{\sigma_{3,1}^{(l)}} + \epsilon_{\rho}(i,l) \bigg),
\end{align*}
where in $(a)$ we used the fact that we are only summing over $R$ elements (instead of the whole $YR$ dimensionality of the vector $[V_2^{(l)}]_{:,i}$), in $(b)$ we use Lemmas ~\ref{lem:view.bound}, ~\ref{lemma:view.bound2}, and in $(c)$ the fact that $1/\rho(i,l) = \Prob[a_2=l| x_2=i] \leq 1$ (similar for $1/\wh{\rho}(i,l)$). Recalling the definition of $\epsilon_\rho(i,l)$ and Lemma~\ref{lem:view.bound} and Lemma~\ref{lemma:view.bound2} we obtain

\begin{align*}
\|\wh{f}_O^{(l)}(\cdot|i) - f_O(\cdot |i)\|_1 &\leq \frac{62}{(\pi_{\min}^{(l)})^2}\sqrt{YR}\epsilon_{3}(l) \\
&\leq \B_O^{(l)} := \frac{C_O}{(\pi_{\min}^{(l)})^2\sigma_{1,3}^{(l)} (\omega^{(l)}_{\min})^{\frac{3}{2}}\min\limits_{\nu\in\{1,2,3\}}\lbrace\sigma^3_{\min}(V^{(l)}_\nu)\rbrace} \sqrt{\frac{YR\log(2\frac{Y^2+YAR}{\delta})}{N(l)}},
\end{align*}
where $C_O$ is a numerical constant.
As mentioned in Sect.~\ref{s:learning.pomdp}, since we obtain one estimate per action, in the end we define $\wh{f}_O$ as the estimate with the smallest confidence interval, that is
\begin{align*}
\wh{f}_O = \wh{f}_O^{(l^*)}, \quad l^*=\arg\min_{\{\wh{f}_O^{(l)}\}} \B_O^{(l)},
\end{align*}
whose corresponding error bound is
\begin{align*}
\|\wh{f}_O(\vec{e}_{n'} |i) - f_O(\vec{e}_{n}|i)\| \leq \B_O := \!\min_{l=1,\ldots,A}\! \frac{C_O}{(\pi_{\min}^{(l)})^2\sigma_{1,3}^{(l)} (\omega^{(l)}_{\min})^{\frac{3}{2}}\min\limits_{\nu\in\{1,2,3\}}\lbrace\sigma^3_{\min}(V^{(l)}_\nu)\rbrace} \sqrt{\frac{YR\log(2\frac{Y^2+YAR}{\delta})}{N(l)}}.
\end{align*}
The columns of estimated $O^{(l)}$ matrices are up to different permutations over states, i.e. these matrices have different columns ordering. Let's assume that the number of samples for each action is such a way that satisfies $\B_O^{(l)}\leq \frac{\deltaO}{4}$, $\forall l\in[A]$. Then, one can exactly match each matrix $O^{(l)}$ with $O^{(l^*)}$ and then propagate these orders to matrices $V_2^{(l)}$ and $V_3^{(l)}$, $\forall l\in[A]$. The condition $\B_O^{(l)}\leq \frac{\deltaO}{4}$, $\forall l\in[A]$ can be represented as follow
\begin{align*}
N(l)\geq \frac{16C_O^2YR}{{\lambda^{(l)}}^2\deltaO^2}~,~~ \forall l\in[A]
\end{align*}
%

\textbf{Step 4 (bound on $f_T$).}
The derivation of the bound for $\wh{f}_T$ is more complex since each distribution $\wh{f}_T(\cdot|x,a)$ is obtained as the solution of the linear system of equations in Eq.~\ref{eq:transition.recovery}, that is for any state $i$ and action $l$ we compute
\begin{align}\label{eq:est.transition}
[\wh{T}]_{i,:,l} = \wh{O}^\dagger [\wh{V}_3^{(l)}]_{:,i},
\end{align}
where $\wh{O}$ is obtained plugging in the estimate $\wh{f}_O$.\footnote{We recall that $\wh{f}_O$ corresponds to the estimate $\wh{f}_O^{(l)}$ with the tightest bound $B_O^{(l)}$.}
We first recall the following general result for the pseudo-inverse of a matrix and we instance it in our case. Let $\w$ and $\wh{\w}$ be any pair of matrix such that $\wh{\w} = \w+E$ for a suitable error matrix $E$, then we have~\cite{meng2010optimal}
\begin{equation}\label{eq:bound.inverse}
\|\w^\dagger-\wh{\w}^\dagger\|_2\leq\frac{1+\sqrt{5}}{2}\max{\bigg\{\|\w^\dagger\|_2,\|\wh{\w}^\dagger\|_2\bigg\}}\|E\|_2,
\end{equation}
where $\|\cdot\|_2$ is the spectral norm. 
Since Lemma~\ref{lemma:view.bound2} provides a bound on the error for each column of $V_2^{(l)}$ for each action and a bound on the error of $\rho(i,l)$ is already developed in Step 2, we can bound the $\ell_2$ norm of the estimation error for each column of $O$ and $\wh{O}$ as
\begin{align}\label{eq:bound.O}
\|\wh{O} - O\|_2\leq \|\wh{O} - O\|_F\leq \sqrt{X}\min_{l\in [A]}\B_O^{(l)}.
\end{align}
We now focus on the maximum in Eq.~\ref{eq:bound.inverse}, for which we need to bound the spectral norm of the pseudo-inverse of the estimated $W$. We have $\|\wh{O}^\dagger\|_2 \leq (\sigma_X(\wh{O}))^{-1}$ where $\sigma_X(\wh{O})$ is the $X$-th singular value of matrix $\wh{O}$ whose perturbation is bounded by $\|\wh{O}-O\|_2$. Since matrix $O$ has rank $X$ from Asm.~\ref{asm:observation} then
\begin{align*}
\|\wh{O}^\dagger\|_2 \leq (\sigma_X(\wh{O}))^{-1}\leq \frac{1}{\sigma_X(O)}\bigg(1+\frac{\|\wh{O}-O\|_2}{\sigma_X(O)}\bigg)\leq  \frac{1}{\sigma_X(O)}\bigg(1+\frac{\|\wh{O}-O\|_F}{\sigma_X(O)}\bigg).
\end{align*}
We are now ready to bound the estimation error of the transition tensor. From the definition of Eq.~\ref{eq:est.transition} we have that for any state $i=1,\ldots,X$ the error is bounded as
\begin{align*}
\|T_{i,:,l}-\wh{T}_{i,:,l}\|_2\leq \|T_{:,:,l}-\wh{T}_{:,:,l}\|_2\leq\|\wh{O}^\dagger - O^\dagger\|_2\|V_3^{(l)}\|_2+\|\wh{V}_3^{(l)}-V_3^{(l)}\|_2\|\wh{O}^\dagger\|_2.
\end{align*}
In Lemma~\ref{lem:view.bound} we have a bound on the $\ell_2$-norm of the error for each column of $V_3^{(l)}$, thus we have $\|\wh{V}_3^{(l)}-V_3^{(l)}\|_2 \leq\|\wh{V}_3^{(l)}-V_3^{(l)}\|_F \leq 18\sqrt{X}\epsilon_{3}(l)$. Using the bound on Eq.~\ref{eq:bound.inverse} and denoting $\|V_3^{(l)}\|_2=\sigma_{\max}(V_3^{(l)})$ we obtain
\begin{align*}
\|T_{i,:,l}-&\wh{T}_{i,:,l}\|_2 \\
&\leq \frac{1+\sqrt{5}}{2}\frac{\|\wh{O}-O\|_F}{\sigma_X(O)}\bigg(1+\frac{\|\wh{O}-O\|_F}{\sigma_X(O)}\bigg)\sigma_{\max}(V_3^{(l)}) + 18\sqrt{X}\epsilon_3(l) \frac{1}{\sigma_X(O)}\bigg(1+\frac{\|\wh{O}-O\|_F}{\sigma_X(O)}\bigg) \\
&\leq \frac{2}{\sigma_X(O)}\bigg(1+\frac{\|\wh{O}-O\|_F}{\sigma_X(O)}\bigg) \Big( \sigma_{\max}(V_3^{(l)})\|\wh{O}-O\|_F + 18\sqrt{X}\epsilon_3(l)\Big).
\end{align*}
Finally, using the bound in Eq.~\ref{eq:bound.O} and bounding $\sigma_{\max}(V_3^{(l)}) \leq \sqrt{X}$,\footnote{This is obtained by $\|V_3^{(l)}\|_2 \leq \sqrt{X} \|V_3^{(l)}\|_1 = \sqrt{X}$, since the sum of each column of $V_3^{(l)}$ is one.}
\begin{align*}
\|T_{i,:,l}-\wh{T}_{i,:,l}\|_2 &\leq \frac{4}{\sigma_X(O)} \Big( X \min_{l\in[A]}\B_O^{(l)} + 18\sqrt{X}\epsilon_3(l)\Big) \\
&\leq \frac{C_T}{\sigma_X(O)(\pi_{\min}^{(l)})^2\sigma_{1,3}^{(l)} (\omega^{(l)}_{\min})^{\frac{3}{2}}\min\limits_{\nu\in\{1,2,3\}}\lbrace\sigma^3_{\min}(V^{(l)}_\nu)\rbrace} \sqrt{\frac{X^2YR\log(8/\delta)}{N(l)}},
\end{align*}
thus leading to the final statement. Since we require all these bounds to hold simultaneously for all actions, the probability of the final statement is $1-3A\delta$. Notice that for the sake of readability in the final expression reported in the theorem we use the denominator of the error of the transition model to bound all the errors and we report the statement with probability $1-24A\delta$ are change the logarithmic term in the bounds accordingly.

\end{proof}




\section{Proof of Theorem~\ref{thm:regret}}\label{app:proof2}

\begin{proof}[Proof of Theorem~\ref{thm:regret}]
While the proof is similar to UCRL~\cite{jaksch2010near-optimal}, each step has to be carefully adapted to the specific case of POMDPs and the estimated models obtained from the spectral method.

\noindent\paragraph{Step 1 (regret decomposition).}
We first rewrite the regret making it explicit the regret accumulated over episodes, where we remove the burn-in phase
\begin{align*}
\text{Reg}_N &\leq \sum_{k=1}^K \bigg(\sum_{t=t^{(k)}}^{t^{(k+1)}-1} \Big(\eta^+ - r_t(x_t, \wt{\pi}_k(\vec{y}_t))\Big) + r_{\max}\psi\bigg)\\
&= \sum_{k=1}^K \sum_{t=t^{(k)}}^{t^{(k+1)}-1} \Big(\eta^+ - r_t(x_t, \wt{\pi}_k(\vec{y}_t))\Big) + r_{\max}K \psi,
\end{align*}
where $r_t(x_t, \wt{\pi}_k(\vec{y}_t))$ is the random reward observed when taking the action prescribed by the optimistic policy $\wt{\pi}_k$ depending on the observation triggered by state $x_t$.
We introduce the time steps $\T^{(k)} = \big\{t: t^{(k)}\leq t < t^{(k+1)}\big\}$, $\T^{(k)}(l) = \big\{t\in\T^{(k)}: l_t=l\}$, $\T^{(k)}(x,l) = \big\{t\in\T^{(k)}:x_t=x ,a_t=l\}$ and the counters $v^{(k)} = |\T^{(k)}|$, $v^{(k)}(l) = |\T^{(k)}(l)|$, $v^{(k)}(x,l)=|\T^{(k)}(x,l)|$, while we recall that $N^{(k)}(l)$ denotes the number of samples of action $l$ available at the beginning of episodes $k$ used to compute the optimistic policy $\wt{\pi}_k$. We first remove the randomness in the observed reward by Hoeffding's inequality as
\begin{align*}
\mathbb{P}\Bigg[ \sum_{t=t^{(k)}}^{t^{(k)}+v^{(k)}-1} r_t(x_t, \wt{\pi}_k(\vecy_t))\leq \sum\limits_{x,l}v^{(k)}(x,l)\bar{r}(x,l)-r_{\max}\sqrt{\frac{v^{(k)}\log{\frac{1}{\delta}}}{2}} \; \bigg| \;\{N^{(k)}(l)\}_{l}\Bigg]\leq \delta,
\end{align*}
where the probability is taken w.r.t.\ the reward model $f_R(\cdot|x,a)$ and observation model $f_O(\cdot|x)$, $\wb{r}(x,l)$ is the expected reward for the state-action pair $x,l$. Recalling the definition of the optimistic POMDP $\wt{M}^{(k)} = \arg\max_{M\in \M^{(k)}}\max_{\pi\in\calP} \eta(\pi; M)$, we have that $\eta^+ \leq \eta(\wt{M}^{(k)}; \wt{\pi}^{(k)}) = \wt{\eta}^{(k)}$, then applying the previous bound in the regret definition we obtain
\begin{align*}
\text{Reg}_N \leq \sum_{k=1}^K \underbrace{\sum_{x=1}^X \sum_{l=1}^A v^{(k)}(x,l)\Big(\wt{\eta}^{(k)} - \bar{r}(x,l)\Big)}_{\Delta^{(k)}} + r_{\max}\sqrt{N\log 1/\delta} + r_{\max}K\psi,
\end{align*}
with high probability, where the last term follows from Jensen's inequality and the fact that $\sum_k v^{(k)} = N$.

\noindent\paragraph{Step 2 (condition on $N(l)$).} As reported in Thm.~\ref{thm:estimates}, the confidence intervals are valid only if for each action $l=1,\ldots,A$ enough samples are available. As a result, we need to compute after how many episodes the condition in Eq.~\ref{eq:nl.condition} is satisfied (with high probability). We first roughly simplify the condition by introducing $\wb{\omega}_{\min}^{(l)} = \min_{\pi\in\calP} \min_{x\in\X} \omega_{\pi}^{(l)}(x)$ and
\begin{align*}
\wb{N} := \max_{l\in[A]}\max\bigg\{\frac{4}{(\sigma_{3,1}^{(l)})^2}, \frac{16C_O^2YR}{{\lambda^{(l)}}^2\deltaO^2}, \frac{C_2^2}{{(\wb{\omega}^{(l)}_{\min})^2\min\limits_{\nu\in\{1,2,3\}}\lbrace\sigma^4_{\min}(V^{(l)}_\nu)\rbrace}}\wb{\Theta}^{(l)}\bigg\} \log(2\frac{Y^2+YAR}{\delta}).
\end{align*}
\begin{align}\label{eq:thetabar}
\wb{\Theta}^{(l)}=\max\left\{\frac{16X^{\frac{1}{3}}}{C_1^{\frac{2}{3}}(\wb{\omega}^{(l)}_{\min})^{\frac{1}{3}}} ,4,\frac{2\sqrt{2}X}{C_1^2\omega^{(l)}_{\min}\min\limits_{\nu\in\{1,2,3\}}\lbrace\sigma^2_{\min}(V^{(l)}_\nu)\rbrace}\right\},
\end{align}
We recall that at the beginning of each episode $k$, the POMDP is estimated using $N^{(k)}(l)$ which is the largest number of samples collected for action $l$ in any episode prior to $k$, i.e., $N^{(k)}(l) = \max_{k'<k} v^{(k')}(l)$. Thus we first study how many samples are likely to be collected for any action $l$ in any episode of length $v$. Let $\tau_{M,\pi}^{(l)}$ is the mean passage time between two steps where action $l$ is chosen according to policy $\pi\in\calP$ then we define $\tau_M^{(l)} = \max_{\pi\in\calP} \tau_{M,\pi}^{(l)}= \max_{\pi\in\calP} \mathbb{E}[\mathcal{T}(l,l)]$, where $\mathcal{T}(l,l)$ is random variable and represent the passing time between two steps where action $l$ is chosen according to policy $\pi\in\calP$. By Markov inequality, the probability that it takes more than $2\tau_M^{(l)}$ to take the same action $l$ is at most $1/2$. If we divide the episode of length $v$ into $v/2\tau_{M}^{(l)}$ intervals of length $2\tau_{M}^{(l)}$, we have that within each interval we have a probability of 1/2 to observe a sample from action $l$, and thus on average we can have a total of $v/4\tau_{M}^{(l)}$ samples. Thus from Chernoff-Hoeffding, we obtain that the number of samples of action $l$ is such that
\begin{align*}
\Prob\bigg\{\exists l\in[A]: \; v(l)\geq \frac{v}{4\tau_{M}^{(l)}}-\sqrt{\frac{v\log(A/\delta)}{2\tau_{M}^{(l)}}}\bigg\}\geq 1-\delta.
\end{align*}
At this point we can derive a lower bound on the length of the episode that guarantee that the desired number of samples is collected. We solve
\begin{align*}
\frac{v}{4\tau_{M}^{(l)}}-\sqrt{\frac{v\log(A/\delta)}{2\tau_{M}^{(l)}}} \geq \wb{N},
\end{align*}
and we obtain the condition
\begin{align*}
\sqrt{v} \geq \sqrt{2\tau_{M}^{(l)} \log(A/\delta)} + \sqrt{2\tau_{M}^{(l)} \log(A/\delta) + 16\tau_{M}^{(l)}}\wb{N},
\end{align*}
which can be simplified to
\begin{align}\label{eq:v.condition}
v \geq \wb{v} := 24\tau_{M}^{(l)} \wb{N} \log(A/\delta).
\end{align}
Thus we need to find a suitable number of episodes $\wt{K}$ such that there exists an episode $k'<\wt{K}$ such that $v^{(k')}$ satisfies the condition in Eq.~\ref{eq:v.condition}. Since an episode is terminated when an action $l$ ($v^{(k)}(l)$) is selected twice the number of samples available at the beginning of the episode ($N^{(k)}(l)$), we have that at episode $k$ there was an episode in the past ($k'<k$) with at least $2^c$ steps with $c = \max\{n\in\mathbb{N}: An \leq k\}$, where $A$ is the number of actions (i.e., after $A c$ episodes there was at least one episode in which an action reached $2^c$ samples, which forced the episode to be at least that long). From condition in Eq.~\ref{eq:v.condition}, we need $2^c \geq \wb{v}$, which in turn gives $\wt{K} \geq A \log_2(\wb{v})$, which finally implies that $\wt{K} \leq A \log_2(\wb{v}) + 1$ is a sufficient condition on the number of episodes needed to guarantee that all the actions have been selected enough so that the condition of Thm.~\ref{thm:estimates} is satisfied. We are just left with measuring the regret accumulated over the first $\wt{K}$ episodes, that is
\begin{align}
\sum_{k=1}^{\wt{K}+1} \sum_{t=t^{(k)}}^{t^{(k+1)}-1} \Big(\eta^+ - r_t(x_t, \wt{\pi}_k(\vec{y}_t))\Big) \leq r_{\max}\sum_{k=1}^{\wt{K}+1} v^{(k)} \leq r_{\max}\sum_{k=1}^{\wt{K}+1} A2^k \leq 4r_{\max} A2^{\wt{K}} \leq 4Ar_{\max} (\wb{v}+1),
\label{eq:constantterm}
\end{align}
%
where in the first step we maximize the per-step regret by $r_{\max}$ and then we use a rough upper bound for the length of each episode (as if the length is doubled at each episode) and finally we use the upper-bound on $\wt{K}$.

\noindent\paragraph{Step 3 (failing confidence intervals).}
Even after the first $\wt{K}$ episodes, the confidence intervals used to construct the set of POMDPs $\M^{(k)}$ may not be correct, which implies that the true POMDP $M$ is not contained in $\M^{(k)}$. We now bound the regret in the case of failing confidence intervals from Thm.~\ref{thm:estimates}. We have
\begin{align*}
R^{\text{fail}} = \sum_{k=1}^K \bigg(\sum_{t=t^{(k)}}^{t^{(k+1)}-1} \Big(\eta^+ - r_t(x_t, \wt{\pi}_k(\vec{y}_t))\Big)\I_{M\notin\M^{(k)}}\bigg)\leq r_{\max}\sum_{k=1}^K v^{(k)}\I_{M\notin\M^{(k)}}\leq r_{\max}\sum_{t=1}^Nt\I_{M\notin\M^{(t)}},
\end{align*}
where $\M^{(t)}$ denotes the set of admissible POMDPs according to the samples available at time $t$. We recall from Step 2 that the number of steps needed for the statement of Thm.~\ref{thm:estimates} to be valid is $\wb{t} = 4 (\wb{v}+1)$. If $N$ is large enough so that $\wb{t} \leq N^{1/4}$
, then we bound the regret as
\begin{align*}
R^{\text{fail}} \leq \sum_{t=1}^{\lfloor N^{1/4}\rfloor}t\I_{M\notin\M^{(t)}} + \sum_{t=\lfloor N^{1/4}\rfloor+1}^Nt\I_{M\notin\M^{(t)}}\leq \sqrt{N}+\sum_{t=\lfloor N^{1/4}\rfloor+1}^Nt\I_{M\notin\M^{(t)}}.
\end{align*}
We are left with bounding the last term. We first notice that if we redefine the confidence intervals in Thm.~\ref{thm:estimates} by substituting the term $\log(1/\delta)$ by $\log(t^6/\delta)$, we obtain that at any time instants $t$, the statement holds with probability $1-24A\delta/t^6$. Since
\begin{align*}
\sum_{t=\lfloor N^{1/4}\rfloor+1}^N\frac{24A}{t^6}\leq \frac{24A}{N^{6/4}}+\int_{\lfloor N^{1/4}\rfloor}^{\infty}\frac{24A}{t^6}dt=\frac{24A}{N^{6/4}}+\frac{24A}{5N^{5/4}}\leq\frac{144A}{5N^{5/4}} \leq\frac{30A}{N^{5/4}},
\end{align*}
then $M$ is in the set of $\M^{(k)}$ at any time step $\lfloor N^{1/4}\rfloor\leq t\leq N$ with probability $1-30A\delta/N^{5/4}$. As a result, the regret due to failing confidence bound is bounded by $\sqrt{N}$ with probability $1-30A\delta/N^{5/4}$.

\noindent\paragraph{Step 4 (reward model).}
Now we focus on the per-episode regret $\Delta^{(k)}$ for $k> \wb{K}$ when $M$ is contained in $\wt{\mathcal{M}}_k$ and we decompose it in two terms
\begin{align*}
\Delta^{(k)} \leq \sum_{x=1}^X \sum_{l=1}^A v^{(k)}(x,l)\underbrace{\Big(\wt{\eta}^{(k)} - \wt{r}^{(k)}(x,l)\Big)}_{(a)} + \underbrace{\sum_{x=1}^X \sum_{l=1}^A v^{(k)}(x,l)\Big(\wt{r}^{(k)}(x,l) - \bar{r}(x,l)\Big)}_{(b)},
\end{align*}
where $\wt{r}^{(k)}$ is the state-action expected reward used in the optimistic POMDP $\wt{\M}^{(k)}$. We start by bounding the second term, which only depends on the size of the confidence intervals in estimating the reward model of the POMDP. We have
\begin{align*}
(b) &\leq \sum_{l=1}^A \sum_{x=1}^Xv^{(k)}(x,l)\max_{x'\in\X}\Big|\wt{r}^{(k)}(x',l) - \bar{r}(x',l)\Big| \\
&=\sum_{l=1}^A v^{(k)}(l)\max_{x\in\X}\Big|\wt{r}^{(k)}(x,l) - \bar{r}(x,l)\Big|\\
&\leq2\sum_{l=1}^A v^{(k)}(l) \B_R^{(k,l)},
\end{align*}
where $\B_R^{(k,l)}$ corresponds to the term $\B_R^{(l)}$ in Thm.~\ref{thm:estimates} computed using the $N^{(k)}(l)$ samples collected during episode $k^{(l)} = \arg\max_{k'<k} v^{(k')}(l)$.

\noindent\paragraph{Step 5 (transition and observation models).}
We now proceed with studying the first term $(a)$, which compares the (optimal) average reward in the optimistic model $\wt{\M}^{(k)}$ and the (optimistic) rewards collected on the states traversed by policy $\wt{\pi}^{(k)}$ in the true POMDP. We first recall the Poisson equation. For any POMDP $M$ and any policy $\pi$, the action value function $Q_{\pi,M}: \X\times \A\rightarrow \Re$ satisfies
\begin{align}\label{eq:poisson}
Q_{\pi,M}(x,a) &= \wb{r}(x,a) - \eta_\pi + \sum_{x'} f_{T}(x'|x,a) \Big(\sum_{a'} f_{\pi}(a'|x') Q_{\pi,M}(x',a')\Big)\\
&\Rightarrow \eta_\pi - \wb{r}(x,a) = \sum_{x'} f_{T}(x'|x,a) \Big(\sum_{a'} f_{\pi}(a'|x') Q_{\pi,M}(x',a')\Big) - Q_{\pi,M}(x,a),\nonumber
\end{align}
where $f_{\pi}(a'|x') = \sum_y f_O(y|x')f_\pi(a'|y)$ and terms such as $\wb{r}$ and $f_{T}$ depend on the specific POMDP. We define the function $\wb{Q}_{\pi,M}(x,l)$ as
\begin{align*}
\wb{Q}_{\pi,M}(x,l)={Q}_\pi(x,l)-\frac{\min\limits_{x,l}Q_{\pi,M}(x,l)-\max\limits_{x,l}Q_{\pi,M}(x,l)}{2},
\end{align*}
which is a centered version of $Q_{\pi,M}(x,l)$. In order to characterize $\wb{Q}$, we introduce a notion of diameter specific to POMDPs and the family of policies considered in the problem
\begin{align*}
D:=\max_{x,x'\in[X],~l,l'\in [A]}\min_{\pi\in\calP}\mathbb{E}[T(x',l'|M,\pi,x,l)],
\end{align*}
where $T(x',l'|M,\pi,x,l)$ is the (random) time that takes to move from state $x$ by first taking action $l$ and then following policy $\pi$ before reaching state $x'$ and performing action $l'$. An important feature of the diameter is that it can be used to upper bound the range of the function $Q_{\pi,M}$ computed using a policy derived from Eq~\ref{eq:optimistic.policy} in an optimistic model. The proof of this fact is similar to the case of the diameter for MDPs. We first recall the definition of the optimistic policy
\begin{align}
\wt{\pi}^{(k)} = \arg\max_{\pi\in\calP}\max_{M\in\M^{(k)}} \eta(\pi; M),
\end{align}
while $M_k$ is the optimistic model. The joint choice of the policy and the model can be seen as if a POMDP $\wt{M}^+$ with augmented action space $\A'$ is considered. Taking an action $a'\in\A'$ in a state $x$ corresponds to a basic action $a\in\A$ and a choice of transition, reward, and observation model from $\mathcal{M}^{(k)}$. We denote by $\calP^{(\M^{(k)})}$ the corresponding augmented policy space using $\calP$ and the set of admissible POMDPs $\mathcal{M}^{(k)}$. As a result, for any augmented policy $\wt{\pi}^+$ executed in $\wt{M}^+$ we obtain transitions, rewards, and observations that are equivalent to executing a (standard) policy $\wt{\pi}$ in a specific POMDP $\wt{M}\in\mathcal{M}^{(k)}$ and viceversa. As a result, computing $\wt{\pi}^{(k)}$ and the corresponding optimistic model $\wt{M}^{(k)}$ is equivalent to choosing the optimal policy in the POMDP $\wt{M}^+$. Since the true POMDP of diameter $D$ is in $\mathcal{M}^{(k)}$ with high-probability, then the diameter of the augmented POMDP $\wt{M}^+$ is at most $D$. Furthermore, we can show that the optimal policy has a Q-value with range bounded by $D$. Let us assume that there exists state-action pairs $(x,a), (x',a')$ such that $Q_{\wt{\pi}^{(k)},\wt{M}^{(k)}}(x,a)-Q_{\wt{\pi}^{(k)},\wt{M}^{(k)}}(x',a')\geq r_{\max}D$. Then it is easy to construct a policy different from $\wt{\pi}^{(k)}$ which achieves a better Q-value. We already know by definition of diameter that there exists a policy moving from $x$ to $x'$ in $D$ steps on average. If from $x'$ the optimal policy is followed, then only $r_{\max}D$ reward could have been missed at most and thus the difference in action-value function between $x$ and $x'$ cannot be larger than $r_{\max}D+1$, thus contradicting the assumption. As a result, we obtain
\begin{align}\label{eq:dinequ}
\max_{x,a}Q_{\wt{\pi}^{(k)},\wt{M}^{(k)}}(x,a)-\min_{x,a} Q_{\wt{\pi}^{(k)},\wt{M}^{(k)}}(x,a)\leq r_{\max}D
\end{align}
and thus
\begin{align*}
\wb{Q}_{\wt{\pi}^{(k)},\wt{M}^{(k)}}(x,a)(x,l)\leq r_{\max}\frac{(D+1)}{2}.
\end{align*}
By replacing $Q$ with $\wb{Q}$ in the Poisson equation for the optimistic POMDP characterized by the transition model $\wt{f}^{(k)}_{T}$ and where the observation model is such that the policy $\wt{\pi}_k$ takes actions according to the distribution $\wt{f}^{(k)}_{{\wt{\pi}^{(k)}}}(\cdot|x)$, we obtain
\begin{align*}
(a) &= \sum_{x'} \wt{f}^{(k)}_{T}(x'|x,l) \Big(\sum_{l'} \wt{f}^{(k)}_{{\wt{\pi}^{(k)}}}(a'|x') \wb{Q}_{\wt{\pi}^{(k)},\wt{M}^{(k)}}(x',a')\Big) - \wb{Q}_{\wt{\pi}^{(k)},\wt{M}^{(k)}}(x,l)\\
&= \sum_{x'} \wt{f}^{(k)}_{T}(x'|x,l) \Big(\sum_{l'} \wt{f}^{(k)}_{{\wt{\pi}^{(k)}}}(l'|x') \wb{Q}_{\wt{\pi}^{(k)},\wt{M}^{(k)}}(x',l')\Big)\\
&- \sum_{x'} f_{T}(x'|x,l) \Big(\sum_{l'} f_{{\wt{\pi}^{(k)}}}(l'|x') \wb{Q}_{\wt{\pi}^{(k)},\wt{M}^{(k)}}(x',l')\Big) \\
&\quad\quad+ \sum_{x'} f_{T}(x'|x,l) \Big(\sum_{l'} f_{{\wt{\pi}^{(k)}}}(l'|x') \wb{Q}_{\wt{\pi}^{(k)},\wt{M}^{(k)}}(x',l')\Big) - \wb{Q}_{\wt{\pi}^{(k)},\wt{M}^{(k)}}(x,l)\\
&= \underbrace{\sum_{x'} \sum_{l'} \big( \wt{f}^{(k)}_{T}(x'|x,l) \wt{f}^{(k)}_{{\wt{\pi}^{(k)}}}(l'|x') - f_{T}(x'|x,l) f_{{\wt{\pi}^{(k)}}}(l'|x')\big) \wb{Q}_{\wt{\pi}^{(k)},\wt{M}^{(k)}}(x',l')}_{(c)} \\
&\quad\quad+ \underbrace{\sum_{x'} f_{T}(x'|x,l) \Big(\sum_{l'} f_{{\wt{\pi}^{(k)}}}(l'|x') \wb{Q}_{\wt{\pi}^{(k)},\wt{M}^{(k)}}(x',l')\Big) - \wb{Q}_{\wt{\pi}^{(k)},\wt{M}^{(k)}}(x,l)}_{\zeta^{(k)}(x,l)}.
\end{align*}
The term $(c)$ can be further expanded as
\begin{align*}
(c) &= \sum_{x'} \sum_{l'}\Big( \big( \wt{f}^{(k)}_{T}(x'|x,l)-f_{T}(x'|x,l)\big) \wt{f}^{(k)}_{{\wt{\pi}^{(k)}}}(l'|x') - f_{T}(x'|x,l) \big(\wt{f}^{(k)}_{{\wt{\pi}^{(k)}}}(l'|x')-f_{{\wt{\pi}^{(k)}}}(l'|x')\big) \Big)\wb{Q}_{\wt{\pi}^{(k)},\wt{M}^{(k)}}(x',l')\\
&\leq \Big(\underbrace{\sum_{x'} \big| \wt{f}^{(k)}_{T}(x'|x,l)-f_{T}(x'|x,l)\big|}_{(d)} +\sum_{x'} f_{T}(x'|x,l) \underbrace{\sum_{l'}\big|\wt{f}^{(k)}_{{\wt{\pi}^{(k)}}}(l'|x')-f_{{\wt{\pi}^{(k)}}}(l'|x')\big|}_{(d')}  \Big) \|\wb{Q}_{\wt{\pi}^{(k)},\wt{M}^{(k)}}\|_\infty,
\end{align*}
where we used the fact that $\sum_{l'}\wt{f}^{(k)}_{{\wt{\pi}^{(k)}}}(l'|x')=1$.
For the first term we can directly apply the bound from Thm.~\ref{thm:estimates}, Eq.~\ref{eq:transition.bound} and obtain
\begin{align*}
(d) &= \|\wt{f}^{(k)}_{T}(\cdot|x,l)-f_{T}(\cdot|x,l)\|_1 \leq2\sqrt{X} \B_T^{(k,l)}.
\end{align*}
As for $(d')$, the error in estimating the observation model is such that
\begin{align*}
&(d')=\sum_{l'}\sum_y |\wt{f}^{(k)}_O(y|x')- f_O(y|x')| f_{\wt{\pi}^{(k)}}^{(k)}(l'|y)= \sum_y |\wt{f}^{(k)}_O(y|x')- f_O(y|x')|\leq2 \B_O^{(k)}.
\end{align*}
Plugging back these two bounds into $(c)$ together with the bound on $\wb{Q}_{\wt{\pi}^{(k)},\wt{M}^{(k)}}(x,l)$, we obtain
\begin{align*}
(c) &\leq 2(\sqrt{X}\B_T^{(k,l)} + \B_O^{(k)})r_{\max}\frac{(D+1)}{2}.
\end{align*}
The term $(a)$ in the per-episode regret is thus bounded as
\begin{align*}
(a) \leq 2\big(\sqrt{X}\B_T^{(k,l)} + \B_O^{(k)}\big)r_{\max}\frac{(D+1)}{2} + \zeta^{(k)}(x,l).
\end{align*}

\noindent\paragraph{Step 6 (Residual error).}
We now bound the cumulative sum of the terms $\zeta^{(k)}(x,l)$. At each episode $k$ we have
\begin{align*}
\sum_{x=1}^X \sum_{l=1}^A v^{(k)}(x,l) \zeta^{(k)}(x,l) = \sum_{t=t^{(k)}}^{t^{(k+1)}}\sum_{x'} f_{T}(x'|x_t,l_t) \Big(\sum_{l'} f_{{\wt{\pi}^{(k)}}}(l'|x') \wb{Q}_{\wt{\pi}^{(k)},\wt{M}^{(k)}}(x',l')\Big) - \wb{Q}_{\wt{\pi}^{(k)},\wt{M}^{(k)}}(x_t,l_t),
\end{align*}
we introduce the term $\wb{Q}_{\wt{\pi}^{(k)},\wt{M}^{(k)}}(x_{t+1},l_{t+1})$ and we obtain two different terms
\begin{align*}
&\sum_{x=1}^X \sum_{l=1}^A v^{(k)}(x,l) \zeta^{(k)}(x,l) \\
&=\sum_{t=t^{(k)}}^{t^{(k+1)}}\sum_{x'} f_{T}(x'|x_t,l_t) \Big(\sum_{l'} f_{{\wt{\pi}^{(k)}}}(l'|x') \wb{Q}_{\wt{\pi}^{(k)},\wt{M}^{(k)}}(x',l')\Big) - \wb{Q}_{\wt{\pi}^{(k)},\wt{M}^{(k)}}(x_{t+1},l_{t+1}) \\
&\quad+ \wb{Q}_{\wt{\pi}^{(k)},\wt{M}^{(k)}}(x_{t+1},l_{t+1}) - \wb{Q}_{\wt{\pi}^{(k)},\wt{M}^{(k)}}(x_t,l_t)\\
&\leq \sum_{t=t^{(k)}}^{t^{(k+1)}} \underbrace{\sum_{x'} f_{T}(x'|x_t,l_t) \Big(\sum_{l'} f_{{\wt{\pi}^{(k)}}}(l'|x') \wb{Q}_{\wt{\pi}^{(k)},\wt{M}^{(k)}}(x',l')\Big) - \wb{Q}_{\wt{\pi}^{(k)},\wt{M}^{(k)}}(x_{t+1},l_{t+1})}_{Y_t} + r_{\max}D,
\end{align*}
where we use the fact that the range of $\wb{Q}_{\wt{\pi}^{(k)},\wt{M}^{(k)}}$ is bounded by the diameter $D$. We notice that $\E[Y_t|x_1,a_1,y_1,\ldots,x_t,a_t,y_t] = 0$ and $|Y_t| \leq r_{\max}D$, thus $Y_t$ is a martingale difference sequence and we can use Azuma's inequality to bound its cumulative sum. In fact, we have
\begin{align*}
\sum_{t=1}^N Y_t \leq D\sqrt{2N\log(N^{5/4}/\delta)}
\end{align*}
with probability $1-\delta/N^{5/4}$. As a result we can now bound the total sum of the terms $\zeta^{(k)}$ as
\begin{align*}
\sum_{x=1}^X \sum_{l=1}^A v^{(k)}(x,l) \zeta^{(k)}(x,l) \leq \sum_{t=1}^N Y_t + r_{\max}KD \leq r_{\max}D\sqrt{2N\log(N^{5/4}/\delta)} + r_{\max}KD.
\end{align*}

\noindent\paragraph{Step 7 (per-episode regret).}
We can now bound the per-episode regret as
\begin{align*}
\Delta^{(k)} \leq \sum_{l=1}^A v^{(k)}(l)2\bigg(\B_R^{(k,l)} + \big(\sqrt{X}\B_T^{(k,l)} + \B_O^{(k)}\big)r_{\max}\frac{(D+1)}{2}\bigg).
\end{align*}
Recalling the results from Thm.~\ref{thm:estimates}, we can bound the first term in the previous expression as
\begin{align*}
\Delta^{(k)} \leq {3r_{\max}(D+1)}\sqrt{d'\log(N^6/\delta)}\big(C_O+C_R+C_TX^{3/2}\big)\sum_{l=1}^A \frac{v^{(k)}(l)}{\lambda^{(k,l)}} \sqrt{\frac{1}{N^{(k)}(l)}}.
\end{align*}
Since the number of samples $N^{(k)}(l)$ collected in the previous episode is at most doubled in the current episode $k$, we have that $N^{(k)}(l) \geq v^{(k)}(l)/2$, then we obtain
\begin{align*}
\Delta^{(k)} \leq {9r_{\max}(D+1)}\sqrt{v^{(k)}d'\log(N^6/\delta)}\big(C_O+C_R+C_TX^{3/2}\big) \max_{l=1,\ldots,A}\frac{1}{\lambda^{(k,l)}}.
\end{align*}

\noindent\paragraph{Step 8 (bringing all together).}
Now we have to recollect all the previous terms: the number of episodes needed to use Thm.~\ref{thm:estimates} (Step 2), regret in case of failing confidence intervals (Step 3), and the per-episode regret (Step 7). The result is
\begin{align*}
&\text{Reg}_N \\
&\leq r_{\max}\Big(\underbrace{\sqrt{N\log(N^6/\delta)}}_{\text{Step 1}} + \underbrace{4Ar_{\max}(\wb{v}+1)}_{\text{Step 2}} + \underbrace{\sqrt{N}}_{\text{Step 3}} + \underbrace{D\sqrt{2N\log(N^{5/4}/\delta)} + r_{\max}KD}_{\text{Step 6}}\Big) + \sum_{k=\wt{K}+1}^K \Delta^{(k)}.
\end{align*}
The last term can be bounded as
\begin{align*}
\sum_{k=\wt{K}+1}^K \Delta^{(k)} \leq \frac{9r_{\max}(D+1)}{\wb{\lambda}}\sqrt{Nd'\log(N^6/\delta)}\big(C_O+C_R+C_TX^{3/2}\big).
\end{align*}
where $\wb{\lambda} = \min_{k,l}\lambda^{(k,l)}$ and it is defined as in the statement of the theorem.
Since $K$ is a random number, we need to provide an upper-bound on it. We can use similar arguments as in Step 2. Given the stopping criterion of each episode, at most every $A$ steps, then length of an episode is doubled. As a result, after $K$ episodes, we have these inequalities
\begin{align*}
N = \sum_{k=1}^K v^{(k)} \geq \sum_{k'=1}^{K/A}2^{k'} \geq 2^{K/A}.
\end{align*}
As a result, we obtain the upper bound $K \leq \wb{K}_N \leq A \log_2 N$.
Bringing all the bounds together we obtain the final statement with probability $1-\delta / (4N^{5/4})$.

\end{proof}





\section{Proof of Remark 2 in Section~\ref{s:learning}}\label{s:proof.remark}

We first prove the bound on the transition tensor, which requires re-deriving step 4 in the proof of Thm.~\ref{thm:estimates}.

\begin{proof}

\paragraph{Step 4 (bound on $f_T$).}
The derivation of the bound for $\wh{f}_T$ is more complex since each distribution $\wh{f}_T(\cdot|x,a)$ is obtained as the solution of the linear system of equations like Eq.~\ref{eq:transition.recovery.inv}, that is for any state $i$ and action $l$ we compute

\begin{align*}
[T]_{i,:,l} = W^\dagger [V_3^{(l)}]_{:,i},
\end{align*}
and derive transition tensor as follows

\begin{align}
[\wh{T}]_{i,:,l} = \wh{W}^\dagger [\wh{V}_3^{(l)}]_{:,i},
\end{align}
where $\wh{W}$ is obtained plugging in the estimates of $\wh{f}_O$ and $\wh{f}_R$ and the policy $f_\pi$.
We first recall the following general result for the pseudo-inverse of a matrix and we instance it in our case. Let $W$ and $\wh{W}$ be any pair of matrix such that $\wh{W} = W+E$ for a suitable error matrix $E$, then we have~\cite{meng2010optimal}
\begin{equation}\label{eq:bound.inverse2}
\|W^\dagger-\wh{W}^\dagger\|_2\leq\frac{1+\sqrt{5}}{2}\max{\bigg\{\|W^\dagger\|_2,\|\wh{W}^\dagger\|_2\bigg\}}\|E\|_2,
\end{equation}
where $\|\cdot\|_2$ is the spectral norm. From the definition of $W$ and $V_2^{(l)}$ we have
\begin{align*}
[W]_{s,j} = [W]_{(n,m,k), j} &= f_\pi(k| \vec{e}_n) f_R(\vec{e}_m | j, k) f_O(\vec{e}_n|j),\\
[V_2^{(l)}]_{s,i} = [V_2^{(l)}]_{(n',m'),i} &= \rho(i,l) f_\pi(l|\vec{e}_{n'}) f_O(\vec{e}_{n'}|i) f_R(\vec{e}_{m'}|i,l).
\end{align*}
Then it is clear that any column $j$ of $W$ is the result of stacking the matrices $V_2^{(l)}$ over actions properly re-weighted by $\rho(i,l)$, that is
\begin{align*}
[W]_{:,j} = \bigg[ \frac{[V_2^{(1)}]_{:,j}^{\top}}{\rho(j,1)} \enspace\cdots\enspace \frac{[V_2^{(l)}]_{:,j}^{\top}}{\rho(j,l)} \enspace\cdots\enspace \frac{[V_2^{(A)}]_{:,j}^{\top}}{\rho(j,A)} \bigg]^\top.
\end{align*}
The same relationship holds for $\wh{W}$ and $\wh{V}_2^{(l)}$. Since Lemmas ~\ref{lemma:view.bound2} and Eq.~\ref{eq:rhobound} provide a bound on the error for each column of $V_2^{(l)}$ for each action and a bound on the error of $\rho(i,l)$ is already developed in Step 2, we can bound the $\ell_2$ norm of the estimation error for each column of $W$ and $\wh W$ as
\begin{align*}
\|[\wh{W}]_{:,i}- [W]_{:,i}\|_2^2 = \sum\limits_l^A \sum\limits_{m',n'}^{R,Y}\bigg(\frac{[\wh{V}_2^{(l)}]_{(n',m'),i}}{\wh{\rho}(i,l)}-\frac{[{V}_2^{(l)}]_{(n',m'),i}}{{\rho(i,l)}}\bigg)^2.
\end{align*}
Following similar steps as in Step 3, each summand can be bounded as
\begin{align*}
&\bigg| \frac{[\wh{V}_2^{(l)}]_{(n',m'),i}}{\wh{\rho}(i,l)}-\frac{[{V}_2^{(l)}]_{(n',m'),i}}{{\rho(i,l)}} \bigg| \leq \Big| [\wh{V}_2^{(l)}]_{(n',m'),i}-[{V}_2^{(l)}]_{(n',m'),i} \Big| + \bigg|\frac{1}{\rho(i,l)}-\frac{1}{\wh{\rho}(i,l)}\bigg| [{V}_2^{(l)}]_{(n',m'),i}.
\end{align*}
Then the $\ell_2$-norm of the error is bounded as
\begin{align*}
\|[\wh{W}]_{:,i}- [W]_{:,i}\|_2 &\leq \sqrt{\sum_{l=1}^A \|[\wh{V}_2^{(l)}]_{:,i}-[{V}_2^{(l)}]_{:,i}\|_2^2} + \sqrt{\sum_{l=1}^A \bigg(\frac{1}{\rho(i,l)}-\frac{1}{\wh{\rho}(i,l)}\bigg)^2 \sum\limits_{m',n'}^{R,Y}[{V}_2^{(l)}]^2_{(n',m'),i}}\\
&\leq \sqrt{\sum_{l=1}^A \|[\wh{V}_2^{(l)}]_{:,i}-[{V}_2^{(l)}]_{:,i}\|_2^2} + \sqrt{\sum_{l=1}^A \bigg(\frac{1}{\rho(i,l)}-\frac{1}{\wh{\rho}(i,l)}\bigg)^2}\\
&\leq 20\sqrt{\sum_{l=1}^A \frac{\epsilon_3(l)}{\sigma_{3,1}^{(l)}}^2} + \sqrt{\sum_{l=1}^A \epsilon_\rho^2(i,l)}\\
&\leq \sum_{l=1}^A \big(20\frac{\epsilon_3(l)}{\sigma_{3,1}^{(l)}} + \epsilon_\rho(i,l)\big) \leq 40\sqrt{YR} \sum_{l=1}^A \frac{\epsilon_3(l)}{\sigma_{3,1}^{(l)}\pi_{\min}^{(l)}}.
\end{align*}
Now we can bound the spectral norm of the error in estimating $W$ as
\begin{align}\label{eq:bound.A}
\|\wh{W} - W\|_2\leq \|\wh{W} - W\|_F\leq 40\sqrt{XYR}\sum_{l=1}^A\frac{\epsilon_3(l)}{\sigma_{3,1}^{(l)}\pi_{min}^{(l)}}.
\end{align}
We now focus on the maximum in Eq.~\ref{eq:bound.inverse}, for which we need to bound the spectral norm of the pseudo-inverse of the estimated $W$. We have $\|\wh{W}^\dagger\|_2 \leq (\sigma_X(\wh{W}))^{-1}$ where $\sigma_X(\wh{W})$ is the $X$-th singular value of matrix $\wh{W}$ whose perturbation is bounded by $\|\wh{W}-W\|_2$. Since matrix $W$ is a rank $X$ matrix on Asm.~\ref{asm:observation} then
\begin{align*}
\|\wh{W}^\dagger\|_2 \leq (\sigma_X(\wh{W}))^{-1}\leq \frac{1}{\sigma_X(W)}\bigg(1+\frac{\|\wh{W}-W\|_2}{\sigma_X(W)}\bigg)\leq  \frac{1}{\sigma_X(W)}\bigg(1+\frac{\|\wh{W}-W\|_F}{\sigma_X(W)}\bigg).
\end{align*}
We are now ready to bound the estimation error of the transition tensor. From the definition of Eq.~\ref{eq:est.transition} we have that for any state $i=1,\ldots,X$ the error is bounded as
\begin{align*}
\|T_{i,:,l}-\wh{T}_{i,:,l}\|_2\leq \|T_{:,:,l}-\wh{T}_{:,:,l}\|_2\leq\|\wh{W}^\dagger - W^\dagger\|_2\|V_3^{(l)}\|_2+\|\wh{V}_3^{(l)}-V_3^{(l)}\|_2\|\wh{W}^\dagger\|_2.
\end{align*}
In Lemma~\ref{lem:view.bound} we have a bound on the $\ell_2$-norm of the error for each column of $V_3^{(l)}$, thus we have $\|\wh{V}_3^{(l)}-V_3^{(l)}\|_2 \leq\|\wh{V}_3^{(l)}-V_3^{(l)}\|_F \leq 18\sqrt{X}\epsilon_{3}(l)$. Using the bound on Eq.~\ref{eq:bound.inverse} and denoting $\|V_3^{(l)}\|_2=\sigma_{\max}(V_3^{(l)})$ we obtain
\begin{align*}
\|T_{i,:,l}-\wh{T}_{i,:,l}\|_2 &\leq \frac{1+\sqrt{5}}{2}\frac{\|\wh{W}-W\|_F}{\sigma_X(W)}\bigg(1+\frac{\|\wh{W}-W\|_F}{\sigma_X(W)}\bigg)\sigma_{\max}(V_3^{(l)}) + 18\sqrt{X}\epsilon_3(l) \frac{1}{\sigma_X(W)}\bigg(1+\frac{\|\wh{W}-W\|_F}{\sigma_X(W)}\bigg) \\
&\leq \frac{2}{\sigma_X(W)}\bigg(1+\frac{\|\wh{W}-W\|_F}{\sigma_X(W)}\bigg) \Big( \sigma_{\max}(V_3^{(l)})\|\wh{W}-W\|_F + 18\sqrt{X}\epsilon_3(l)\Big)
\end{align*}
Using the bound in Eq.~\ref{eq:bound.A} and $\sigma_{\max}(V_3^{(l)}) \leq \sqrt{X}$ we obtain
\begin{align*}
\|T_{i,:,l}-\wh{T}_{i,:,l}\|_2 &\leq \frac{4}{\sigma_X(W)}\Big( 40\sqrt{X^2YR}\sum_{l=1}^A\frac{\epsilon_3(l)}{\sigma_{3,1}^{(l)}\pi_{min}^{(l)}} + 18\sqrt{X}\epsilon_3(l)\Big)\\
&\leq C_T\sqrt{AX^2YR} \max_{l'=1,\ldots, A} \frac{1}{\lambda^{l'}} \sqrt{\frac{\log(8/\delta)}{N_{l'}}},
\end{align*}
thus leading to the final statement for the bound over confidence of transition tensor.
\end{proof}

We now move to analyzing how the new estimator for the transition tensor affects the regret of the algorithm. The proof is exactly the same as in Thm.~\ref{thm:regret} except for step 8.

\begin{proof}
\noindent\paragraph{per-episode regret:}
The per-episode regret is bounded as
\begin{align*}
\Delta^{(k)} \leq 2\sum_{l=1}^A v^{(k)}(l)\bigg(\B_R^{(k,l)} + \big(\B_T^{(k,l)} + \B_O^{(k)}\big)r_{\max}\frac{(D+1)}{2}\bigg).
\end{align*}
All the terms can be treated as before except for the cumulative regret due to the transition model estimation error. We define
$\Delta_N =\sum_{k=\wt{K}+1}^K \sum_{l=1}^A v^{(k)}(l)\big(\B_T^{(k,l)}\big)r_{\max}{(D+1)}$, which gives
\begin{align*}
\Delta_T=\sum_kr_{max}{(D+1)}\sum_{l=1}^Av^{(k)}(l)\sqrt{X} \max_{l'=1,\ldots,A}\frac{C_T^{(k)} \sqrt{X^2YR}}{\lambda^{(k)(l')}} \sqrt{\frac{\log(N^6/\delta)}{v^{(k)}(l')}}.
\end{align*}
Let $\tau_{M,\pi}^{(l)}$ the mean passage time between two steps where action $l$ is chosen according to policy $\pi\in\calP$ and restate a  $\pi$-diameter ration $\Dr^{\pi}$
\begin{align*}
\Dr^{\pi}=\frac{\max_{l\in\mathcal{A}}\tau_{M,\pi}^{(l)}}{\min_{l\in\mathcal{A}}\tau_{M,\pi}^{(l)}}
\end{align*}
and $\Dr$
\begin{align*}
\Dr=\displaystyle\max_{\pi\in\mathcal{P}}\Dr^{\pi}.
\end{align*}
We need the following lemma which directly follows from Chernoff-Heoffding inequality.

\begin{lemma}\label{lem:visits}
By Markovian inequality, the probability that during $2\tau_{M,\pi}^{(l)}$ the action $l$ is not visited is less than $\frac{1}{2}$. Then during episode $k$
\begin{align*}
\mathbb{P}\Big\{v^{(k)}(l)\leq \frac{1}{2}\frac{v^{(k)}}{2\tau_{M,\pi}^{(l)}}-\sqrt{v^{(k)}\log(\frac{1}{\delta})}\Big\}\leq \delta
\end{align*}
On the other hand we have
\begin{align*}
\mathbb{P}\Big\{v^{(k)}(l)\geq \frac{v^{(k)}}{2\tau_{M,\pi}^{(l)}}+\sqrt{v^{(k)}\log(\frac{1}{\delta})}\Big\}\leq \delta
\end{align*}
\end{lemma}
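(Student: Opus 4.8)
The plan is to prove both inequalities as concentration statements for the counting variable $v^{(k)}(l)$, the number of times action $l$ is chosen during the $v^{(k)}$ steps of episode $k$ under the fixed policy $\pi$. The mechanism is exactly the one sketched in Step 2 of the proof of Theorem~\ref{thm:regret}: slice the episode into equal blocks, argue that each block ``catches'' action $l$ with probability at least $1/2$, and then invoke Chernoff--Hoeffding on the resulting (conditionally) Bernoulli indicators.

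For the lower tail I would partition the $v^{(k)}$ steps into $n = v^{(k)}/(2\tau_{M,\pi}^{(l)})$ consecutive blocks of length $2\tau_{M,\pi}^{(l)}$. Since $\tau_{M,\pi}^{(l)}$ is the mean passage time between two consecutive selections of $l$, Markov's inequality gives that, starting from any step, the probability that $l$ is not selected within the next $2\tau_{M,\pi}^{(l)}$ steps is at most $1/2$; by the strong Markov property this holds conditionally on the whole history up to the start of the block. Setting $Z_i = \I[\text{block } i \text{ contains a selection of } l]$ we then have $\Prob(Z_i=1\mid\mathcal{F}_{i-1})\geq 1/2$, and since each caught block contributes at least one selection, $v^{(k)}(l)\geq\sum_{i=1}^n Z_i$. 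The sequence $Z_i-\E[Z_i\mid\mathcal{F}_{i-1}]$ is a bounded martingale difference sequence, so Azuma--Hoeffding (the conditional form of Chernoff--Hoeffding) yields $\Prob(\sum_i Z_i\leq n/2-t)\leq e^{-2t^2/n}$; equating the right-hand side to $\delta$ and loosening with $\tau_{M,\pi}^{(l)}\geq 1$ produces the stated deviation $\sqrt{v^{(k)}\log(1/\delta)}$ around the mean $n/2 = v^{(k)}/(4\tau_{M,\pi}^{(l)})$.

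The upper tail follows from the complementary version of the same argument: controlling the counting process from above, either by bounding the number of blocks that contain a selection or by lower-bounding the renewal gaps between consecutive selections, again using the strong Markov property to obtain a bounded-increment process and applying Azuma--Hoeffding in the opposite direction. Both statements then hold with probability $1-\delta$ for the policy $\pi$ driving episode $k$.

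The main obstacle is the dependence: the per-block indicators are not i.i.d., because the hidden state evolves as a Markov chain and, moreover, the episode need not start from the stationary distribution $\omega_\pi$. Treating the blocks as independent, as the informal ``directly follows from Chernoff--Hoeffding'' phrasing suggests, is not literally justified. The clean fix is to condition on the history at the beginning of each block and use the strong Markov property together with Markov's inequality to obtain a bound on the conditional per-block probability that is uniform over all histories; this converts the partial sums into a martingale with increments in $[0,1]$, at which point Azuma's inequality applies and the burn-in/non-stationarity issue disappears. Matching the precise constants $v^{(k)}/(4\tau_{M,\pi}^{(l)})$ and $v^{(k)}/(2\tau_{M,\pi}^{(l)})$ is then only a matter of the crude loosening $\tau_{M,\pi}^{(l)}\geq 1$ and $Z_i\in[0,1]$.
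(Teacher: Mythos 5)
Your lower-tail argument is, in substance, the paper's own: the paper proves nothing beyond the one-line sketch already used in Step~2 of the proof of Thm.~\ref{thm:regret} (slice the episode into blocks of length $2\tau_{M,\pi}^{(l)}$, bound the per-block miss probability by $1/2$ via Markov's inequality, invoke Chernoff--Hoeffding), and your upgrade --- conditioning on the history at the start of each block, using the strong Markov property to get $\Prob(Z_i=1\mid\mathcal{F}_{i-1})\geq 1/2$ uniformly, and replacing Chernoff--Hoeffding by Azuma on the martingale differences --- is exactly the right repair of the independence gloss, and the constants check out ($n=v^{(k)}/(2\tau_{M,\pi}^{(l)})\leq v^{(k)}$ gives the stated deviation $\sqrt{v^{(k)}\log(1/\delta)}$ around $v^{(k)}/(4\tau_{M,\pi}^{(l)})$). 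One caveat you inherit from the paper: Markov's inequality controls a block only if the conditional expected time to the \emph{next} selection of $l$, started from an arbitrary history, is at most $\tau_{M,\pi}^{(l)}$, whereas $\tau_{M,\pi}^{(l)}$ is defined as the mean passage time \emph{between two consecutive selections}; strictly one needs the worst-case hitting-time version of this quantity (or to define $\tau_M^{(l)}$ so as to dominate it), a point the paper also passes over silently.

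The genuine gap is the upper tail, and neither of your two suggested routes closes it. Counting blocks that contain at least one selection only yields $\sum_i Z_i \leq v^{(k)}(l)$ --- a single block of length $2\tau_{M,\pi}^{(l)}$ can contribute up to $2\tau_{M,\pi}^{(l)}$ selections --- so it bounds the count from the wrong side. The renewal-gap route fails as well: Markov's inequality gives \emph{upper} tails for the gaps, while capping $v^{(k)}(l)$ requires \emph{lower}-tail control of the gap sums, and knowledge of the mean $\E[\mathcal{T}(l,l)]=\tau_{M,\pi}^{(l)}$ alone provides none (the gaps could equal $1$ with high probability and be huge otherwise, keeping the mean at $\tau_{M,\pi}^{(l)}$ while $v^{(k)}(l)$ is of order $v^{(k)}$); moreover the gaps are unbounded, so there is no bounded-increment process to feed to Azuma. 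Indeed the claimed threshold itself is suspect: renewal considerations put $\E[v^{(k)}(l)]$ near $v^{(k)}/\tau_{M,\pi}^{(l)}$, which exceeds $v^{(k)}/(2\tau_{M,\pi}^{(l)})+\sqrt{v^{(k)}\log(1/\delta)}$ once $v^{(k)}\geq 4(\tau_{M,\pi}^{(l)})^2\log(1/\delta)$, so the second inequality cannot follow from any mirror image of the first and would need either recentering or an extra assumption, e.g.\ a uniform per-step bound $\Prob(a_t=l\mid\mathcal{F}_{t-1})\leq p_{\max}$, which makes $v^{(k)}(l)-\sum_t\Prob(a_t=l\mid\mathcal{F}_{t-1})$ a bounded-difference martingale and yields a legitimate upper tail around $p_{\max}v^{(k)}$. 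The paper offers no derivation for this direction either (the lemma is asserted to ``directly follow from Chernoff--Hoeffding''), so your vagueness here exposes a real defect in the source, but as a proof proposal yours remains incomplete: the upper tail is not the complementary version of the lower-tail argument, and you should not present it as such.
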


Let $C_T=\displaystyle\max_{l\in\mathcal{A},k\in\{k'|t_{k'}\leq N\}}\frac{C_T^{(k)} \sigma_{\max}(V_3^{(k)(l')})}{\lambda^{(k)(l')}}$ then
\begin{align*}
\Delta_T\leq X^{\frac{3}{2}}\sqrt{YR}r_{max}\sqrt{\log{\frac{1}{\delta}}}{(D+1)}\sum_k\sum_{l=1}^A\sqrt{v^{(k)}(l)}  \sqrt{\underbrace{\frac{v^{(k)}(l)}{\displaystyle\min_{l'\in\mathcal{A}}v^{(k)}(l')}}_{(a')}}
\end{align*}
From Lemma~\ref{lem:visits} we have that
\begin{align*}
&(a')=\frac{v^{(k)}(l)}{\displaystyle\min_{l'\in\mathcal{A}}v^{(k)}(l')}\leq \frac{\frac{v^{(k)}}{2\displaystyle\min_{l\in \mathcal{A}}{\tau_{M,\pi}^{(l)}}}+\sqrt{v^{(k)}\log(\frac{1}{\delta})}}{\frac{1}{2}\frac{v^{(k)}}{2\displaystyle\max_{l\in \mathcal{A}}{\tau_{M,\pi}^{(l)}}}-\sqrt{v^{(k)}\log(\frac{1}{\delta})}}\\
&\leq 2\Dr^{\pi}+8\Dr^{\pi}\max_{l\in \mathcal{A}}{\tau_{M,\pi}^{(l)}}\sqrt{\frac{\log(\frac{1}{\delta})}{v^{(k)}}}+4\max_{l\in \mathcal{A}}{\tau_{M,\pi}^{(l)}}\sqrt{\frac{\log{(\frac{1}{\delta})}}{v^{(k)}}}+16(\max_{l\in \mathcal{A}}{\tau_{M,\pi}^{(l)}})^2\frac{\log{(\frac{1}{\delta})}}{v^{(k)}},
\end{align*}
with probability $1-2\delta$. The first term dominates all the other terms and thus
\begin{align*}
(a')\leq 2\Dr^{\pi}\leq 2\Dr
\end{align*}
with probability at least $1-\delta$. Thus we finally obtain
\begin{align*}
\Delta_T = \sum_{k=\wt{K}+1}^K \sum_{l=1}^A v^{(k)}(l)\big(\B_T^{(k,l)}\big)r_{\max}{(D+1)}\leq X^{\frac{3}{2}}\sqrt{YR}r_{max}\sqrt{\log{\frac{1}{\delta}}}{(D+1)}\sqrt{2\Dr}\frac{\sqrt{2}}{\sqrt{2}-1}\sqrt{AN}
\end{align*}
with probability at least $1-2\wb{K}_\TN\delta$. Then with probability at least $1-\delta(8A+5\wb{K}_\TN)$ the regret is bounded by the final statement.
\end{proof}




\section{Experiments}\label{s:experiments}

Here, we illustrate the performance of our method on a simple synthetic  environment which follows a POMDP structure with $X=2$, $Y=4$, $A=2$, $R=4$, and $r_{max}=4$.  We find that spectral learning method converges quickly to the true model parameters, as seen in Fig.~[\ref{fig:learning}]. Estimation of the transition tensor $T$ takes longer compared to estimation of observation matrix $O$ and reward Tensor $R$. This is because the observation and reward matrices are first estimated through tensor decomposition, and the transition tensor is estimated subsequently through additional manipulations. Moreover, the transition tensor has more parameters since it is a tensor (involving observed, hidden and action states) while the observation and reward matrices involve fewer parameters. 

For planning, given the POMDP model parameters, we find the memoryless policy using a simple alternating minimization heuristic, which alternates between updates of the policy and the stationary distribution.  We find that in practice this converge to a good solution. The regret bounds  are shown in Fig.~[\ref{fig:performance}].  We compare against the following policies: (1) baseline random policies which simply selects random actions without looking at the observed data, (2) \ucrl-MDP \cite{auer2009near} which attempts to fit a MDP model to the observed data and runs the \ucrl policy, and (3) Q-Learning \cite{watkins1992q} which is a model-free method that updates policy based on the Q-function. We find that our method converges much faster than the competing methods. Moreover, it converges to a much better policy. Note that the MDP-based policies  \ucrl-MDP and Q-Learning perform very poorly, and are even worse than the baseline  are too far from $\smucrl$ policy. This is because the MDP policies try to fit data in high dimensional observed space, and therefore, have poor convergence rates. On the other hand, our spectral method efficiently finds the correct low dimensional hidden space quickly and therefore, is able to converge to an efficient policy.

\begin{figure}
{
\label{fig:learning}\begin{minipage}{0.55\textwidth}
\begin{center}
\begin{psfrags}
\psfrag{episode}[][1]{Episodes}
\psfrag{average deviation from true parameter}[][1]{ Average Deviation from True Parameter}
\psfrag{Transition Tensor}[][1]{ ~~~  \footnotesize Transition Tensor}
\psfrag{Observation Matrix}[][1]{~~~~~~~\footnotesize Observation Matrix}
\psfrag{Reward Tensor}[][1]{\footnotesize ~~~~ Reward Tensor}
\psfrag{Random Policy}[][1]{Random Policy}
\includegraphics[width=12.5cm,trim={5cm 0 0 0},clip]{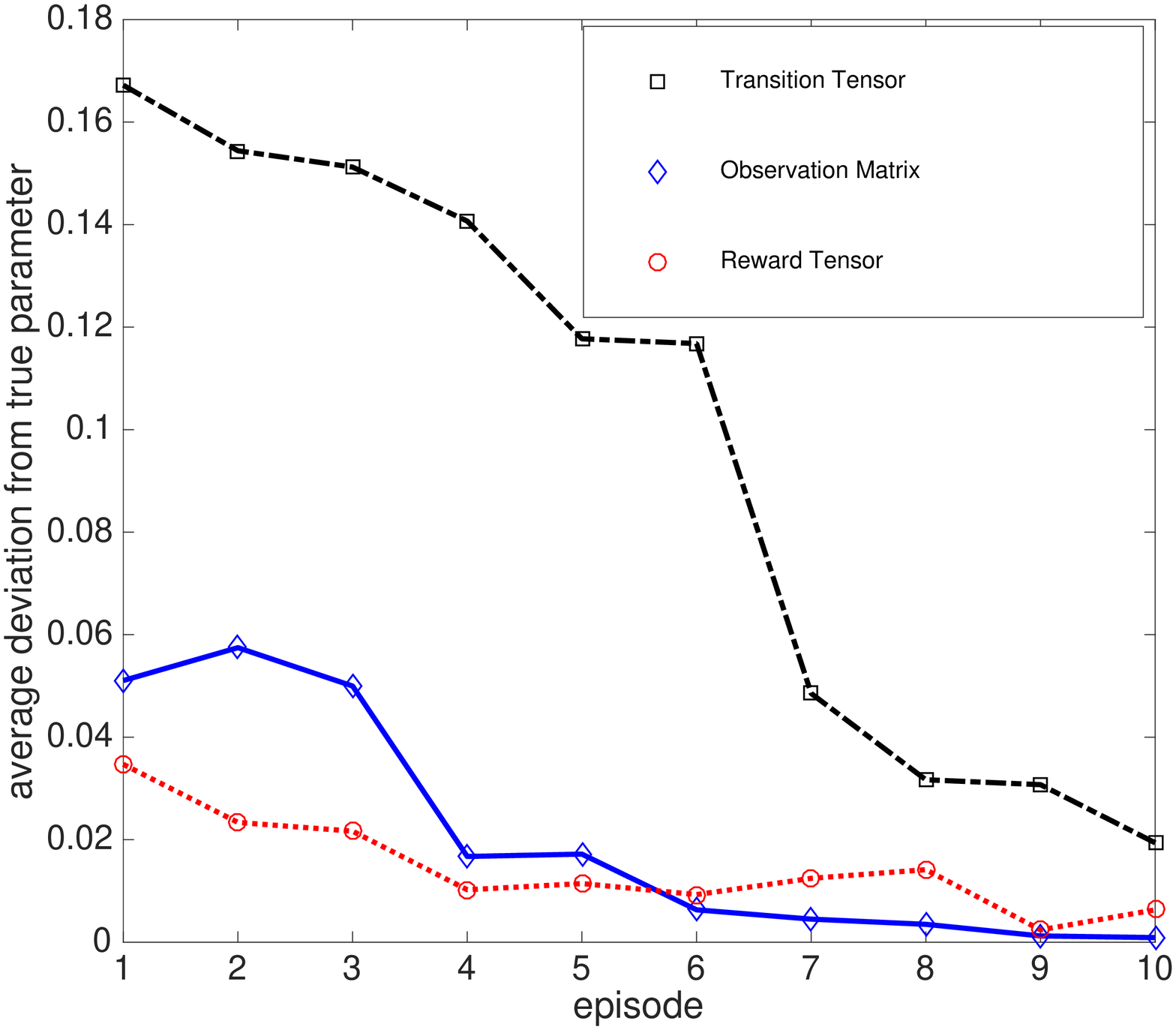}\\
\hspace{-2cm}\small $a)$Learning by Spectral Method
\end{psfrags}
\end{center}
\end{minipage}
\label{fig:performance}\begin{minipage}{0.55\textwidth}
\begin{psfrags}
\psfrag{Average Reward}[][1]{   Average Reward}
\psfrag{Number of Trials}[][1]{   Number of Trials}
\psfrag{SM-UCRL-POMDP}[][1]{ \small ~~~~~~  SM-UCRL-POMDP}
\psfrag{UCRL-MDP}[][1]{ \small ~~ UCRL-MDP}
\psfrag{Q-learning}[][1]{  \small ~~ Q-Learning}
\psfrag{Random Policy}[][1]{\small ~~~~  Random Policy}
\hspace{-1cm}
\includegraphics[width=9cm,trim={17cm 0 0cm 0},clip]{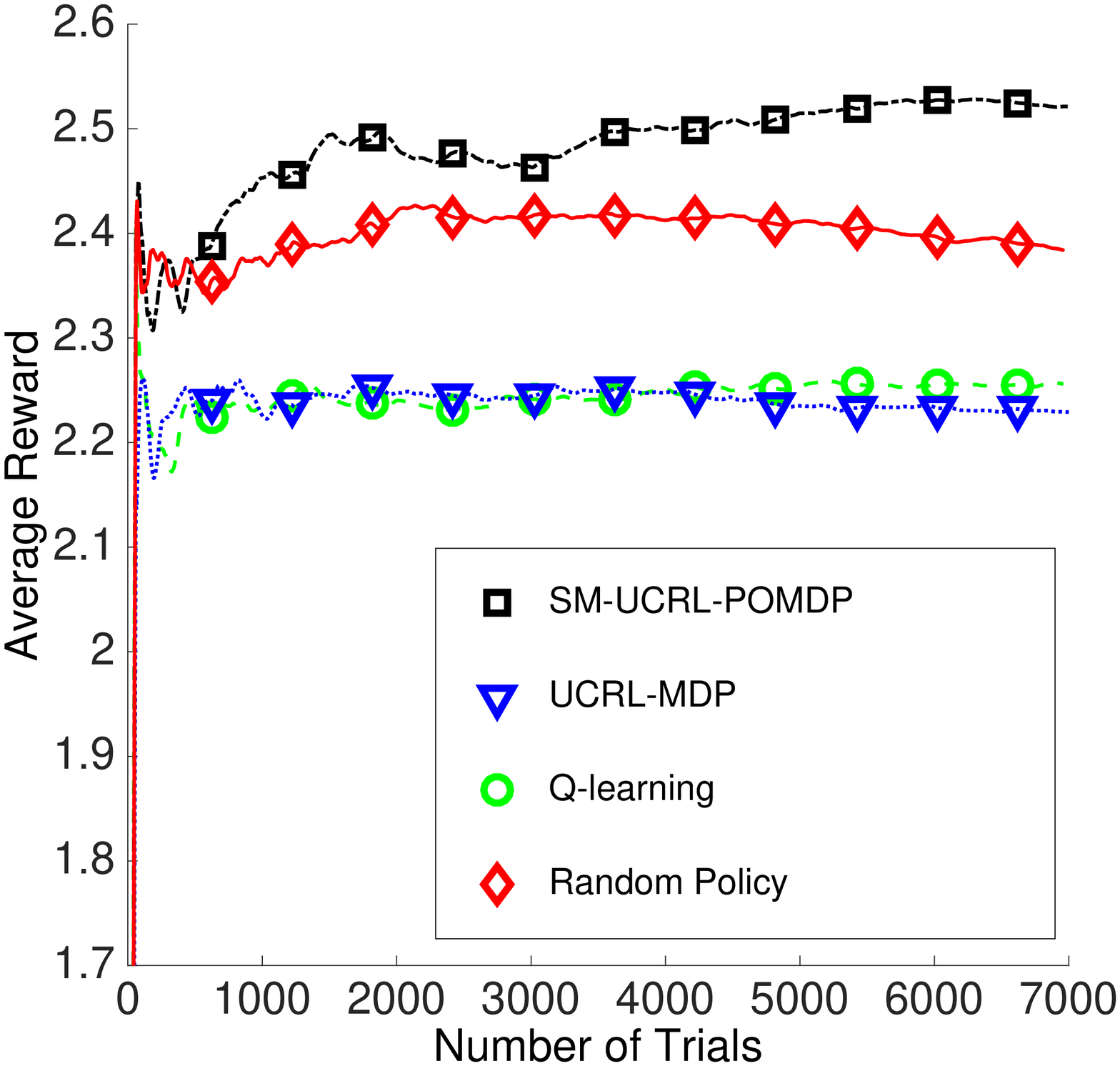}\\
~~~~~~~~~~~~\small            $ ~~~~~~~~~~~~~~~$ $~ b)$  Regret Performance
\end{psfrags}
\end{minipage}} 
\caption{(a)Accuracy of estimated model parameter through tensor decomposition.  See  h Eqs.~\ref{eq:rew.bound},\ref{eq:obs.bound} and~\ref{eq:transition.bound}. (b)Comparison of $\smucrl$-POMDP is our method, \ucrl-MDP which attempts to fit a MDP model under \ucrl policy, $\epsilon-greedy$ Q-Learning, and a Random Policy.}
\end{figure}

%

%

\newpage
\section{Concentration Bound}\label{app:HMMConst}
\noindent\paragraph{Concentration of functions of a HMM}  We now provide concentration bounds for any matrix valued function $\phi(\cdot)$ over samples drawn from a HMM. This extends the result on scalar functions by~\citet{kontorovich2014uniform}.

For any ergodic Markov chain, lets consider $\omega$ as its stationary distribution and $f_{1\rightarrow t}(x_t|x_1)$ as a distribution over states at time $t$ given initial state $x_1$. Lets define inverse mixing time $\rho_{mix}(t)$ as follows
\begin{align*}
\rho_{mix}(t)=\sup_{x_1}\left\|f_{1\rightarrow t}(\cdot|x_1)-\omega\right\|_{TV}
\end{align*}
~\citet{kontorovich2014uniform} show  that this measure can be bounded by
\begin{align*}
\rho_{mix}(t)\leq G\theta^{t-1},
\end{align*}
where $1\leq G<\infty$ is $geometric$ $ergodicity$ and $0\leq \theta<1$ is contraction coefficient of Markov chain.

As before, let  $\bfy^n:=[\vec{y}_1, \ldots, \vec{y}_n]\in \calY^n$ denote the sequence of observations from HMM and let $x^n :=[x_1, \ldots, x_n]\in \calX^n$ denote the sequence of hidden states. We now consider matrix valued function $\Phi:\calY^{n}\rightarrow \mathbb{R}^{d_1\times d_2}$. It is said to be   $c$-Lipschitz  with respect to  spectral norm when
\begin{align*}
\sup_{\bfy^n, {\bfy'}^n \in \calY^n}\frac{\left\|\Phi(\bfy^n)-\Phi({\bfy'}^n)\right\|_2}{\|\bfy^n-{\bfy'}^n\|_H}\leq c
\end{align*}
where $\|\cdot\|_H$ is norm with respect to Hamming metric,   and $\bfy^n, {\bfy'}^n$ are any two sequences of  sample observations.
\begin{theorem}[HMM Concentration Bound]
Consider Hidden Markov Model with finite sequence of $n$ samples $\vec{y}_i$ as observations from finite observation set $\bfy^n$ and arbitrary initial state distribution. For any $c$-Lipschitz matrix valued function $\Phi(\cdot)$, we have
\begin{align*}
\left\|\Phi(\bfy^n)-\mathbb{E}[\Phi(\bfy^n)]\right\|_2\leq G\frac{1+\frac{1}{\sqrt{2}cn^{\frac{3}{2}}}}{1-\theta}\sqrt{8c^2n\log(\frac{(d_1+d_2)}{\delta})}
\end{align*}
with probability at least $1-\delta$, where G is  $geometric$ $ergodicity$ constant of corresponding Markov chain, and the $\mathbb{E}[\Phi(\bfy^n)]$ is expectation over samples of HMM when the initial distribution corresponds to the stationary distribution.
\label{thm:GeneralConcentrationBound}
\end{theorem}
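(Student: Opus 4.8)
The plan is to combine a Doob-martingale decomposition of $\Phi(\bfy^n)-\E[\Phi(\bfy^n)]$ with the matrix Azuma inequality of~\citet{tropp2012user}, controlling the martingale increments through the mixing coefficients of the hidden chain in the spirit of~\citet{kontorovich2008concentration}. Concretely, let $\F_k=\sigma(\vec y_1,\dots,\vec y_k)$ and set $M_k=\E[\Phi(\bfy^n)\mid\F_k]$, so that $M_0=\E[\Phi(\bfy^n)]$, $M_n=\Phi(\bfy^n)$, and $\Phi(\bfy^n)-\E[\Phi(\bfy^n)]=\sum_{k=1}^n D_k$ with $D_k=M_k-M_{k-1}$ a matrix-valued martingale-difference sequence adapted to $\F_k$.

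First I would bound the spectral norm of each increment $D_k$. This is the matrix analogue of the scalar Lipschitz estimate of~\citet{kontorovich2008concentration}: writing $D_k$ as the difference of conditional expectations obtained by integrating out $\vec y_{k+1},\dots,\vec y_n$, the $c$-Lipschitz property of $\Phi$ with respect to the Hamming metric lets one couple two copies of the chain that differ only in the $k$-th coordinate and bound $\|D_k\|_2 \le c\,\bigl(1+\sum_{j>k}\bar\eta_{k,j}\bigr)$, where $\bar\eta_{k,j}$ are the $\eta$-mixing coefficients of the hidden chain. Because the triangle inequality and the convex-combination (averaging) steps all hold verbatim in spectral norm, the scalar coupling argument transfers directly to the matrix-valued case; the only change is replacing $|\cdot|$ by $\|\cdot\|_2$ throughout.

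Next I would convert these deterministic spectral-norm bounds into the variance proxy required by matrix Azuma. Taking $A_k=c\,\bigl(1+\sum_{j>k}\bar\eta_{k,j}\bigr)\,I$ one has $D_k^2\preceq A_k^2$, and the geometric-ergodicity bound $\rho_{\mathrm{mix}}(t)\le G\theta^{t-1}$ gives $\sum_{j>k}\bar\eta_{k,j}\le G\sum_{t\ge1}\theta^{t-1}=G/(1-\theta)$, hence $\sigma^2:=\bigl\|\sum_k A_k^2\bigr\|\le n c^2\bigl(G/(1-\theta)\bigr)^2(1+o(1))$, where the residual term produces the lower-order factor $1+\tfrac{1}{\sqrt2\,c n^{3/2}}$. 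Since $\Phi$ is in general non-Hermitian and rectangular, I would apply Tropp's Hermitian dilation to the sequence $\{D_k\}$, producing a $(d_1+d_2)\times(d_1+d_2)$ Hermitian matrix martingale whose largest eigenvalue controls $\|\sum_k D_k\|_2$; matrix Azuma then yields $\Prob(\|\sum_k D_k\|_2\ge t)\le(d_1+d_2)\exp(-t^2/(8\sigma^2))$. Setting the right-hand side equal to $\delta$ and solving for $t$ gives the claimed bound, with the dimension factor $d_1+d_2$ arising precisely from the dilation.

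The hard part will be the increment estimate: extending the Kontorovich--Ramanan martingale-difference bound from scalar to matrix-valued functions measured in spectral norm. One must verify that the coupling argument, which bounds the effect of perturbing a single coordinate by the tail sum of mixing coefficients, remains valid when distances are measured in $\|\cdot\|_2$ rather than in absolute value, and that the resulting $A_k$ feed correctly into the matrix variance $\sigma^2$. Once this is established, the Hermitian dilation and the final optimization over $t$ are routine, and the $G/(1-\theta)$ dependence follows directly from the geometric ergodicity assumption.
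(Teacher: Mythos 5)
Your martingale decomposition, the increment bound $\|D_k\|_2\le c\bigl(1+\sum_{j>k}\bar\eta_{k,j}\bigr)\le cG/(1-\theta)$ in the spirit of \cite{kontorovich2008concentration}, and the application of the matrix Azuma inequality of \cite{tropp2012user} (via Hermitian dilation, giving the dimension factor $d_1+d_2$) reproduce exactly the concentration half of the paper's argument, namely Thm.~\ref{thm:MatrixAzuma}. But there is a genuine gap: your Doob martingale is centered at $M_0=\E[\Phi(\bfy^n)]$ computed under the law of the chain as actually run, i.e., under the \emph{arbitrary} initial distribution, whereas the theorem centers $\Phi(\bfy^n)$ at the expectation taken when the initial distribution is the \emph{stationary} one. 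Your argument therefore proves concentration around the wrong quantity, and a second step is missing: a bias bound of the form $\bigl\|\E[\Phi(\bfy^n)]-\E_{\mathrm{stat}}[\Phi(\bfy^n)]\bigr\|_2\le 2cG/(1-\theta)$ (of order $2G/(n(1-\theta))$ for the normalized moment functions with $c\le 1/n$, which is how the paper states it), obtained by comparing the two state-sequence laws coordinatewise and using $\rho_{\text{mix}}(t)\le G\theta^{t-1}$ together with the $c$-Lipschitz property, as in \cite{kontorovich2014uniform}.

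Relatedly, you misattribute the factor $1+\frac{1}{\sqrt{2}\,cn^{3/2}}$: it does not arise from any ``residual'' in the variance proxy --- the tail sum $\sum_{j>k}\bar\eta_{k,j}\le G/(1-\theta)$ is clean, and matrix Azuma with $\sigma^2\le nc^2G^2/(1-\theta)^2$ yields the main term with no such correction. In the paper, that factor is precisely the device by which the additive bias term above is absorbed multiplicatively into the main term, via
\begin{align*}
\frac{G}{1-\theta}\cdot\frac{1}{\sqrt{2}\,cn^{3/2}}\sqrt{8c^2n\log\Big(\frac{d_1+d_2}{\delta}\Big)}=\frac{2G}{n(1-\theta)}\sqrt{\log\Big(\frac{d_1+d_2}{\delta}\Big)}\ \geq\ \frac{2G}{n(1-\theta)},
\end{align*}
which is exactly the inequality the paper displays at the start of its proof sketch. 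The fact that your derivation produced the main term exactly while the extra factor remained unexplained is the symptom of the missing change-of-initial-distribution step; once you add that bias bound and combine it additively with your concentration bound, your proof coincides with the paper's.
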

\begin{proof}
In the Appendix. \ref{app:Proof_main}
\end{proof}
\begin{theorem}[POMDP Concentration Bound]
Consider Partially Observable Markov Decision Process with finite sequence of $n(l)$ samples $\vec{y}_i^{(l)}$ for all $i\in\lbrace1,2,\ldots,n(l)\rbrace~\forall l\in[A]$ as observations from finite observation sets $\bfy^{n(l)}$ and arbitrary initial state distribution. For any $c$-Lipschitz matrix valued function $\Phi^l(\cdot)$ function, we have
\begin{align*}
\left\|\Phi^l(\bfy^{n(l)})-\mathbb{E}[\Phi^l(\bfy^{n(l)})]\right\|_2\leq G\frac{1+\frac{1}{\sqrt{2}cn^{\frac{3}{2}}}}{1-\theta}\sqrt{8c^2n\log(\frac{(d_1+d_2)}{\delta})}
\end{align*}
with probability at least $1-\delta$, where G is  $geometric$ $ergodicity$ constant of corresponding Markov chain, and the $\mathbb{E}[\Phi(\bfy^{n(l)})]$ is expectation over samples of POMDP with middle action $l$ when the initial distribution corresponds to the stationary distribution.
\label{thm:GeneralConcentrationBoundPOMDP}
\end{theorem}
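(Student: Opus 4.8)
The plan is to reduce the POMDP concentration bound to the HMM concentration bound of Theorem~\ref{thm:GeneralConcentrationBound}, which already delivers exactly the right-hand side we need, by exhibiting the POMDP-under-a-fixed-policy as a hidden Markov model. The key structural observation is that, once a memoryless policy $\pi\in\calP$ is fixed, the hidden-state sequence $x_1,x_2,\ldots$ is a Markov chain with the transition kernel $f_{T,\pi}(x'|x)=\sum_a\sum_{\vec{y}}f_\pi(a|\vec{y})f_O(\vec{y}|x)f_T(x'|x,a)$, which is ergodic by Asm.~\ref{asm:ergodicity}. At each step the triple $(\vec{y}_t,a_t,\vec{r}_t)$ is emitted as a stochastic function of $x_t$ alone, namely $\vec{y}_t\sim f_O(\cdot|x_t)$, then $a_t\sim f_\pi(\cdot|\vec{y}_t)$, then $\vec{r}_t\sim f_R(\cdot|x_t,a_t)$, and conditionally on the whole state sequence these triples are mutually independent across $t$. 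Hence the process whose latent chain is $f_{T,\pi}$ and whose emission at step $t$ is the augmented observation $\vec{z}_t:=(\vec{y}_t,a_t,\vec{r}_t)$ is precisely a hidden Markov model in the sense required by Theorem~\ref{thm:GeneralConcentrationBound}.

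First I would check the two hypotheses of Theorem~\ref{thm:GeneralConcentrationBound} for this induced HMM. Its geometric ergodicity constant $G$ and contraction coefficient $\theta$ are exactly those of the chain $f_{T,\pi}$, controlled by the inverse-mixing-time bound $\rho_{\text{mix},\pi}(t)\le G(\pi)\theta^{t-1}(\pi)$ recalled in Sect.~\ref{s:preliminaries}; these are the same $G,\theta$ appearing in the statement. It then remains to exhibit $\Phi^l$ as a $c$-Lipschitz matrix-valued function of the augmented observation sequence with respect to the Hamming metric. The function $\Phi^l$ is assembled from the view triples built at the steps with $a_t=l$ (for instance the empirical moments $\wh{M}_2^{(l)},\wh{M}_3^{(l)}$ or the covariance matrices $\wh{K}_{\nu,\nu'}^{(l)}$), which are averages over such steps. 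Changing a single augmented observation $\vec{z}_t$ can alter only the constant number of view triples indexed by $t-1,t,t+1$, so it perturbs $\Phi^l$ in spectral norm by a bounded amount; this preserves the $c$-Lipschitz property with the same constant $c$ up to the numerical factors already absorbed into the constants.

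With both hypotheses verified, I would invoke Theorem~\ref{thm:GeneralConcentrationBound} applied to the induced HMM and to the function $\Phi^l$, treating $\bfy^{n(l)}$ as the relevant subsequence of augmented observations and taking $\mathbb{E}[\Phi^l(\bfy^{n(l)})]$ under the stationary initial distribution. This yields the claimed inequality verbatim, with the same $G$, $\theta$, $c$, $n$, $d_1$, $d_2$, and $\delta$, and with probability at least $1-\delta$. No further concentration machinery is needed beyond the matrix-valued extension of the Kontorovich inequality that underlies Theorem~\ref{thm:GeneralConcentrationBound}.

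The main obstacle is the bookkeeping induced by the action-based subsampling: the steps at which $a_t=l$ occur at random times and their count $n(l)$ is itself random, so one cannot naively treat the action-$l$ subsequence as an independent HMM trajectory endowed with its own mixing constants. The cleanest way around this, and the reason the right-hand side coincides exactly with the HMM bound, is to regard $\Phi^l$ as a $c$-Lipschitz function of the \emph{entire} augmented observation sequence (with the steps $a_t\neq l$ simply not contributing), so that the concentration is governed by the mixing of the full chain $f_{T,\pi}$ and $G,\theta$ stay the full-chain constants. An alternative route that tracks only the return-time skeleton to action $l$ would require relating the skeleton's ergodicity constants to those of the full chain, which is more delicate, and which I would avoid.
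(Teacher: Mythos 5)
Your reduction has two problems, one structural and one quantitative, and the quantitative one is fatal to the claim that the stated bound follows ``verbatim.'' Structurally, the augmented process with emissions $\vec{z}_t=(\vec{y}_t,a_t,\vec{r}_t)$ is \emph{not} an HMM in the sense required by Thm.~\ref{thm:GeneralConcentrationBound}: the emission is not a stochastic function of $x_t$ alone, because the action $a_t$ (a component of the emission) feeds into the transition to $x_{t+1}$. Conditionally on $x_t$, the triple $\vec{z}_t$ and the next state $x_{t+1}$ are dependent; equivalently each triple depends on the pair $(x_t,x_{t+1})$, not on $x_t$ only. So the HMM theorem cannot be invoked off the shelf — the paper is explicit about this, noting that Thm.~\ref{thm:MatrixAzuma} must be \emph{extended} to the case where the distribution of the next state depends on the current observation, and that extension is precisely the content of the proof of Thm.~\ref{thm:ConcentrationBound}. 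Your claimed conditional-independence structure could be repaired by passing to the pair chain $(x_t,x_{t+1})$, but you neither do this nor account for how the constants $G,\theta$ transfer.

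The quantitative gap is in your treatment of the subsampling. If you regard $\Phi^l$ as a $c$-Lipschitz function of the \emph{entire} length-$N$ sequence with $c\asymp 1/n(l)$, then the matrix-Azuma variance proxy is a sum of $N$ squared martingale differences, each bounded by $cG/(1-\theta)$, giving a deviation of order $\sqrt{N}\,c\,G/(1-\theta)\asymp \sqrt{N}/n(l)$ — worse than the claimed $\sqrt{8c^2 n(l)\log(\cdot)}\asymp 1/\sqrt{n(l)}$ by a factor $\sqrt{N/n(l)}$, which is large since only a fraction of steps have $a_t=l$. The paper obtains the correct $n(l)$-scaling by doing exactly the skeleton argument you declared too delicate and avoided: it defines mixing coefficients $\bar{\eta}^{(l)}_{i,j}$ directly on the subsequence of action-$l$ view triples sitting at the random positions $p(i,l)$ of the original chain, proves $\bar{\eta}^{(l)}_{i,j}\leq G(\pi)\,\theta(\pi)^{p(j,l)-p(i,l)-4}$, and then uses that consecutive skeleton positions are at least one step apart to get
\begin{align*}
H_{n(l),i}\;\leq\; 1+\sum_{j>i}\bar{\eta}^{(l)}_{i,j}\;\leq\; G(\pi)\sum_{m\geq 0}\theta(\pi)^{m}\;\leq\;\frac{G(\pi)}{1-\theta(\pi)},
\end{align*}
so the martingale sum runs over only $n(l)$ coordinates with the \emph{full-chain} constants, which is what yields the stated rate. (The remaining step in the paper's sketch — the shift from an arbitrary initial distribution to the stationary one — contributes the additive $2G/(n(1-\theta))$ term, absorbed into the right-hand side by the displayed numerical inequality in the proof of Thms.~\ref{thm:GeneralConcentrationBound} and~\ref{thm:GeneralConcentrationBoundPOMDP}.) In short: the route you chose to avoid is the one the theorem actually requires, and it is less delicate than you feared because the skeleton inherits the full chain's $G(\pi)$ and $\theta(\pi)$ through the position bound above.
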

\begin{proof}
In the Appendix. \ref{app:Proof_main}
\end{proof}

\section{Proof of Thms.~ \ref{thm:GeneralConcentrationBound} and \ref{thm:GeneralConcentrationBoundPOMDP}}\label{app:Proof_main}

The proof is based on the results in \cite{tropp2012user}, \cite{kontorovich2008concentration}, and \cite{kontorovich2014uniform} with minor modifications and applying the following inequality
\begin{align*}
\frac{G}{1-\theta}\sqrt{8c^2\frac{\log(\frac{(d_1+d_2)}{\delta})}{n}}+\frac{2G}{n(1-\theta)}\leq G\frac{1+\frac{1}{\sqrt{2}cn^{\frac{3}{2}}}}{1-\theta}\sqrt{8c^2n\log(\frac{(d_1+d_2)}{\delta})}
\end{align*}
here we just bring the sketch of the proof. In Thm.~\ref{thm:MatrixAzuma}, we give the upper confidence bound over $\left\|\Phi-\mathbb{E}[\Phi]\right\|_2$ where the expectation is with same initial distribution as it used for $\Phi$. The next step is finding upper bound for difference between $\mathbb{E}[\Phi]$ with arbitrary initial distribution and $\mathbb{E}_{stat}[\Phi]$ with initial distribution equal to stationary distribution. It is clear through \cite{kontorovich2014uniform} that this quantity is upper bounded by $\sum_iG\theta^{-(i-1)}$ which is upper bounded by $\frac{G}{(1-\theta)}$. 

\section{Concentration Bound}\label{app:ConBound}

\begin{theorem}[Matrix Azuma]
Consider Hidden Markov Model with finite sequence of $n$ samples $S_i$ as observations given arbitrary initial states distribution and $c-Lipschitz$ matrix valued function $\Phi:S_{1}^{n}\rightarrow \mathbb{R}^{d_1\times d_2}$ in dimension $d_1$ by $d_2$, then
\begin{align*}
\left\|\Phi-\mathbb{E}[\Phi]\right\|_2\leq \frac{1}{1-\theta}\sqrt{8c^2n\log(\frac{(d_1+d_2)}{\delta})}
\end{align*}
with probability at least $1-\delta$. The $\mathbb{E}[\Phi]$ is given same initial distribution of samples.
\label{thm:MatrixAzuma}
\end{theorem}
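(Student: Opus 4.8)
The plan is to realize $\Phi - \E[\Phi]$ as the terminal value of a matrix-valued Doob martingale and then concentrate it by pairing the dependency-aware martingale-difference bounds of \citet{kontorovich2008concentration,kontorovich2014uniform} with the matrix Azuma inequality of \citet{tropp2012user}. Concretely, let $\F_i = \sigma(S_1,\ldots,S_i)$ be the natural filtration, set $\Phi_i := \E[\Phi \mid \F_i]$ so that $\Phi_0 = \E[\Phi]$ (under the given, arbitrary initial distribution) and $\Phi_n = \Phi$, and define the increments $V_i := \Phi_i - \Phi_{i-1}$. By the tower property $\{V_i\}_{i=1}^n$ is a matrix martingale difference sequence with respect to $\{\F_i\}$, and it telescopes to $\sum_{i=1}^n V_i = \Phi - \E[\Phi]$. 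The whole problem thus reduces to controlling the spectral norm of a sum of bounded matrix martingale differences.

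The core step is to produce a deterministic semidefinite dominator for each increment, i.e.\ matrices $A_i$ with $V_i^2 \preceq A_i^2$ almost surely and $\|A_i\|_2 \le c\,H_i$, where $H_i := 1 + \sum_{j>i}\bar\eta_{ij}$ is the $i$-th row sum of the upper-triangular mixing (concentration) matrix $\Delta_n$ whose entries $\bar\eta_{ij}$ are the Markov mixing coefficients. I would obtain this by a coupling argument in the style of Kontorovich--Ramanan: conditioning on $S_i = x$ versus $S_i = x'$ changes the law of $(S_{i+1},\ldots,S_n)$ only through the future, and the total-variation influence of coordinate $i$ on coordinate $j$ is exactly $\bar\eta_{ij}$; since $\Phi$ is $c$-Lipschitz in the Hamming metric with respect to the spectral norm, the operator norm of the conditional-expectation gap is at most $c\,H_i$. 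The novelty relative to the scalar case is that this bound must be promoted to a uniform almost-sure spectral-norm bound on $V_i$, which then yields the PSD domination $V_i^2 \preceq (c H_i)^2 I$ required by Tropp's inequality.

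Next I would bound the total variance proxy. The domination $V_i^2 \preceq (cH_i)^2 I$ gives $\sigma^2 := \big\|\sum_{i=1}^n A_i^2\big\|_2 \le c^2\sum_{i=1}^n H_i^2 \le c^2 n\,\|\Delta_n\|_\infty^2$, and the geometric-ergodicity estimate $\rho_{\text{mix}}(t)\le G\theta^{t-1}$ (which controls $\bar\eta_{ij}$ by a $\theta^{\,j-i}$-type decay) yields $\|\Delta_n\|_\infty = \max_i H_i \le \tfrac{1}{1-\theta}$ after summing the geometric series, with the ergodicity constant $G$ carried separately into the downstream HMM bound of Thm.~\ref{thm:GeneralConcentrationBound}. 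Feeding this into matrix Azuma gives $\Prob\big(\|\sum_i V_i\|_2 \ge t\big)\le (d_1+d_2)\exp\!\big(-t^2/(8\sigma^2)\big)$, and setting the right-hand side equal to $\delta$ and solving for $t$ gives $t = \sqrt{8\sigma^2\log((d_1+d_2)/\delta)} \le \tfrac{1}{1-\theta}\sqrt{8c^2 n\log((d_1+d_2)/\delta)}$, exactly the claimed bound.

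I expect the main obstacle to be the second step: transferring the scalar Kontorovich--Ramanan martingale-difference estimate to matrix-valued $\Phi$ and turning the coupling/total-variation influence bound into an almost-sure spectral-norm bound on $V_i$ that is simultaneously (i) deterministic, so that the variance proxy $\sum_i A_i^2$ is non-random as Tropp's theorem demands, and (ii) tight enough that the row sums of $\Delta_n$ collapse to $\tfrac{1}{1-\theta}$ rather than accumulating extra dimensional factors. Everything downstream --- the geometric-series summation and the inversion of the Azuma tail --- is routine once this deterministic spectral-norm control of the increments is in place.
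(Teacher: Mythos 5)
Your proposal is correct and follows essentially the same route as the paper: the paper's proof likewise builds the Doob martingale $MD_i(\Phi;S_1^i)=\mathbb{E}[\Phi\mid S_1^i]-\mathbb{E}[\Phi\mid S_1^{i-1}]$, telescopes it to $\Phi-\mathbb{E}[\Phi]$, bounds each matrix increment's spectral norm by $cH_{n,i}$ via the Kontorovich--Ramanan mixing-coefficient argument with $H_{n,i}\leq G/(1-\theta)$ from \citet{kontorovich2014uniform}, and feeds the resulting variance proxy $\sigma^2\leq c^2 n G^2/(1-\theta)^2$ into Thm.~7.1 of \citet{tropp2012user} to obtain the stated $\sqrt{8c^2n\log((d_1+d_2)/\delta)}$ tail. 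The one discrepancy is cosmetic and inherited from the paper itself: the theorem statement drops the geometric-ergodicity factor $G$ (which you, like the statement, defer to the downstream bound of Thm.~\ref{thm:GeneralConcentrationBound}), whereas the paper's own internal lemma in fact keeps $H_{n,i}\leq G(\pi)/(1-\theta(\pi))$.
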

\begin{proof}
The Thm.~ 7.1 \cite{tropp2012user} presents the upper confidence bound over the summation of matrix random variables. Consider a finite sequence of $d$ by $d'$ matrix $\Psi_i$, then for variance parameter $\sigma^2$ which is upper bound for $\sum_i\left[\Psi_i-\mathbb{E}_{i-1}[\Psi_i]\right],~~\forall i$
\begin{align*}
\left\|\sum_i\left[\Psi_i-\mathbb{E}_{i-1}[\Psi_i]\right]\right\|_2\leq \sqrt{8\sigma^2 \log(\frac{d+d'}{\delta})}
\end{align*}
with probability at least $1-\delta$.\\
For the function $\Phi$, lets define the martingale difference of function $\Phi$ as the input random variable with arbitrary initial distribution over states.
\begin{align*}
MD_i(\Phi;S_1^i)=\mathbb{E}[\Phi|S_1^i]-\mathbb{E}[\Phi|S_1^{i-1}]
\end{align*}
where $S_i^j$ is sub set of samples from $i'th$ position in sequence to $j'th$ one.
then the summation over these set of random variable gives $\mathbb{E}[\Phi|S_1^n]-\mathbb{E}[\Phi]$ which is  $\Phi(S_1^n)-\mathbb{E}[\Phi]$ and $\mathbb{E}[\Phi]$ is expectation with same initial state distribution . The remaining part is finding $\sigma$ which is upper bound for $\left\|\sum_iMD_i(\Phi;S_1^i)\right\|_2$ for all possible sequence. Lets define  $MD_i(\Phi)=\max_{S_1^i}MD_i(\Phi;S_1^i)$ and through \cite{kontorovich2008concentration} it is easy to show that $\left\|MD_i(\Phi)\right\|_2$ is $c-Lipchitz$ function and it is upper bounded by $cH_{i,n}$. In \cite{kontorovich2014uniform} it is shown that $H_{i,n}$ is upper bounded by $G\theta(n-i)$.
\end{proof}
For the case when $\Phi$ is symmetric matrix, $d_1+d_2$ can be reduced to just $d$ and constant $8$ can be reduced to $2$. \\
The result in Thm.~ \ref{thm:MatrixAzuma} can be extended to the situation when distribution of next state depends on current state and current observation and even more complicated models like memory-less policy POMDP.
\begin{theorem}[Concentration Bound]
Consider finite sequence of multiple views are drawn from memory less policy POMDP with common middle action and their corresponding covariance matrix $\vec{v}_{\nu,t}^{(l)}\otimes \vec{v}_{\nu',t}^{(l)}$ for $\nu,\nu'\in\{1,2,3\}$ and $\nu\neq \nu'$. For simplicity, lets just consider one set of $\nu,\nu'$, one specific middle action, and $n$ samples are drawn. Define random variable $\Phi_i:=\frac{1}{N(l)}\left[\mathbb{E}\left[\sum_t\vec{v}_{\nu,t}^{(l)}\otimes \vec{v}_{\nu',t}^{(l)}\Big|S_1^{i}\right]-\mathbb{E}\left[\sum_t\vec{v}_{\nu,t}^{(l)}\otimes \vec{v}_{\nu',t}^{(l)}\Big|S_1^{i-1}\right]\right]$ with dimensions $d_\nu\times d_{\nu'}$ where $d_\nu$ and $d_{\nu'}$ for $\nu,\nu'\in \{1,2,3\}$ are the dimension along the $\nu$ and ${\nu}'$ views.
\begin{align*}
\left\|\sum_i\Phi_i\right\|_2=\left\|\frac{1}{N(l)}\sum_t\Big[\vec{v}_{\nu,t}^{(l)}\otimes \vec{v}_{\nu',t}^{(l)}\Big]-\frac{1}{N(l)}\mathbb{E}[\sum_t\vec{v}_{\nu,t}^{(l)}\otimes \vec{v}_{\nu',t}^{(l)}]\right\|_2\leq \frac{G(\pi)}{1-\theta(\pi)}\sqrt{8\frac{\log\frac{(d_{\nu}+d_{\nu'})}{\delta}}{N(l)}}
\end{align*}
with probability at least $1-\delta$.\\
For tensor case, $\frac{1}{N(l)}\left[\mathbb{E}\left[\sum_t\vec{v}_{\nu,t}^{(l)}\otimes \vec{v}_{\nu',t}^{(l)}\otimes \vec{v}_{\nu'',t}^{(l)}\Big|S_1^{i}\right]-\mathbb{E}\left[\sum_t\vec{v}_{\nu,t}^{(l)}\otimes \vec{v}_{\nu',t}^{(l)}\otimes \vec{v}_{\nu'',t}^{(l)}\Big|S_1^{i-1}\right]\right]$ where $[\nu,\nu',\nu'']$ can be any permutation of set $\{1,2,3\}$.
\begin{align*}
\left\|\frac{1}{N(l)}\sum_t\Big[\vec{v}_{\nu,t}^{(l)}\otimes \vec{v}_{\nu',t}^{(l)}\otimes \vec{v}_{\nu'',t}^{(l)}\Big]-\frac{1}{N(l)}\mathbb{E}[\sum_t\vec{v}_{\nu,t}^{(l)}\otimes \vec{v}_{\nu',t}^{(l)}\otimes \vec{v}_{\nu'',t}^{(l)}]\right\|_2\leq\frac{G(\pi)}{1-\theta(\pi)} \sqrt{8\frac{\log\frac{(d_\nu d_{\nu'}+d_{\nu''})}{\delta}}{N(l)}}
\end{align*}
with probability at least $1-\delta$.
\label{thm:ConcentrationBound}
\end{theorem}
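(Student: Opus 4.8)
The plan is to recognize $\sum_i \Phi_i$ as a telescoping Doob-martingale sum and then invoke the matrix Azuma inequality of Theorem~\ref{thm:MatrixAzuma}. First I would observe that, by construction, the differences $\Phi_i = \frac{1}{N(l)}\big(\mathbb{E}[F\,|\,S_1^i] - \mathbb{E}[F\,|\,S_1^{i-1}]\big)$, with $F=\sum_t \vec{v}_{\nu,t}^{(l)}\otimes\vec{v}_{\nu',t}^{(l)}$, telescope, so that $\sum_i \Phi_i = \frac{1}{N(l)}\big(F - \mathbb{E}[F]\big)$ because $\mathbb{E}[F\,|\,S_1^n]=F$ is fully determined and $\mathbb{E}[F\,|\,S_1^0]=\mathbb{E}[F]$. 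This is exactly the empirical-minus-expected covariance whose spectral norm we must control. The two structural facts I would exploit are that each view $\vec{v}_{\nu,t}^{(l)}$ is a one-hot indicator vector, so every outer product $\vec{v}_{\nu,t}^{(l)}\otimes\vec{v}_{\nu',t}^{(l)}$ has unit spectral norm, and that the hidden-state process induced by the fixed memoryless policy $\pi$ is an ergodic Markov chain whose mixing is governed by $G(\pi)$ and $\theta(\pi)$.

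The second step is to verify that the matrix-valued function $\Phi(S_1^n) = \frac{1}{N(l)}F$ is $c$-Lipschitz in the Hamming metric with $c=O(1/N(l))$. Here I would use that each coordinate of the sample sequence (an observation/action/reward at a given time) enters only a bounded number of views, since the views are built from the fixed-width window $(a_{t-1},\vec{y}_{t-1},\vec{r}_{t-1}),(\vec{y}_t,\vec{r}_t),(\vec{y}_{t+1})$ around each middle step $t\in\T(l)$; for instance $\vec{y}_s$ can appear only at $t\in\{s-1,s,s+1\}$. Flipping a single coordinate of $S_1^n$ therefore perturbs at most a constant number of summands, each an outer product of spectral norm at most $1$, so $\|\Phi(\bfy^n)-\Phi({\bfy'}^n)\|_2 \le c\,\|\bfy^n-{\bfy'}^n\|_H$ with $c$ a constant multiple of $1/N(l)$.

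The third step applies Theorem~\ref{thm:MatrixAzuma} to this $c$-Lipschitz function. That theorem decomposes $\Phi-\mathbb{E}[\Phi]$ into its martingale differences $MD_i(\Phi;S_1^i)$ and bounds $\|MD_i\|_2 \le c\,H_{i,n}$, where the mixing weights satisfy $H_{i,n}\le G(\pi)\theta(\pi)^{\,n-i}$ (following \citet{kontorovich2008concentration} and \citet{kontorovich2014uniform}); summing the resulting variance proxy over $i$ produces the $G(\pi)/(1-\theta(\pi))$ prefactor, while Tropp's matrix Azuma (Thm.~7.1 of \citet{tropp2012user}) supplies the $\sqrt{8\log((d_\nu+d_{\nu'})/\delta)}$ term. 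Substituting $c=O(1/N(l))$ and $n=N(l)$, and absorbing the constant into the numerical factor, yields $\frac{G(\pi)}{1-\theta(\pi)}\sqrt{8\,\log((d_\nu+d_{\nu'})/\delta)/N(l)}$ with probability $1-\delta$. For the third-moment tensor case I would flatten $\vec{v}_{\nu,t}^{(l)}\otimes\vec{v}_{\nu',t}^{(l)}\otimes\vec{v}_{\nu'',t}^{(l)}$ into a $d_\nu d_{\nu'}\times d_{\nu''}$ matrix and repeat the identical Lipschitz-plus-Azuma argument, which only replaces the dimension term by $d_\nu d_{\nu'}+d_{\nu''}$.

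I expect the main obstacle to be the non-i.i.d. and possibly non-stationary nature of the samples: because the trajectory is a single run of the chain rather than a draw from its stationary distribution, the standard matrix Bernstein/McDiarmid bounds for independent sums do not apply, and the dependence must instead be routed through the Doob-martingale differences. The delicate quantitative point is bounding $\|MD_i\|_2$ by the mixing coefficient $H_{i,n}$ and summing the geometric series $\sum_i G(\pi)^2\theta(\pi)^{2(n-i)}$ to obtain the $1/(1-\theta(\pi))$ factor, rather than a cruder linear-in-$n$ bound; verifying that the Lipschitz constant stays $O(1/N(l))$ despite the overlapping view windows sharing observations is the other place requiring care, though it is essentially bookkeeping.
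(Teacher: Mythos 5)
Your skeleton---recognizing $\sum_i\Phi_i$ as a telescoping Doob martingale and invoking the matrix Azuma machinery of Thm.~\ref{thm:MatrixAzuma} with a Lipschitz constant $c=O(1/N(l))$---is the same route the paper takes, but there is a genuine gap at the step where you import the mixing weights. Thm.~\ref{thm:MatrixAzuma} is stated for an HMM, whereas the sum in the present theorem ranges over the \emph{conditional} view sequence: the triples $\VV^{i|l}$ are collected only at the random positions $p(i,l)$ at which the middle action equals $l$, and each is conditioned on that action. You cannot simply assert that the martingale differences of this process are controlled by the $G(\pi),\theta(\pi)$ of the underlying chain ``following \citet{kontorovich2008concentration} and \citet{kontorovich2014uniform}'': those results apply to the raw chain, and subsampling at action-dependent times together with conditioning on $a_{p(i,l)}=l$ changes the dependence structure---the action is a function of the observation, which is precisely why consecutive views fail to be conditionally independent in a POMDP. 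The bulk of the paper's proof is devoted exactly to this missing step: it defines mixing coefficients $\eta^{(l)}_{i,j}(\vv_1^{i-1|l},\varrho,\varrho')$ for the conditional view process, introduces the hidden-state quadruples $\HH^{i|l}$, constructs a signed function $\bar h$ satisfying $\sum_x\bar h(x,l)=0$ and $\tfrac12\|\bar h\|_1\le 1$, and applies Lemma A.2 of \citet{kontorovich2008concentration} to obtain $\bar\eta^{(l)}_{i,j}\le G(\pi)\theta(\pi)^{p(j,l)-p(i,l)-4}$, whence $H_{n,i}\le G(\pi)/(1-\theta(\pi))$. Without an argument of this kind, your appeal to Thm.~\ref{thm:MatrixAzuma} does not apply to the process at hand.

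There is also a quantitative slip in your variance accounting. You claim $H_{i,n}\le G(\pi)\theta(\pi)^{n-i}$ and propose summing the geometric series $\sum_i G(\pi)^2\theta(\pi)^{2(n-i)}$. But $H_{n,i}=1+\bar\eta^{(l)}_{i,i+1}+\cdots+\bar\eta^{(l)}_{i,n}$ is a \emph{sum} of mixing coefficients and is always at least $1$ (flipping coordinate $i$ changes the $i$th summand of the function directly), so it cannot decay geometrically in $n-i$; what decays geometrically are the individual $\bar\eta^{(l)}_{i,j}$ in $j-i$. Indeed, if your version were right, the variance proxy would be of order $c^2G(\pi)^2/(1-\theta(\pi)^2)$, yielding a bound of order $1/N(l)$ rather than the stated $1/\sqrt{N(l)}$---a sign the bookkeeping is off. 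The correct computation bounds each $\|MD_i\|_2\le c\,H_{n,i}\le c\,G(\pi)/(1-\theta(\pi))$ \emph{uniformly} and sums $N(l)$ such squared terms, giving $\sigma^2\le G(\pi)^2/(N(l)(1-\theta(\pi))^2)$ and hence the theorem's rate. Finally, note that working in raw coordinates as you propose creates an avoidable complication: flipping an action coordinate alters membership in $\T(l)$ and hence the set of summands; the paper sidesteps this by taking the Hamming metric directly on the view sequence $\VV_1^{N(l)|l}$, where each view occurs in exactly one summand and $c\le 1/N(l)$ is immediate. Your flattening of the third-order term into a $d_\nu d_{\nu'}\times d_{\nu''}$ matrix does match the paper's treatment of the tensor case.
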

\begin{proof}

For simplicity lets just proof the first claim in Thm.~ \ref{thm:ConcentrationBound} and the proof for the second claim would be followed by same procedure. To proof the Thm.~\ref{thm:ConcentrationBound} it is needed to bring together the results from \cite{tropp2012user}, \cite{kontorovich2008concentration}, Thms.~ \ref{thm:GeneralConcentrationBound} , and \ref{thm:MatrixAzuma} and then modify them. The Thm.~ 7.1 in \cite{tropp2012user} presents following upper confidence bounds
\begin{align*}
\left\|\frac{1}{N(l)}\sum_t\Big[\vec{v}_{\nu,t}^{(l)}\otimes \vec{v}_{\nu',t}^{(l)}\Big]-\frac{1}{N(l)}\mathbb{E}[\sum_t\vec{v}_{\nu,t}^{(l)}\otimes \vec{v}_{\nu',t}^{(l)}]\right\|_2\leq \sqrt{8(\widetilde{\sigma}_{Pairs}^{\nu,\nu'})^2\log\frac{(d_{\nu}+d_{\nu'})}{\delta}}
\end{align*}
with probability at least $1-\delta$. And
\begin{align*}
\left\|\frac{1}{N(l)}\sum_t\Big[\vec{v}_{\nu,t}^{(l)}\otimes \vec{v}_{\nu',t}^{(l)}\otimes \vec{v}_{\nu'',t}^{(l)}\Big]-\frac{1}{N(l)}\mathbb{E}[\sum_t\vec{v}_{\nu,t}^{(l)}\otimes \vec{v}_{\nu',t}^{(l)}\otimes \vec{v}_{\nu'',t}^{(l)}]\right\|_2\leq \sqrt{8(\widetilde{\sigma}_{Triples}^{\nu,\nu',\nu''})^2\log\frac{(d_\nu d_{\nu'}+d_{\nu''})}{\delta}}
\end{align*}
with probability at least $1-\delta$. It is needed to show that $(\widetilde{\sigma}_{Pairs}^{i,i'})^2\leq \frac{G(\pi)^2}{n(1-\theta(\pi))^2}$ and $(\widetilde{\sigma}_{Triples}^{i,i',i''})^2\leq \frac{G(\pi)^2}{n(1-\theta(\pi))^2}$.\\
\end{proof}

For the function $\Phi:S_1^n\rightarrow R^{d_1\times d_2}$, where $S_1^n$ is a collection of all possible $\{S_1,S_2, \ldots, S_n\}$ with length $n$. Its martingale difference is defined as follows
\begin{align*}
MD_i(\Phi;S_1^i)=\mathbb{E}[\Phi|S_1^i]-\mathbb{E}[\Phi|S_1^{i-1}]
\end{align*}
and then $MD_i(\Phi)=\max_{S_1^i}MD_i(\Phi;S_1^i)$.\\
The upper bound over $\widetilde{\sigma}_{Pairs}^{\nu,\nu'}$ is as follows
\begin{align*}
(\widetilde{\sigma}_{Pairs}^{\nu,\nu'})^2\leq \left\|\sum_{t=1}^nU_t^2\right\|_2
\end{align*}
Where $U_t$ is a fixed sequence of matrices which follows $MD_t\preceq U_t$ for all possible $MD_t$ and $\forall t$. This bound over Triple tensor can be derived after matricizing the martingale difference.
Next step is to upper bound the $\left\|\sum_tU_t^2\right\|_2$. \\
 Lets define new set of variables; given each action (middle action) $a_i=l$ there are the following set of variables; $\VV^{i|l}$ is collection of \\
$\vec{y}_{p(i,l)-1}, a_{p(i,l)-1}, r_{p(i,l)-1},\vec{y}_{p(i,l)}, r_{p(i,l)}, \vec{y}_{p(i,l)+1}$, \\
where $\VV^{i|l}$ is $i'th$ triple with middle action equal to $l$ and $p(i,l)$ is its corresponding position in the original sequence. Lets define variable $\HH^{i|l}$, which is consequence of four hidden states $x_{p(i,l)-1},x_{p(i,l)},x_{p(i,l)+1}, x_{p(i,l)+2}$. The variables $\VV^{j|l}_i$ and $\HH^{j|l}_i$, for $i\leq j$, are corresponding to set of consecutive $i\rightarrow j$ triple views and quadruple hidden states. Note that this is the time to define mixing coefficients.
\begin{align*}
\eta^{(l)}_{i,j}(\vv_1^{i-1|l},\varrho,\varrho'):=\left\|\mathbb{P}(\VV^{N(l)|l}_j|\VV^{i|l}_1=\vv_1^{i-1|l},\varrho,l)-\mathbb{P}(\VV^{N(l)|l}_j|\VV^{i-1|l}_1=\vv_1^{i-1|l},\varrho',l)\right\|_{TV}
\end{align*}
where $TV$ is total variation distance between distributions and
\begin{align*}
\bar{\eta}^{(l)}_{i,j}:=\sup_{\vv_1^{i-1|l},\varrho,\varrho'}\eta^{(l)}_{ij}(\vv_1^{i-1|l},\varrho,\varrho')
\end{align*}
where $\mathbb{P}(\VV^{i|l}_1=\vv_1^{i-1|l},\varrho,l)$ and $\mathbb{P}(\VV^{i|l}_1=\vv_1^{i-1|l},\varrho'.l)$ are nonzero for all possible input variables. Then for $\Delta_{N(l)}$ \\ \\
$(\Delta_{N(l)})_{i,j} =
\left\{
	\begin{array}{ll}
		1  & \mbox{if } i=j \\
		 \bar{\eta}^{(l)}_{i,j} & \mbox{if } i < j\\
		 0  & \mbox{ } otherwise.
	\end{array}
\right.$\\ \\
and $H_{n(l),i}=1+\bar{\eta}^{(l)}_{i,i+1}+\ldots+\bar{\eta}^{(l)}_{i,n}$


\paragraph{Martingale Difference}.
To upper bound for $\widetilde{\sigma}_{Pairs}^{\nu,\nu'}$, it is enough to upper bound $\left\|\sum_{t=1}^nU_t^2\right\|_2$ or directly upper bound $\left\|\sum_{t=1}^nMD_t^2\right\|_2$ for all possible sequence of samples. The result in \cite{kontorovich2008concentration} shows that this is upper bounded by $\sum_{i=1}^{n(l)}\left\|MD_i(\Phi))\right\|^2_2$ and each $\left\|MD_i(\Phi))\right\|_2\leq cH_{n,i}$ when the $\left\|\Phi\right\|_2$ is $c-Lipschitz$. \\

In addition, it is obvious that for the class of moment functions with elements in $[0,1]$ the $c$ is upper bounded by $\frac{1}{N(l)}$ for the purpose of this paper. The remaining shows the upper bound over $H_{n,i}$ and then $\sum_{i=1}^{n(l)}H_{n,i}^2$.

\begin{lemma}
The function $H_{n,i}$ is upper bounded by $\frac{G(\pi)}{1-\theta(\pi)}$ and then $\sum_{i=1}^{n}(cH_{n,i})^2\leq nc^2\frac{G^2(\pi)}{(1-\theta(\pi))^2}\leq \frac{G^2(\pi)}{n(1-\theta(\pi))^2} $
\end{lemma}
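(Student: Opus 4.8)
The plan is to prove the three inequalities in order, with essentially all of the work concentrated in the first. The engine is the geometric decay of the mixing coefficients $\bar{\eta}^{(l)}_{i,j}$, after which $H_{n,i}$ is controlled by a geometric sum and the last two bounds are bookkeeping.

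First I would show that each mixing coefficient decays geometrically in the block gap, namely $\bar{\eta}^{(l)}_{i,j}\leq G(\pi)\theta^{j-i}(\pi)$ for $j>i$, while $\bar{\eta}^{(l)}_{i,i}=1\leq G(\pi)$ since $G(\pi)\geq 1$. The coefficient $\bar{\eta}^{(l)}_{i,j}$ is, by definition, a supremum over histories $\vv_1^{i-1|l}$ and over two alternative conditioning values at index $i$ of the total-variation distance between the induced laws of the future views $\VV^{N(l)|l}_j$. The key observation is that every view is a deterministic function of the hidden states together with the \emph{known} policy, observation, and reward kernels; hence, by the graphical model of the POMDP, conditioning on the whole observable history up to index $i$ influences the future views at index $j$ only through the law of the single hidden state that seeds the trajectory from block $j$ onward. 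By the Markov property of the hidden-state chain $f_{T,\pi}$ and the data-processing inequality for total variation (the views being functions of the states), the discrepancy between two such seed laws after the $\geq j-i$ elementary transitions separating blocks $i$ and $j$ is governed by the inverse mixing time $\rho_{\text{mix},\pi}(\cdot)$, which the geometric-ergodicity bound $\rho_{\text{mix},\pi}(t)\leq G(\pi)\theta^{t-1}(\pi)$ controls.

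Second, with the geometric bound in place, summing over $j$ is routine. For any fixed $i$,
\begin{align*}
H_{n,i}=\sum_{j=i}^{n}\bar{\eta}^{(l)}_{i,j}\leq\sum_{j=i}^{n}G(\pi)\theta^{j-i}(\pi)=G(\pi)\sum_{k=0}^{n-i}\theta^{k}(\pi)\leq\frac{G(\pi)}{1-\theta(\pi)},
\end{align*}
using the (truncated) geometric series and $0\leq\theta(\pi)<1$; this is the first claim. Squaring the uniform bound and summing the $n$ indices then gives $\sum_{i=1}^{n}(cH_{n,i})^2\leq nc^2\frac{G^2(\pi)}{(1-\theta(\pi))^2}$, which is the second claim. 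For the final inequality I would invoke the fact recalled in the surrounding text that, for moment functions with entries in $[0,1]$, the Lipschitz constant satisfies $c\leq 1/N(l)=1/n$, whence $nc^2\leq 1/n$ and the bound collapses to $\frac{G^2(\pi)}{n(1-\theta(\pi))^2}$.

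The genuine difficulty lies entirely in the first step: rigorously justifying $\bar{\eta}^{(l)}_{i,j}\leq G(\pi)\theta^{j-i}(\pi)$. The coefficients live on the block-structured process of triples $\VV^{\cdot|l}$ selected at the \emph{random} times $p(\cdot,l)$ when action $l$ is played, not directly on the hidden-state chain, so I must argue carefully that (i) conditioning on the full observable history up to index $i$ reduces, via the POMDP graphical model, to fixing the law of one hidden state, and (ii) the contraction of two such laws under the chain's transition operator is exactly what $\rho_{\text{mix},\pi}$ measures. Additional care is needed because the blocks are indexed along the subsequence where $a=l$, so I must verify that the number of elementary Markov transitions separating blocks $i$ and $j$ is at least $j-i$ (handling the passage-time randomness), and I should reconcile the $\theta^{t-1}$ convention of the ergodicity bound with the $\theta^{j-i}$ exponent needed to absorb the diagonal term $\bar{\eta}^{(l)}_{i,i}=1$ into the geometric sum. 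The geometric summation itself is elementary; this indexing-and-reduction argument is the crux.
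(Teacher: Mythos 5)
Your proposal follows essentially the same route as the paper's proof: the paper likewise reduces the mixing coefficient to a contraction statement on the hidden chain---expanding the total-variation distance, conditioning on the hidden quadruples, factoring via the POMDP graphical model, and marginalizing to a mean-zero signed function $\bar{h}$ with $\tfrac{1}{2}\|\bar{h}\|_1\le 1$ to which Lemma A.2 of \citet{kontorovich2008concentration} gives $\bar{\eta}^{(l)}_{i,j}\le G(\pi)\theta(\pi)^{p(j,l)-p(i,l)-4}$---and then sums the geometric series using $p(j,l)-p(i,l)\ge j-i$ to obtain $H_{n,i}\le G(\pi)/(1-\theta(\pi))$, with the two remaining inequalities handled exactly as you propose (squaring the uniform bound and invoking $c\le 1/N(l)$). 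The only execution detail worth noting is that the paper does not apply the mixing-time bound $\rho_{\text{mix},\pi}(t)\le G(\pi)\theta(\pi)^{t-1}$ itself (which via a triangle inequality through $\omega_\pi$ would cost an extra factor of $2$) but the underlying contraction lemma directly, which is precisely the reconciliation you correctly flag as the crux of the argument.
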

\begin{proof}
As it mentioned, $H_{n(l),i}=1+\bar{\eta}^{(l)}_{i,i+1}+\ldots+\bar{\eta}^{(l)}_{i,n}$, and it is needed to find the upper bound over $1+\bar{\eta}^{(l)}_{i,i+1}+\ldots+\bar{\eta}^{(l)}_{i,n}$
\begin{align*}
&\eta^{(l)}_{ij}(\vv_1^{i-1|l},\varrho,\varrho')\\
&=\frac{1}{2}\sum_{\vv^{N(l)|l}_j}|\mathbb{P}(\VV^{N(l)|l}_j=\vv^{N(l)|l}_j|\VV^{i|l}_1=\vv_1^{i-1|l},\varrho,l)-\mathbb{P}(\VV^{N(l)|l}_j=\vv^{N(l)|l}_j|\VV^{i-1|l}_1=\vv_1^{i-1|l},\varrho',l)|
\end{align*}
For the first part
\begin{align*}
&\mathbb{P}(\VV^{N(l)|l}_j=\vv^{N(l)|l}_j|\VV^{i|l}_1=\vv_1^{i-1|l},\varrho,l)\\
&=\sum_{\hhh_1^{i|l},\hhh_j^{N(l)|l}}\mathbb{P}(\VV^{N(l)|l}_j=\vv^{N(l)|l}_j,\HH_1^{i|l}=\hhh_1^{i|l},\HH_j^{N(l)|l}=\hhh_j^{N(l)|l}|\VV^{i|l}_1=\vv_1^{i-1|l},\varrho,l)
\end{align*}
Lets assume, for simplicity, that the hidden states on $\hhh^{i|l}$ do not have overlap with states on $\hhh^{i-1|l}$ and $\hhh^{i+1|l}$.
\begin{align*}
\mathbb{P}(\VV^{N(l)|l}_j&=\vv^{N(l)|l}_j|\VV^{i|l}_1=\vv_1^{i-1|l},\varrho,l)=\\
&\sum_{\hhh_1^{i|l},\hhh_j^{N(l)|l}}\mathbb{P}(\VV^{N(l)|l}_j=\vv^{N(l)|l}_j,\VV^{i-1|l}_1=\vv_1^{i-1|l},\varrho,l|\HH_1^{i|l}=\hhh_1^{i|l},\HH_j^{N(l)|l}=\hhh_j^{N(l)|l})\\
&\mathbb{P}(\HH_1^{i|l}=\hhh_1^{i|l},\HH_j^{N(l)|l}=\hhh_j^{N(l)|l})\frac{1}{\mathbb{P}(\VV^{i|l}_1=\vv_1^{i-1|l},\varrho,l)}\\
&\hspace{-0cm}=\sum_{\hhh_1^{i|l},\hhh_j^{N(l)|l}}\mathbb{P}(\VV^{N(l)|l}_j=\vv^{N(l)|l}_j,l|\HH_j^{N(l)|l}=\hhh_j^{N(l)|l})\mathbb{P}(\VV^{i|l}_1=\vv_1^{i-1|l},\varrho,l|\HH_1^{i|l}=\hhh_1^{i|l},\HH)\\
&\mathbb{P}(\HH_1^{i|l}=\hhh_1^{i|l},\HH_j^{N(l)|l}=\hhh_j^{N(l)|l})\frac{1}{\mathbb{P}(\VV^{i|l}_1=\vv_1^{i-1|l},\varrho,l)}
\end{align*}
with this representation
\begin{align*}
\eta^{(l)}_{ij}(\vv_1^{i-1|l},\varrho,\varrho')&=\frac{1}{2}\sum_{\vv_j^{N(l)|l}}|\sum_{\hhh_1^{i|l},\hhh_j^{N(l)|l}}\mathbb{P}(\VV^{N(l)|l}_j=\vv^{N(l)|l}_j|\HH_j^{N(l)|l}=\hhh_j^{N(l),l|l})\mathbb{P}(\HH_1^{i|l}=\hhh_1^{i|l},\HH_j^{N(l)|l}=\hhh_j^{N(l)|l})\\
&\mathbb{P}(\VV^{i|l}_1=\vv_1^{i-1|l},\varrho',l|\HH_1^{i|l}=\hhh_1^{i|l})\big(\frac{\mathbb{P}(\varrho,l|\HH^{i|l}=\hhh^{i|l})}{\mathbb{P}(\VV^{i|l}_1=\vv_1^{i-1|l},\varrho,l)}-\frac{\mathbb{P}(\varrho',l|\HH^{i|l}=\hhh^{i|l})}{\mathbb{P}(\VV^{i|l}_1=\vv_1^{i-1|l},\varrho',l)}\big)|
\end{align*}
\begin{align*}
\eta^{(l)}_{ij}(\vv_1^{i-1|l},\varrho,\varrho')&\leq \frac{1}{2}\sum_{\hhh^{j|l}}|\sum_{\hhh_1^{i|l}}\mathbb{P}(\VV^{N(l)|l}_j=\vv^{N(l)|l}_j,l|\HH_j^{N(l)|l}=\hhh_j^{N(l)|l})\mathbb{P}(\HH_1^{i|l}=\hhh_1^{i|l},\HH_j^{N(l)|l}=\hhh_j^{N(l)|l})\\
&\mathbb{P}(\VV^{i|l}_1=\vv_1^{i-1|l},\varrho',l|\HH_1^{i|l}=\hhh_1^{i|l})\big(\frac{\mathbb{P}(\varrho,l|\HH^{i|l}=\hhh^{i|l})}{\mathbb{P}(\VV^{i|l}_1=\vv_1^{i-1|l},\varrho,l)}-\frac{\mathbb{P}(\varrho',l|\HH^{i|l}=\hhh^{i|l})}{\mathbb{P}(\VV^{i|l}_1=\vv_1^{i-1|l},\varrho',l)}\big)|
\end{align*}
\begin{align*}
\eta^{(l)}_{ij}(\vv_1^{i-1|l},\varrho,\varrho')&\leq \frac{1}{2}\sum_{x^{p(j)-1|l}}|\sum_{\hhh_1^{i|l}}\mathbb{P}(\HH_1^{i|l}=\hhh_1^{i|l})\mathbb{P}(x^{p(j)-1|l}|x^{p(i,l)+2|l})\mathbb{P}(\VV^{i-1|l}_1=\vv_1^{i-1|l},l|\HH_1^{i|l}=\hhh_1^{i|l})q(\hhh^{i|l})|
\end{align*}
where
\begin{align*}
q(v,l):= \frac{\mathbb{P}(\varrho,l|\HH^{i|l}=v)}{\mathbb{P}(\VV^{i|l}_1=\vv_1^{i-1|l},\varrho,l)}-\frac{\mathbb{P}(\varrho',l|\HH^{i|l}=v)}{\mathbb{P}(\VV^{i|l}_1=\vv_1^{i-1|l},\varrho',l)}
\end{align*}
then
\begin{align*}
\eta^{(l)}_{ij}(\vv_1^{i-1|l},\varrho,\varrho')&\leq \frac{1}{2}\sum_{x^{p(j)-1|l}}|\sum_{\hhh^{i|l}}\mathbb{P}(x^{p(j)-1|l}|x^{p(i,l)+2|l})h(\hhh^{i|l},l)|\\
&\leq \frac{1}{2}\sum_{x^{p(j)-1|l}}|\sum_{x^{p(i,l)+1|l}}\mathbb{P}(x^{p(j)-1|l}|x^{p(i,l)+2|l})\sum_{x^{p(i,l)+1|l},x^{p(i,l)|l},x^{p(i,l)-1|l}}h(\hhh^{i|l},l)|\\
&\leq \frac{1}{2}\sum_{x^{p(j)-1|l}}|\sum_{x^{p(i,l)+2|l}}\mathbb{P}(x^{p(j)-1|l}|x^{p(i,l)+2|l})\bar{h}(x^{p(i,l)+2|l},l)|
\end{align*}
where
\begin{align*}
h(v,l):=\sum_{\hhh_1^{i-1|l}}\mathbb{P}(\HH_1^{i|l}=\hhh_1^{i|l})\mathbb{P}(\VV^{i-1|l}_1=\vv_1^{i-1|l},l|\HH_1^{i|l}=\hhh_1^{i|l})q({v},l)
\end{align*}
\begin{align*}
\bar{h}(x^{p(i,l)+2|l},l):=\sum_{x^{p(i,l)+1|l},x^{p(i,l)|l},x^{p(i,l)-1|l}}h(\hhh^{i|l},l)
\end{align*}
as a consequence
\begin{align*}
\eta^{(l)}_{ij}(\vv_1^{i-1|l},\varrho,\varrho')\leq \left\|\frac{1}{2}\bar{h}^\top P^{i,j}\right\|_1
\end{align*}
where $P^{i,j}=\mathbb{P}(x^{p(j)-1|l}|x^{p(i,l)+2|l})$. Through Lemma A.2 in \cite{kontorovich2008concentration} and \cite{kontorovich2014uniform}, when $\sum_x{\bar{h}(x,l)}=0$ and $\frac{1}{2}\left\|\bar{h}\right\|_1\leq 1$, it is clear that $\eta^{(l)}_{ij}(\vv_1^{i-1|l},\varrho,\varrho')$, and also $\bar{\eta}^{(l)}_{ij}$ can be bounded by  $\frac{1}{2}\left\|\bar{h}^\top\right\|_1 G(\pi)\theta(\pi)^{p(j,l)-p(i,l)-4}$. To verify $\sum_x{\bar{h}}=0$ and $\frac{1}{2}\left\|\bar{h}\right\|_1\leq 1$
\begin{align*}
\sum_{x^{p(i,l)+2|l}}&\bar{h}(x^{p(i,l)+2|l},l)=\sum_{x^{p(i,l)+2|l}}\sum_{x^{p(i,l)+1|l},x^{p(i,l)|l},x^{p(i,l)-1|l}}h(\hhh^{i|l},l)=\sum_{\hhh^{i|l}}h(\hhh^{i|l},l)\\
&=\sum_{\hhh_1^{i|l}}\mathbb{P}(\HH_1^{i|l}=\hhh_1^{i|l})\mathbb{P}(\VV^{i-1|l}_1=\vv_1^{i-1|l},l|\HH_1^{i|l}=\hhh_1^{i|l})q({\hhh^{i|l}},l)\\
&=\sum_{\hhh_1^{i|l}}\mathbb{P}(\HH_1^{i|l}=\hhh_1^{i|l})\mathbb{P}(\VV^{i|l}_1=\vv_1^{i-1|l},l|\HH_1^{i|l}=\hhh_1^{i|l}) \Bigg(\frac{\mathbb{P}(\varrho,l|\HH^{i|l}={\hhh^{i|l}})}{\mathbb{P}(\VV^{i|l}_1=\vv_1^{i-1|l},\varrho,l)}-\frac{\mathbb{P}(\varrho',l|\HH^{i|l}={\hhh^{i|l}})}{\mathbb{P}(\VV^{i-1|l}_1=\vv_1^{i-1|l},\varrho',l)}\Bigg)
\end{align*}
For the first part of parenthesis
\begin{align*}
\sum_{\hhh_1^{i|l}}\mathbb{P}(\HH_1^{i|l}=\hhh_1^{i|l})\mathbb{P}(\VV^{i-1|l}_1=\vv_1^{i-1|l},l|\HH_1^{i|l}=\hhh_1^{i|l}) \Bigg(\frac{\mathbb{P}(\varrho,l|\HH^{i|l}={\hhh^{i|l}})}{\mathbb{P}(\VV^{i|l}_1=\vv_1^{i-1|l},\varrho,l)}\Bigg)=1
\end{align*}
and same for the second one. This shows the conditions for the Lemma A.2 in \cite{kontorovich2008concentration}, $\sum_x{\bar{h}(x,l)}=0$ and $\frac{1}{2}\left\|\bar{h}\right\|_1\leq 1$, are met. \\
The presented proof is for the case of non-overlapped with states of $\HH^{i|l}$, for the other cases, the overlapped situation, the proof is pretty much similar to the non-overlapped case.
%
 Now, it is the time to upper bound $\left\|\Delta_{N(l)}\right\|_\infty$.
\begin{align*}
H_{n,i}=1+\sum_{j=i}^{N(l)}\bar{\eta}^{(l)}_{ij}\leq 1+\max_i\sum_{j>i}\bar{\eta}^{(l)}_{ij}&\leq G(\pi)\sum_{i=0}^{n(l)}\theta(\pi)^{p(i,l)} \leq G(\pi)\sum_{i=0}^{n(l)}\theta(\pi)^{i}\\
&= G(\pi)\frac{1-\theta(\pi)^{n(l)}}{1-\theta(\pi)}\leq\frac{G(\pi)}{1-\theta(\pi)}
\end{align*}
\end{proof}
Define $\mathbb{E}_{stat}$ as the expectation with initial distribution equals to stationary distribution. Generally, in tensor decomposition, we are interested in
\begin{align*}
&\left\|\frac{1}{N(l)}\sum_t\Big[\vec{v}_{\nu,i}^{(l)}\otimes \vec{v}_{\nu',i}^{(l)}\Big]-\frac{1}{N(l)}\mathbb{E}_{stat}[\sum_i\vec{v}_{\nu,t}^{(l)}\otimes \vec{v}_{\nu',t}^{(l)}]\right\|_2\\
&\left\|\frac{1}{N(l)}\sum_t\Big[\vec{v}_{\nu,i}^{(l)}\otimes \vec{v}_{\nu',i}^{(l)}\otimes \vec{v}_{\nu'',i}^{(l)}\Big]-\frac{1}{N(l)}\mathbb{E}_{stat}[\sum_i\vec{v}_{\nu,i}^{(l)}\otimes \vec{v}_{\nu',i}^{(l)}\otimes \vec{v}_{\nu'',i}^{(l)}]\right\|_2
\end{align*}
instead of
\begin{align*}
&\left\|\frac{1}{N(l)}\sum_t\Big[\vec{v}_{\nu,i}^{(l)}\otimes \vec{v}_{\nu',i}^{(l)}\Big]-\frac{1}{N(l)}\mathbb{E}[\sum_i\vec{v}_{\nu,i}^{(l)}\otimes \vec{v}_{\nu',i}^{(l)}]\right\|_2\\
&\left\|\frac{1}{N(l)}\sum_t\Big[\vec{v}_{\nu,i}^{(l)}\otimes \vec{v}_{\nu',i}^{(l)}\otimes \vec{v}_{\nu'',i}^{(l)}\Big]-\frac{1}{N(l)}\mathbb{E}[\sum_i\vec{v}_{\nu,i}^{(l)}\otimes \vec{v}_{\nu',t}^{(l)}\otimes \vec{v}_{\nu'',i}^{(l)}]\right\|_2
\end{align*}
which are derived through Thm.~ \ref{thm:ConcentrationBound}. To come up with the upper confidence bound over the above mentioned interesting deviation, it is needed to derive the upper bound for deviation over expectation with arbitrary initial state distribution and expectation with stationary distribution over initial states, for simplicity, lets just derive the bound for second order moment.
\begin{align*}
\left\|\frac{1}{N(l)}\mathbb{E}_{n}[\sum_i\vec{v}_{\nu,i}^{(l)}\otimes \vec{v}_{\nu',i}^{(l)}]-\frac{1}{N(l)}\mathbb{E}_{stat}[\sum_i\vec{v}_{\nu,i}^{(l)}\otimes \vec{v}_{\nu',i}^{(l)}]\right\|_2
\end{align*}
As the bound $\epsilon_i$ over deviation from stationary distribution of Markov chain follows $\epsilon=G(\pi)\theta(\pi)^{-i}$. It results in
\begin{align*}
\left\|\frac{1}{N(l)}\mathbb{E}[\sum_i\vec{v}_{\nu,i}^{(l)}\otimes \vec{v}_{\nu',i}^{(l)}]-\frac{1}{N(l)}\mathbb{E}_{stat}[\sum_i\vec{v}_{\nu,i}^{(l)}\otimes \vec{v}_{\nu',i}^{(l)}]\right\|_2\leq 2\frac{G(\pi)}{N(l)(1-\theta(\pi))}
\end{align*}
which is negligible compared to $\widetilde{\mathcal{O}}(\frac{1}{\sqrt{n}})$ 
\begin{corollary}
These result hold for pure HMM model. For tensor case, where $[\nu,\nu',\nu'']$ is any permutation of set $\{1,2,3\}$.
\begin{align*}
\left\|\frac{1}{N(l)}\sum_t\Big[\vec{v}_{\nu,t}\otimes \vec{v}_{\nu',t}\Big]-\frac{1}{N}\mathbb{E}[\sum_t\vec{v}_{\nu,t}\otimes \vec{v}_{\nu',t}]\right\|_2\leq \frac{G}{1-\theta}\sqrt{8\frac{\log\frac{(d_{\nu}+d_{\nu'})}{\delta}}{N(l)}}
\end{align*}
with probability at least $1-\delta$ and\\

\begin{align*}
\left\|\frac{1}{N}\sum_t\Big[\vec{v}_{\nu,t}\otimes \vec{v}_{\nu',t}\otimes \vec{v}_{\nu'',t}\Big]-\frac{1}{N}\mathbb{E}[\sum_t\vec{v}_{\nu,t}\otimes \vec{v}_{\nu',t}\otimes \vec{v}_{\nu'',t}]\right\|_2\leq\frac{G}{1-\theta} \sqrt{8\frac{\log\frac{(d_\nu d_{\nu'}+d_{\nu''})}{\delta}}{N}}
\end{align*}
with probability at least $1-\delta$. The deviation bound is as follows
\begin{align*}
\left\|\frac{1}{N}\mathbb{E}[\sum_i\vec{v}_{\nu,i}\otimes \vec{v}_{\nu',i}]-\frac{1}{N}\mathbb{E}_{stat}[\sum_i\vec{v}_{\nu,i}\otimes \vec{v}_{\nu',i}]\right\|_2\leq 2\frac{G}{N(1-\theta)}
\end{align*}
\end{corollary}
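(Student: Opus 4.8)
The plan is to obtain the Corollary as the action-free specialization of Theorem~\ref{thm:ConcentrationBound}, exploiting the fact that a Hidden Markov Model is exactly a memoryless-policy POMDP in which no action intervenes between consecutive observations. Dropping the conditioning on the middle action $l$ makes the mixing coefficients $\bar{\eta}_{ij}$ those of the plain observation process, and the outer products $\vec{v}_{\nu,t}\otimes\vec{v}_{\nu',t}$ become Lipschitz matrix-valued functions of the observation sequence with the same structure as in the proof of Theorem~\ref{thm:ConcentrationBound}. First I would fix a pair $(\nu,\nu')$, set $\Phi(\cdot)=\frac{1}{N}\sum_t \vec{v}_{\nu,t}\otimes\vec{v}_{\nu',t}$ as a function of the $N$ observations, and form its Doob martingale difference sequence $MD_i(\Phi)=\mathbb{E}[\Phi\mid S_1^i]-\mathbb{E}[\Phi\mid S_1^{i-1}]$ with respect to the filtration generated by the observations, so that $\sum_i MD_i(\Phi)$ telescopes to $\Phi-\mathbb{E}[\Phi]$.

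For the concentration step I would apply the matrix Azuma inequality of Theorem~\ref{thm:MatrixAzuma} (Theorem~7.1 of~\cite{tropp2012user}) to this telescoping sum. The variance proxy obeys $\tilde{\sigma}^2\le\big\|\sum_i MD_i(\Phi)^2\big\|_2\le\sum_{i=1}^N\|MD_i(\Phi)\|_2^2$, and each martingale-difference norm is controlled through the Lipschitz property by $c\,H_{N,i}$, where $c=1/N$ (the entries of the one-hot outer products lie in $[0,1]$) and $H_{N,i}=1+\sum_{j>i}\bar{\eta}_{ij}$. Re-running the mixing-coefficient computation of the lemma inside the proof of Theorem~\ref{thm:ConcentrationBound}, but without the action label, yields $\bar{\eta}_{ij}\le G\,\theta^{\,j-i-c_0}$ for a fixed offset $c_0$, hence $H_{N,i}\le G/(1-\theta)$ and $\tilde{\sigma}^2\le N c^2 G^2/(1-\theta)^2 = G^2/(N(1-\theta)^2)$. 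Substituting into $\sqrt{8\tilde{\sigma}^2\log((d_\nu+d_{\nu'})/\delta)}$ gives the stated pair bound. The triple bound follows identically after matricizing the tensor martingale difference into a $d_\nu d_{\nu'}\times d_{\nu''}$ matrix, which changes only the log-dimension factor to $\log((d_\nu d_{\nu'}+d_{\nu''})/\delta)$.

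For the deviation bound I would compare the two expectations term by term. Since each factor is a one-hot vector, $\vec{v}_{\nu,i}\otimes\vec{v}_{\nu',i}$ has spectral norm at most $1$, so the gap between its expectation under an arbitrary initial distribution and under the stationary initial distribution is controlled by the total-variation distance of the hidden chain to its stationary law at step $i$. Using the geometric-ergodicity bound $\rho_{\text{mix}}(t)\le G\theta^{t-1}$ of~\cite{kontorovich2014uniform}, each summand contributes at most $2G\theta^{i-1}$, and summing the geometric series gives $\frac{1}{N}\sum_{i=1}^N 2G\theta^{i-1}\le \frac{2G}{N(1-\theta)}$, which is the claimed $\widetilde{O}(1/N)$ term that is negligible against the $\widetilde{O}(1/\sqrt{N})$ concentration rate.

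The main obstacle is the mixing-coefficient estimate $\bar{\eta}_{ij}\le G\theta^{\,j-i-c_0}$ that drives the bound on $H_{N,i}$: it requires verifying that the process of overlapping triple-views inherits the geometric ergodicity of the underlying hidden chain, via the marginalization identities $\sum_x \bar{h}(x)=0$ and $\tfrac{1}{2}\|\bar{h}\|_1\le 1$ that license Lemma~A.2 of~\cite{kontorovich2008concentration}. In the HMM case this is the cleaner, action-free version of the computation already carried out for Theorem~\ref{thm:ConcentrationBound}, but it remains the one place where geometric ergodicity of the derived view process must be established rather than assumed; everything else reduces to bookkeeping around the matrix Azuma inequality and a geometric sum.
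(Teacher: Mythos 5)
Your proposal is correct and follows essentially the same route as the paper: the corollary is obtained as the action-free specialization of the POMDP concentration argument, bounding the Doob martingale differences by $cH_{N,i}$ with $H_{N,i}\leq G/(1-\theta)$ via the mixing coefficients of \cite{kontorovich2008concentration,kontorovich2014uniform}, feeding the resulting variance proxy $G^2/(N(1-\theta)^2)$ into the matrix Azuma inequality of \cite{tropp2012user} (with matricization for the tensor case), and handling the non-stationary initialization by summing the geometric deviation $G\theta^{i-1}$ to get the $2G/(N(1-\theta))$ term. The paper's own proof of the corollary is just a terser citation of this same reduction, so your filled-in version matches it step for step.
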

\begin{proof}
Through \cite{kontorovich2008concentration} and \cite{kontorovich2014uniform} it is shown that for the HMM models, the value of $H_{n,i}$ is bounded by $\frac{G}{1-\theta}$ and then it means that the corresponding martingale difference is bounded by $\frac{cG}{1-\theta}$. In the consequence, the $\sigma^2_{HMM,\Phi}$ is bounded by $\frac{G^2}{n(1-\theta)^2}$.
\end{proof} 

\section{Whitening and Symmetrization Bound}\label{s:whitening}
\begin{theorem}[Whitening, Symmetrization and De-Whitening Bound]
Pick any $\delta$. Then for HMM model with $k$ hidden state and its multi-view representation with factor matrices $A_1,A_2,A_3$, and finite observation set with dimension $d_1,d_2,d_3$ corresponds to multi-view representation, when the number of samples with arbitrary initial state distribution satisfies
\begin{align*}
n\geq \left(\frac{G\frac{2\sqrt{2}+1}{1-\theta}}{{\omega_{\min}\min_i\lbrace\sigma^2_k(A_i)\rbrace}}\right)^2\log(2\frac{(d_1d_2+d_3)}{\delta})\max\left\{\frac{16k^{\frac{1}{3}}}{C^{\frac{2}{3}}\omega_{\min}^{\frac{1}{3}}} ,4,\frac{2\sqrt{2}k}{C^2\omega_{\min}\min_i\lbrace\sigma^2_k(A_i)\rbrace}\right\} 
\end{align*}
for some constant $C$ . After tensor symmetrizing and whitening, with low order polynomial computation complexity, the robust power method in \cite{anandkumar2012method} yield to whitened component of the views $\mu_1,\ldots,\mu_k$ , such that with probability at least $1-\delta$, we have
\begin{align*}
\|\mu_j-(\widehat{\mu}_j))\|_2\leq 18\epsilon_M
\end{align*}
for $j\in\lbrace 1,\ldots,k\rbrace$ up to permutation and 
\begin{align*}
\epsilon_M\leq \frac{2\sqrt{2} G\frac{2\sqrt{2}+1}{1-\theta}\sqrt{\frac{\log(\frac{2(d_1d_2+d_3)}{\delta})}{n}}}{(\omega^{\frac{1}{2}}_{\min}\min_i\lbrace\sigma_k(A_i))\rbrace^3}+\frac{\left(4 G\frac{2\sqrt{2}+1}{1-\theta}\sqrt{\frac{\log(2\frac{(d_1+d_2)}{\delta})}{n}}\right)^3}{{(\min_i\lbrace\sigma_k(A_i)\rbrace)^6}\omega^{3.5}_{\min}}
\end{align*}
Therefore
\begin{align*}
\left\|(A_i){(:,j)}-(\wh{A}_i)_{:,j}\right\|_2\leq \epsilon_3
\end{align*}
for $i\in\lbrace 1,2,3\rbrace$, $j\in\lbrace 1,\ldots,k\rbrace$ up to permutation and 
\begin{align*}
\epsilon_3: = G\frac{4\sqrt{2}+4}{(\omega_{\min})^{\frac{1}{2}}(1-\theta)}\sqrt{\frac{\log(2\frac{(d_1+d_2)}{\delta})}{n}}+\frac{8\epsilon_M}{\omega_{\min}}
\end{align*}

\label{thm:Whitening}
\end{theorem}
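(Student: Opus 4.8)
The plan is to follow the standard three-stage template for spectral estimation via tensor decomposition—concentration of the empirical moments, perturbation analysis of the whitening/symmetrization map, and the robust power method guarantee—while substituting the usual i.i.d.\ concentration step with the Markov-chain concentration of Theorem~\ref{thm:ConcentrationBound}, so that no stationarity assumption on the initial distribution is needed.

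First I would control the empirical moments. Writing $M_2 = \sum_i \omega_i\, \mu_i \otimes \mu_i$ and $M_3 = \sum_i \omega_i\, \mu_i \otimes \mu_i \otimes \mu_i$, where $\mu_i$ is the $i$-th whitened view column and the first two views enter through the symmetrization, I apply Theorem~\ref{thm:ConcentrationBound} to the pairwise and triple moment estimators to obtain $\|\wh{M}_2 - M_2\|_2$ and $\|\wh{M}_3 - M_3\|_2$ of order $\frac{G}{1-\theta}\sqrt{\frac{\log((d_1d_2+d_3)/\delta)}{n}}$, each with probability $1-\delta$. The symmetrization (building $\wt{\vec{v}}_1,\wt{\vec{v}}_2$ from pseudo-inverses of the cross-covariance matrices) is folded into this bound using analogous covariance-inversion estimates, at the cost of the inverse singular-value factors.

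Next comes the whitening perturbation, which I expect to be the crux. I compute a whitening matrix $\wh{W}\in\Re^{d_3\times k}$ from the top-$k$ SVD of $\wh{M}_2$, so that $\wh{W}^\top \wh{M}_2 \wh{W} = I_k$, and form the whitened tensor $\wh{T} = \wh{M}_3(\wh{W},\wh{W},\wh{W})$; the exact whitened tensor $T$ is orthogonally decomposable with components $\omega_i^{-1/2} W^\top \mu_i$. The sensitivity of $\wh{W}$ to the error in $\wh{M}_2$ scales inversely with the smallest non-zero singular value of $M_2$, which is lower-bounded by $\omega_{\min}\min_i \sigma^2_k(A_i)$—precisely the quantity appearing in the sample-complexity condition. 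Propagating the two moment errors through the three whitening slots yields the bound on $\epsilon_M = \|\wh{T} - T\|$, whose leading term is $\frac{2\sqrt{2}\,G\frac{2\sqrt{2}+1}{1-\theta}}{(\omega_{\min}^{1/2}\min_i \sigma_k(A_i))^3}\sqrt{\frac{\log(\cdot)}{n}}$, together with the cubic higher-order correction. The lower bound on $n$ in the hypothesis is exactly what forces this perturbation below the threshold required by the power method, with the $\max\{\cdots\}$ factor encoding the distinct regimes of that threshold.

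Finally I would invoke the robust tensor power method of~\cite{anandkumar2012method}: for $\epsilon_M$ below the stated threshold, it returns estimates $\wh{\mu}_j$ of the whitened eigenvectors satisfying $\|\mu_j - \wh{\mu}_j\|_2 \le 18\,\epsilon_M$ up to a permutation of the labels, and then I de-whiten by applying the (perturbed) pseudo-inverse of the whitening map to recover the columns of each $A_i$. The de-whitening step contributes the additive $\frac{8\epsilon_M}{\omega_{\min}}$ term and a direct concentration term $G\frac{4\sqrt{2}+4}{\omega_{\min}^{1/2}(1-\theta)}\sqrt{\frac{\log(\cdot)}{n}}$, giving the final bound $\epsilon_3$ on $\|(A_i)_{:,j} - (\wh{A}_i)_{:,j}\|_2$. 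The main obstacle throughout is the careful bookkeeping of the inverse dependence on $\omega_{\min}$ and $\sigma_k(A_i)$ as it compounds across the whitening and de-whitening inversions, so that the hypothesis on $n$ is met with the stated constants and all the high-probability events can be intersected with a single union bound.
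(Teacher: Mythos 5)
Your proposal matches the paper's own proof essentially step for step: Markov-chain concentration (Thm.~\ref{thm:ConcentrationBound}) for $\|\wh{M}_2-M_2\|_2$ and $\|\wh{M}_3-M_3\|_2$, propagation of these errors through the symmetrization/whitening maps with the inverse dependence on $\omega_{\min}\min_i\sigma_k^2(A_i)$ (via the intermediate whiteners that exactly whiten $M_2$), the robust power method of Anandkumar et al., and de-whitening yielding the $8\epsilon_M/\omega_{\min}$ term plus the direct concentration term. The only gloss is that the $18\epsilon_M$ bound is not quoted directly from the power method but is derived in the paper from its guarantees on $\omega_i^{-1/2}\mu_i-\wh{\lambda}_i\wh{\mu}_i$ and $|\omega_i^{-1/2}-\wh{\lambda}_i|$ via a short quadratic-inequality argument, a routine step that does not affect the correctness of your outline.
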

\begin{proof}
Appendix \ref{app:whitening-proof}.%
\end{proof}

\section{Whitening and  Symmetrization Bound Proof}\label{app:whitening-proof}

\textbf{Proof of Thm.~\ref{thm:Whitening}}\\
In Appendix \ref{app:ConBound}, the upper confidence bounds for deviation between empirical pairs matrices and tensor from their original ones are derived. As it is shown in \cite{song2013nonparametric} and \cite{anandkumar2014tensor} for multi-view models with factors $A_1\in\mathbb{R}^{d_1\times k}$, $A_2\in\mathbb{R}^{d_2\times k}$, $A_3\in\mathbb{R}^{d_3\times k}$ (three view model with $k$ hidden states), to derive the factor matrices, applying tensor decomposition method is one of the most efficient way. They show that for tensor decomposition method, it is needed to first; symmetrize the initial raw empirical tensor and then whiten it to get orthogonal symmetric tensor. It is well known that orthogonal symmetric tensors have unique eigenvalues and eigenvectors and can be obtained thorough power method \cite{anandkumar2014tensor}.\\
Without loss of generality, lets assume we are interested in $A_3$, the derivation can be done for other view by just permuting them. Assume tensor $M_3=\mathbb{E}[\vec{v}_{1}\otimes \vec{v}_{2}\otimes \vec{v}_{3}]$ is triple raw cross correlation between views, and matrix $R_2$ and $R_3$ are rotation matrices for rotating second and third view to first view. It means that it results in symmetric tensor $M_3(R_1,R_2,I)$. Through \cite{anandkumar2014tensor} these rotation matrices are as follow
\begin{align*}
R_1=\mathbb{E}[\vec{v}_{3}\otimes \vec{v}_{2}]\mathbb{E}[\vec{v}_{1}\otimes \vec{v}_{2}]^{-1}\\
R_2=\mathbb{E}[\vec{v}_{3}\otimes \vec{v}_{1}]\mathbb{E}[\vec{v}_{2}\otimes \vec{v}_{1}]^{-1}
\end{align*}
Define second order moment as $M_2=\mathbb{E}[\vec{v}_{1}\otimes\vec{v}_{2}]$ and its symmetrized version as $M_2(R_1,R_2)$. Lets $W\in\mathbb{R}^{d_1\times k}$ be a linear transformation such that 
\begin{align*}
M_2(R_1W,R_2W)=W^{\top}M_2(R_1,R_2)W=I
\end{align*}
where $I$ is $k\times k$ identity matrix. Then the matrix $W=U\Lambda^{-\frac{1}{2}}$ where $M_2(R_1,R_2)=U\Lambda V^{\top}$ is singular value decomposition of $M_2(R_1,R_2)$. It is well known result that tensor $M_3(W_1,W_2,W_3)=M_3(R_1W,R_2W,W)$ is symmetric orthogonal tensor and ready for power iteration to compute the unique $({A}_{3})_{i}~\forall i\in[1\ldots k]$ . 

To come up with upper confidence bound over $\left\|(\widehat{A}_3)_{i}-({A}_{3})_{i}\right\|_2$ (columns of factor matrices), it is needed to aggregate the different source of error. This deviation is due to empirical average error which derived in \ref{app:ConBound}, symmetrizing error, and whitening error. 

To obtain the upper bound over the aggregated error, lets apply the following proof technique. It is clear that for matrix $\widehat{M}_2$, we have $\widehat{W}^{\top}\widehat{R}_1^{\top}\widehat{M}_2\widehat{R}_2\widehat{W}=I$. lets assume, matrices $B,D,B$ as a singular value decomposition of $\widehat{W}^{\top}\widehat{R}_1^{\top}{M}_2\widehat{R}_2\widehat{W}=BDB^{\top}$. Then it is easy to show that for $\widetilde{W}_1=\widehat{W}_1BD^{-\frac{1}{2}}B^{\top}$, $\widetilde{W}_2=\widehat{W}_2BD^{-\frac{1}{2}}B^{\top}$, and $\widetilde{W}_3=\widehat{W}BD^{-\frac{1}{2}}B^{\top}$ then 
\begin{align*}
\widetilde{W}_2^{\top}M_2\widetilde{W}_1=I
\end{align*}
and then the $\epsilon_M$ 
\begin{align*}
&\epsilon_M=\left\|M_3(\widetilde{W}_1,\widetilde{W}_2,\widetilde{W}_3)-\widehat{M}_3(\widehat{W}_1,\widehat{W}_2,\widehat{W}_3)\right\|_2\\
&\leq \left\|(\widehat{M}_3-M_3)(\widehat{W}_1,\widehat{W}_2,\widehat{W}_3)\right\|_2+\left\|{M}_3(\widehat{W}_1-\widetilde{W}_1,\widehat{W}_2-\widetilde{W}_2,\widehat{W}_3-\widetilde{W}_3)\right\|_2
\end{align*}
It means 
\begin{align*}
\epsilon_M\leq \left\|M_3-\widehat{M}_3\right\|_2\left\|\widehat{W}_1\right\|_2\left\|\widehat{W}_2\right\|_2\left\|\widehat{W}_3\right\|_2+\left\|{M}_3(\widehat{W}_1-\widetilde{W}_1,\widehat{W}_2-\widetilde{W}_2,\widehat{W}_3-\widetilde{W}_3)\right\|_2
\end{align*}
Lets assume $U_{1,2}\Lambda_{1,2}V_{1,2}^{\top}=M_2$ is singular value decomposition of matrix $M_2$. From ${W}^{\top}{R}_1^{\top}{M}_2{R}_2{W}=I$ and the fact that $W_1U_{1,2}\Lambda^{\frac{1}{2}}=W_2V_{1,2}\Lambda^{\frac{1}{2}}$ which are the square root of matrix $M_2$ and to be able to learn all factor matrices we can show that $\left\|{W}_i\right\|_2\leq\frac{1}{\min_i\sigma_k(A_iDiag(\omega)^{\frac{1}{2}})}\leq\frac{1}{\omega^{\frac{1}{2}}_{\min}\min_i\sigma_k(A_i)}$ for $i\in\{1,2,3\}$. Now, it is clear to say, when $\left\|\widehat{M}_2-M_2\right\|_2\leq0.5\sigma_k(M_2)$ then $\left\|\widehat{W}_i\right\|_2\leq\frac{\sqrt{2}}{\omega^{\frac{1}{2}}_{\min}\min_i\sigma_k(A_i)}$ for $i\in\{1,2,3\}$ and 
\begin{align*}
 \left\|M_3-\widehat{M}_3\right\|_2\left\|\widehat{W}_1\right\|_2\left\|\widehat{W}_2\right\|_2\left\|\widehat{W}_3\right\|_2\leq \frac{2\sqrt{2}\left\|\widehat{M}_3-M_3\right\|_2}{(\omega^{\frac{1}{2}}_{\min}\min_i\sigma_k(A_i))^3}
\end{align*}
To bound the second term in $\epsilon_M$
\begin{align*}
\left\|{M}_3(\widehat{W}_1-\widetilde{W}_1,\widehat{W}_2-\widetilde{W}_2,\widehat{W}_3-\widetilde{W}_3)\right\|_2\leq\frac{1}{\sqrt{\omega_{\min}}}\prod_{i=1}^{3}\left\|Diag(\omega)^{\frac{1}{2}}A_i^{\top}(\widehat{W}_i-\widetilde{W}_i)\right\|_2
\end{align*}
then 
\begin{align*}
\left\|Diag(\omega)^{\frac{1}{2}}A_i(\widehat{W}_i-\widetilde{W}_i)\right\|_2=\left\|Diag(\omega)^{\frac{1}{2}}A_i^{\top}\widetilde{W}_i(BD^{\frac{1}{2}}B^{\top}-I)\right\|_2\leq\left\|Diag(\omega)^{\frac{1}{2}}A_i^{\top}\widetilde{W}_i\right\|_2\left\|(D^{\frac{1}{2}}-I)\right\|_2
\end{align*}
We have that $\left\|Diag(\omega)^{\frac{1}{2}}A_i^{\top}\widetilde{W}_i\right\|_2=1$. Now we control $\left\|(D^{\frac{1}{2}}-I)\right\|_2$. Let $\widetilde{E}:=M_2-F_k$ where $F=\widehat{M}_2$, and $F_k$ is its restriction to top-$k$ singular values. Then, we have $\left\|\widetilde{E}\right\|_2\leq\left\|\widehat{M}_2-M_2\right\|_2+\sigma_{k+1}(F)\leq 2\left\|\widehat{M}_2-M_2\right\|_2$. We now have 

\begin{align}\label{eq:WDbound}
&\left\|(D^{\frac{1}{2}}-I)\right\|_2\leq \left\|(D^{\frac{1}{2}}-I)(D^{\frac{1}{2}}+I)\right\|_2\leq\left\|(D-I)\right\|_2=\left\|(BDB^{\top}-I)\right\|_2=\left\|(\widehat{W}_1^{\top}M_2\widehat{W}_2-I)\right\|_2\\
&=\left\|(\widehat{W}_1^{\top}\widetilde{E}\widehat{W}_2)\right\|_2\leq \left\|\widehat{W}_1\right\|_2\left\|\widehat{W}_2\right\|_22\left\|\widehat{M}_2-M_2\right\|_2\leq \frac{4\left\|\widehat{M}_2-M_2\right\|_2}{(\omega^{\frac{1}{2}}_{\min}\min_i\sigma_k(A_i))^2}
\end{align}
As a conclusion it is shown that     
\begin{align}
\epsilon_M\leq \frac{2\sqrt{2}\left\|\widehat{M}_3-M_3\right\|_2}{(\omega^{\frac{1}{2}}_{\min}\min_i\sigma_k(A_i))^3}+\frac{\left(\frac{4\left\|\widehat{M}_2-M_2\right\|_2}{(\omega^{\frac{1}{2}}_{\min}\min_i\sigma_k(A_i))^2}\right)^3}{\sqrt{\omega_{\min}}}
\label{eq:finalbound}
\end{align}
when $\left\|\widehat{M}_2-M_2\right\|_2\leq0.5\sigma_k(M_2)$.\\
Through  Appendix \ref{app:ConBound}, the followings hold
\begin{align*}
\left\|M_2-\widehat{M}_2\right\|_2\leq G\frac{2\sqrt{2}+1}{1-\theta}\sqrt{\frac{\log(2\frac{(d_1+d_2)}{\delta})}{n}}
\end{align*}
\begin{align*}
\left\|M_3-\widehat{M}_3\right\|_2\leq G\frac{1+\frac{1}{\sqrt{8}n^{\frac{1}{2}}}}{1-\theta}\sqrt{8\frac{\log(\frac{2(d_1d_2+d_3)}{\delta})}{n}}
\end{align*}
with probability at least $1-\delta$. It is followed by 
\begin{align*}
\epsilon_M\leq \frac{2\sqrt{2} G\frac{2\sqrt{2}+1}{1-\theta}\sqrt{\frac{\log(\frac{2(d_1d_2+d_3)}{\delta})}{n}}}{(\omega^{\frac{1}{2}}_{\min}\min_i\sigma_k(A_i))^3}+\frac{\left(4 G\frac{2\sqrt{2}+1}{1-\theta}\sqrt{\frac{\log(2\frac{(d_1+d_2)}{\delta})}{n}}\right)^3}{{(\min_i\sigma_k(A_i))^6}\omega^{3.5}_{\min}}
\end{align*}
with probability at least $1-\delta$. To this result holds, it is required $\left\|\widehat{M}_2-M_2\right\|_2\leq0.5\sigma_k(M_2)$ and from \cite{anandkumar2012method} that $\epsilon_M\leq \frac{C_1}{\sqrt{k}}$. Then for the first requirement
\begin{align*}
n\geq\left(\frac{G\frac{2\sqrt{2}+1}{1-\theta}}{0.5(\omega^{\frac{1}{2}}_{\min}\min_i\sigma_k(A_i))^2}\right)^2\log(2\frac{(d_1+d_2)}{\delta})
\end{align*}
 and for the second requirement $\epsilon_M\leq \frac{C_1}{\sqrt{k}}$ to be hold it is enough that each term in Eq~\ref{eq:finalbound} is upper bounded by $\frac{C}{\sqrt{k}}$ for some constant $C$.
\begin{align*}
\frac{C}{\sqrt{k}}\geq \frac{2\sqrt{2} G\frac{2\sqrt{2}+1}{1-\theta}\sqrt{\frac{\log(\frac{2(d_1d_2+d_3)}{\delta})}{n}}}{(\omega^{\frac{1}{2}}_{\min}\min_i\sigma_k(A_i))^3}
\end{align*}
then
\begin{align*}
n\geq \left(\frac{2\sqrt{2} G\frac{2\sqrt{2}+1}{1-\theta}}{C(\omega^{\frac{1}{2}}_{\min}\min_i\sigma_k(A_i))^3}\right)^2k\log(\frac{2(d_1d_2+d_3)}{\delta})
\end{align*}
and for the second part
\begin{align*}
\frac{C}{\sqrt{k}}\geq\frac{\left(4 G\frac{2\sqrt{2}+1}{1-\theta}\sqrt{\frac{\log(2\frac{(d_1+d_2)}{\delta})}{n}}\right)^3}{{(\min_i\sigma_k(A_i))^6}\omega^{3.5}_{\min}}
\end{align*}
\begin{align*}
n\geq\left(\frac{4k^{\frac{1}{6}} G\frac{2\sqrt{2}+1}{1-\theta}}{C^{\frac{1}{3}}{(\min_i\sigma_k(A_i))^2}\omega^{\frac{3.5}{3}}_{\min}}\right)^2\log(2\frac{(d_1+d_2)}{\delta})
\end{align*}
It means it is enough that 
\begin{align*}
&n\geq \left(\frac{G\frac{2\sqrt{2}+1}{1-\theta}}{{\omega_{\min}\min_i\sigma^2_k(A_i)}}\right)^2\\
&~~~~~~~~~~~~~~~\max\left\{\log(2\frac{(d_1+d_2)}{\delta})\max\left\{\frac{16k^{\frac{1}{3}}}{C^{\frac{2}{3}}\omega_{\min}^{\frac{1}{3}}} ,4\right\},\log(2\frac{(d_1d_2+d_3)}{\delta})\frac{2\sqrt{2}k}{C^2\omega_{\min}\min_i\sigma^2_k(A_i)}\right\} 
\end{align*}
which can be reduced to 
\begin{align*}
n\geq \left(\frac{G\frac{2\sqrt{2}+1}{1-\theta}}{{\omega_{\min}\min_i\sigma^2_k(A_i)}}\right)^2\log(2\frac{(d_1d_2+d_3)}{\delta})\max\left\{\frac{16k^{\frac{1}{3}}}{C^{\frac{2}{3}}\omega_{\min}^{\frac{1}{3}}} ,4,\frac{2\sqrt{2}k}{C^2\omega_{\min}\min_i\sigma^2_k(A_i)}\right\} 
\end{align*}
In \cite{anandkumar2014tensor} it is shown that when $\epsilon_M=\left\| M_3(W_1,W_2,W_3)-\widehat{M}_3(\widehat{W}_1,\widehat{W}_2, \widehat{W}_3)\right\|$ then the robust power method in \cite{anandkumar2012method} decomposes the tensor and comes up with set $\widehat{\lambda}_i$ and orthogonal $\widehat{\mu}_i$ where 
\begin{align*}
\left\|M_3(W_1,W_2,W_3)-\sum_i^k\widehat{\lambda}_i\widehat{\mu}_i^{\otimes^3}\right\|_2\leq 55\epsilon_M
\end{align*}
\begin{align*}
\left\|\omega_i^{-1/2}\mu_i-\widehat{\lambda}_i\widehat{\mu}_i\right\|_2\leq 8\epsilon_M\omega_i^{-1/2} 
\end{align*}
and 
\begin{align}\label{eq:bound.omega}
\left|\omega_i^{-1/2}-\widehat{\lambda}_i\right|\leq 8\epsilon_M\omega_i^{-1/2} 
\end{align}
It can be verified that 
\begin{align*}
\|\mu_i-\wh{\mu}_i\|_2 \leq 18\epsilon_M.
\end{align*}

\begin{proof}
In order to simplify the notation, in the following we use $\mu = \mu_{i}$, $\omega = \omega_i$, and $\zeta = \omega_i^{-1/2}$, similar terms for the estimated quantities. From above mentioned bound, we have
\begin{align*}
\big\|\zeta \mu - \wh{\zeta} \wh{\mu}\big\|_2 = \big\|\zeta (\mu - \wh{\mu}) - (\wh{\zeta}-\zeta) \wh{\mu}\big\|_2 \leq 8\zeta\epsilon_3(l).
\end{align*}
We take the square of the left hand side and we obtain
\begin{align*}
\big\|\zeta (\mu - \wh{\mu}) - (\wh{\zeta}-\zeta) \wh{\mu}\big\|_2^2 &= \zeta^2\|\mu-\wh{\mu}\|_2^2 + (\wh{\zeta}-\zeta)^2 \|\wh{\mu}\|_2^2 - 2 \zeta(\zeta-\wh{\zeta})\sum_{s=1}^d [\wh{\mu}]_s ([\mu]_s-[\wh{\mu}]_s)\\
& \geq \zeta^2\|\mu-\wh{\mu}\|_2^2 - 2 \zeta|\zeta-\wh{\zeta}|  \Big|\sum_{s=1}^d [\wh{\mu}]_s ([\mu]_s-[\wh{\mu}]_s)\Big|\\
& \geq \zeta^2\|\mu-\wh{\mu}\|_2^2 - 2 \zeta|\zeta-\wh{\zeta}\|\wh{\mu}\|_2 \|\mu-\wh{\mu}\|_2,
\end{align*}
where in the last step we used the Cauchy-Schwarz inequality. Thus we obtain the second-order equation
\begin{align*}
\zeta\|\mu-\wh{\mu}\|_2^2 - 2 (\zeta-\wh{\zeta})\|\wh{\mu}\|_2 \|\mu-\wh{\mu}\|_2 \leq 64\zeta\epsilon_3(l)^2.
\end{align*}
Solving for $\|\mu-\wh{\mu}\|_2$ we obtain
\begin{align*}
\|\mu-\wh{\mu}\|_2 \leq \frac{|\zeta-\wh{\zeta}|  \|\wh{\mu}\|_2 + \sqrt{(\zeta-\wh{\zeta})^2\|\wh{\mu}\|_2^2 + 64\zeta^2\epsilon_M^2}}{\zeta}.
\end{align*}
Now we can use the bound in Eq.~\ref{eq:bound.omega} and the fact that $\|\wh{\mu}\|_2 \leq \|\wh{\mu}\|_1 \leq 1$ since $\wh{\mu}$ is a probability distribution and obtain
\begin{align*}
\|\mu-\wh{\mu}\|_2 \leq \frac{5\epsilon_M + \sqrt{25\epsilon_M^2 + 64\zeta^2\epsilon_M^2}}{\zeta} = \frac{\epsilon_M}{\zeta}\big(5 + \sqrt{25 + 64\zeta^2}\big) \leq \frac{\epsilon_M}{\zeta}\big(10 + 8\zeta\big).
\end{align*}
Plugging the original notation into the previous expression, we obtain the final statement. Finally, since $\zeta = \omega_\pi^{(l)}(i)^{-1/2}$ and $\omega_\pi^{(l)}$ is a probability, we have $1/\zeta \leq 1$ and thus
\begin{align*}
\|\mu-\wh{\mu}\|_2 \leq 18\epsilon_M.
\end{align*}
which all results are up to permutation.
\end{proof}
\begin{lemma}[De-Whitening]
The upper bound over the de-whitened $\mu_i$ is as follow
\begin{align}\label{eq:final-dewhitening}
\epsilon_3:=\left\|(A_3)_i-(\wh{A_3})_i\right\|_2\leq G\frac{4\sqrt{2}+2}{(\omega_{\min})^{\frac{1}{2}}(1-\theta)}\sqrt{\frac{\log(2\frac{(d_1+d_2)}{\delta})}{n}}+\frac{8\epsilon_M}{\omega_{\min}}
\end{align}

\end{lemma}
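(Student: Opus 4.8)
The plan is to start from the de-whitening identity that recovers a column of the third factor matrix from its whitened counterpart. In the symmetric multi-view construction the true column satisfies $(A_3)_{:,i} = \omega_i^{-1/2}(W^\dagger)^\top \mu_i$, where $W$ is the whitening matrix built from $M_2$ and $\mu_i$ is the matching whitened component, while the empirical estimate is $(\wh{A_3})_{:,i} = \wh\lambda_i (\wh W^\dagger)^\top \wh\mu_i$ with $\wh\lambda_i$ the eigenvalue returned by the robust power method. Writing $\lambda_i = \omega_i^{-1/2}$ and $B = (W^\dagger)^\top$, I would split the error through the algebraic identity
\begin{align*}
\lambda_i B\mu_i - \wh\lambda_i \wh B \wh\mu_i = B\big(\lambda_i\mu_i - \wh\lambda_i\wh\mu_i\big) + \big(B - \wh B\big)\wh\lambda_i\wh\mu_i,
\end{align*}
so that by the triangle inequality
\begin{align*}
\big\|(A_3)_{:,i} - (\wh{A_3})_{:,i}\big\|_2 \leq \|B\|_2\,\big\|\lambda_i\mu_i - \wh\lambda_i\wh\mu_i\big\|_2 + \|B - \wh B\|_2\,\big\|\wh\lambda_i\wh\mu_i\big\|_2.
\end{align*}

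For the first summand I would invoke the scaled-component bound $\|\lambda_i\mu_i - \wh\lambda_i\wh\mu_i\|_2 \leq 8\,\omega_i^{-1/2}\epsilon_M$ established just above from the power-method guarantees of Thm.~\ref{thm:Whitening}. Combined with the fact that the de-whitening norm $\|B\|_2 = \|W^\dagger\|_2$ is of order one under the normalization $W^\top M_2 W = I$, and with $\omega_i^{-1/2} \leq \omega_{\min}^{-1/2}$, this contributes the term $8\epsilon_M/\omega_{\min}$.

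For the second summand I would bound the perturbation of the de-whitening map: $\|B - \wh B\|_2 = \|W^\dagger - \wh W^\dagger\|_2$ is controlled by a numerical multiple of $\|M_2 - \wh M_2\|_2$ via the pseudo-inverse inequality Eq.~\ref{eq:bound.inverse} together with the whitening-perturbation analysis already used to bound $\epsilon_M$ (cf.\ Eq.~\ref{eq:WDbound}). Substituting the matrix concentration estimate $\|M_2 - \wh M_2\|_2 \leq G\frac{2\sqrt{2}+1}{1-\theta}\sqrt{\log(2(d_1+d_2)/\delta)/n}$ from Thm.~\ref{thm:ConcentrationBound}, and using $\|\wh\lambda_i\wh\mu_i\|_2 \leq \wh\lambda_i \leq \omega_{\min}^{-1/2}$ (since $\|\wh\mu_i\|_2\leq 1$ and Eq.~\ref{eq:bound.omega} controls $\wh\lambda_i$), produces the first displayed term, where the doubled constant $4\sqrt{2}+2 = 2(2\sqrt{2}+1)$ and the $\omega_{\min}^{-1/2}$ factor come precisely from one factor of the de-whitening norm and one factor of $\wh\lambda_i$.

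Summing the two summands and taking a union bound over the events underlying the concentration and power-method guarantees yields the claim, valid up to the state permutation inherent in spectral methods. The hard part will be the second summand: carefully controlling the perturbation of $(W^\dagger)^\top$, since whitening inverts $M_2$ and thereby injects the inverse dependence on $\omega_{\min}$ and on the smallest singular values of the factor matrices, while simultaneously ensuring that $\wh\mu_i$ is identified with the correct true component $\mu_i$ despite the permutation ambiguity of the decomposition.
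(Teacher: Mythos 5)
Your skeleton coincides with the paper's: write both $(A_3)_i$ and $(\wh{A_3})_i$ as a de-whitening map applied to a scaled whitened component, split by the triangle inequality into a component-error term and a map-perturbation term, bound the component error by $\|\lambda_i\mu_i-\wh{\lambda}_i\wh{\mu}_i\|_2\leq 8\epsilon_M\,\omega_i^{-1/2}$, use that the de-whitening maps have norm at most one, and feed the concentration bound on $\|M_2-\wh{M}_2\|_2$ into the map perturbation. The one step where you diverge in substance is the one you yourself flag as the hard part, and there your plan does not go through as written. You anchor the comparison at the population whitener $W$ (from the SVD of $M_2$) and propose to control $\|W^\dagger-\wh{W}^\dagger\|_2$ via the generic pseudo-inverse perturbation inequality of Eq.~\ref{eq:bound.inverse}. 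That inequality needs $\|W-\wh{W}\|_2$ small, which is precisely what you do not have: a whitener is defined only up to a right rotation, and the singular-vector factor $U$ in $W=U\Lambda^{-1/2}$ moves with the \emph{gaps} between singular values of $M_2$ (Davis--Kahan), not merely with $\|M_2-\wh{M}_2\|_2$; for (near-)degenerate spectra $\|W-\wh{W}\|_2$ can be order one even when $\|M_2-\wh{M}_2\|_2$ is tiny. Moreover, the power-method guarantee you invoke for the first summand compares $\wh{\mu}_i$ against the components of the \emph{true} tensor whitened in the \emph{empirical} rotation frame, not against $W^\top$-whitened components, so the two summands of your decomposition would refer to inconsistent reference frames.

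The paper dissolves both problems with a single construction: it sets $\wt{W}_3 := \wh{W}_3 B D^{-1/2} B^\top$, where $BDB^\top$ is the SVD of $\wh{W}_1^\top M_2 \wh{W}_2$, so that $\wt{W}$ exactly whitens the true $M_2$ while living in the same rotation frame as $\wh{W}$. Then $(A_3)_i=\wt{W}_3^\dagger\lambda_i\mu_i$ holds with the very $\mu_i$ appearing in the power-method bound, and the map perturbation admits the closed form $\wt{W}_3^\dagger-\wh{W}_3^\dagger=(BD^{1/2}B^\top-I)\wh{W}_3^\dagger$, which is bounded through $\|D^{1/2}-I\|_2\leq\|D-I\|_2$ and the argument of Eq.~\ref{eq:WDbound} directly in terms of $\|M_2-\wh{M}_2\|_2$, with no eigenvector or spectral-gap analysis at all. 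Since you already cite Eq.~\ref{eq:WDbound} in passing, the repair is compatible with your outline: replace the population $W$ by $\wt{W}_3$ throughout and discard Eq.~\ref{eq:bound.inverse} entirely. With that fix, your particular splitting (true map on the component-error term, map difference against the estimated component, the mirror image of the paper's assignment) is an equally valid variant and reproduces the stated bound, up to the bookkeeping of $\omega_{\min}$ powers, which is already loose in the paper's own derivation.
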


\begin{proof}
As it is shown in \cite{anandkumar2012method}, to reconstruct the columns of views $A_1,A_2,A_3$, de-whitening process is needed. It is shown that the columns can be recovered by $(A_1)_i=\widetilde{W}_1^\dagger \lambda_i\mu_i$, $(A_2)_i=\widetilde{W}_2^\dagger \lambda_i\mu_i$, and $(A_3)_i=\widetilde{W}_3^\dagger \lambda_i\mu_i$. For simplicity, let just investigate the third view, the process for other two views is same as third view. 
\begin{align*}
\left\|(A_3)_i-(\wh{A_3})_i\right\|_2\leq\left\|\widetilde{W}_3^\dagger-\wh{W}_3^{\dagger}\right\|_2\left\|\lambda_i\mu_i\right\|_2+\left\|\wh{W}_3^{\dagger}\right\|_2 \left\|\lambda_i\mu_i-\wh{\lambda}_i\wh{\mu}_i\right\|_2
\end{align*}
it is clear that $\left\|\lambda_i\mu_i-\wh{\lambda}_i\wh{\mu}_i\right\|_2\leq \frac{8\epsilon_M}{(\omega_{\min})^{\frac{1}{2}}}$, $\left\|\wh{W}_3^{\dagger}\right\|\leq 1$, and $\left\|\lambda_i\mu_i\right\|_2\leq \frac{1}{\omega_{\min}}$. 
\begin{align*}
\left\|\widetilde{W}_3^\dagger-\wh{W}_3^{\dagger}\right\|_2=\left\|(BD^{\frac{1}{2}}B^{\top}-I)\wh{W}_3^{\dagger}\right\|_2\leq 2\left\|M_2-\wh{M}_2\right\|_2
\end{align*}
where the last inequality is inspired by Eq~ \ref{eq:WDbound}. Then
\begin{align*}
\left\|(A_3)_i-(\wh{A_3})_i\right\|_2\leq \frac{2}{\omega_{\min}}\left\|M_2-\wh{M}_2\right\|_2+\frac{8\epsilon_M}{(\omega_{\min})^{\frac{1}{2}}}
\end{align*}
Therefore
\begin{align}
\epsilon_3:=\left\|(A_3)_i-(\wh{A_3})_i\right\|_2\leq G\frac{4\sqrt{2}+2}{(\omega_{\min})^{\frac{1}{2}}(1-\theta)}\sqrt{\frac{\log(2\frac{(d_1+d_2)}{\delta})}{n}}+\frac{8\epsilon_M}{\omega_{\min}}
\end{align}


\end{proof}


}

\end{document}